\setlist{nosep}
\newtheorem{theorem}{Theorem}
\newtheorem{lemma}{Lemma}
\newtheorem{definition}{Definition}
\newcommand*\diff{\mathop{}\!\mathrm{d}}
\title{Binary Classification with Confidence Difference}
\author{Wei Wang$^{1,2}$, Lei Feng$^{3,2}$, Yuchen Jiang$^{4}$, Gang Niu$^{2}$, Min-Ling Zhang$^{5}$, Masashi Sugiyama$^{2,1}$ \\
  $^1$ The University of Tokyo, Chiba, Japan\\
  $^2$ RIKEN Center for Advanced Intelligence Project, Tokyo, Japan\\
  $^3$ Nanyang Technological University, Singapore\\
  $^4$ Alibaba Group, Beijing, China\\
  $^5$ Southeast University, Nanjing, China\\
  \texttt{wangw@g.ecc.u-tokyo.ac.jp}~~~~~~\texttt{lfengqaq@gmail.com} \\
  \texttt{jiangyuchen.jyc@alibaba-inc.com}~~~~~~\texttt{gang.niu.ml@gmail.com} \\
  \texttt{zhangml@seu.edu.cn}~~~~~~\texttt{sugi@k.u-tokyo.ac.jp}
}
\begin{document}
\maketitle

\begin{abstract}
Recently, learning with \emph{soft labels} has been shown to achieve better performance than learning with \emph{hard labels} in terms of model generalization, calibration, and robustness. However, collecting pointwise labeling confidence for all training examples can be challenging and time-consuming in real-world scenarios. This paper delves into a novel weakly supervised binary classification problem called \emph{confidence-difference (ConfDiff) classification}. Instead of pointwise labeling confidence, we are given only unlabeled data pairs with confidence difference that specifies the difference in the probabilities of being positive. We propose a risk-consistent approach to tackle this problem and show that the estimation error bound achieves the optimal convergence rate. We also introduce a risk correction approach to mitigate overfitting problems, whose consistency and convergence rate are also proven. Extensive experiments on benchmark data sets and a real-world recommender system data set validate the effectiveness of our proposed approaches in exploiting the supervision information of the confidence difference.
\end{abstract}
\section{Introduction}
Recent years have witnessed the prevalence of deep learning and its successful applications. However, the success is built on the basis of the collection of large amounts of data with unique and accurate labels. However, in many real-world scenarios, it is often difficult to satisfy such requirements. To circumvent the difficulty, various weakly supervised learning problems have been investigated accordingly, including but not limited to semi-supervised learning~\citep{BCGPOR19,CSZ06,ZG09,LZ15}, label-noise learning~\citep{HYYNXHTS18,LXH21,PRKNQ17,WWPZ21,WZCLNL22}, positive-unlabeled learning~\citep{DNS14,SCX21,YLHGNST22}, partial-label learning~\citep{WXLFNCZ22,CST11,WZ20,WCHLWL21,WWZ22}, unlabeled-unlabeled learning~\citep{LNMS19,LZNS20}, and similarity-based classification~\citep{BNS18,BSXSS22,CFXANS21}.

Learning with soft labels has been shown to achieve better performance than learning with hard labels in the context of supervised learning~\citep{szegedy2016rethinking,yuan2023learning}, where each example is equipped with \emph{pointwise labeling confidence} indicating the degree to which the labels describe the example. The advantages have been validated in many aspects, including model generalization~\citep{yuan2020revisiting,ishida2023is}, calibration~\citep{muller2019does,wang2021rethinking}, and robustness~\citep{lukasik2020does,pang2020bag}. For example, with the help of soft labels, knowledge distillation~\citep{yuan2020revisiting,hinton2015distilling} transfers knowledge from a large teacher network to a small student network. The student network can be trained more efficiently and reliably with the soft labels generated by the teacher network~\citep{phuong2019towards,park2019relational,gou2021knowledge}.

However, collecting a large number of training examples with pointwise labeling confidence may be demanding under many circumstances since it is challenging to describe the labeling confidence for each training example exactly~\cite {collins2022eliciting,SKS20,sucholutsky2023on}. Different annotators may give different values of pointwise labeling confidence to the same example due to personal biases, and it has been demonstrated that skewed confidence values can harm classification performance~\citep{SKS20}. Besides, giving pointwise labeling information to large-scale data sets is also expensive, laborious, and even unrealistic in many real-world scenarios~\citep{karimi2020deep,ratner2016data,WZCLNL22}.
On the contrary, leveraging supervision information of pairwise comparisons may ameliorate the biases of skewed pointwise labeling confidence and save labeling costs. Following this idea, we investigate a more practical problem setting for binary classification in this paper, where we are given \emph{unlabeled data pairs with confidence difference} indicating the difference in the probabilities of being positive. Collecting confidence difference for training examples in pairs is much cheaper and more accessible than collecting pointwise labeling confidence for all the training examples. 

Take click-through rate prediction in recommender systems~\citep{ZYST19,jiang2022adaptive} for example. The combinations of users and their favorite/disliked items can be regarded as positive/negative data. Collecting training data takes work to distinguish between positive and negative data.
Furthermore, the pointwise labeling confidence of training data may be difficult to be determined due to the extraordinarily sparse and class-imbalance problems~\citep{YYCYCMHCTJE21}. Therefore, the collected confidence values may be biased. However, collecting the difference in the preference between a pair of candidate items for a given user is more accessible and may alleviate the biases. Section~\ref{exp_section} will give a real-world case study of this problem.
Take the disease risk estimation problem for another example. Given a person's attributes, the goal is to predict the risk of having some disease. When asking doctors to annotate the probabilities of having the disease for patients, it takes work to determine the exact values of the probabilities.
Furthermore, the probability values given by different doctors may differ due to their diverse backgrounds. On the other hand, it is much easier and less biased to estimate the relative difference in the probabilities of having the disease between two patients. Therefore, the problem of learning with confidence difference is of practical research value, but has yet to be investigated in the literature.

As a related work,~\citet{FSLHXNAS21} elaborated that a binary classifier can be learned from pairwise comparisons, termed Pcomp classification. For a pair of samples, they used a pairwise label of one is being more likely to be positive than the other. 
Since knowing the confidence difference implies knowing the pairwise label, our method requires stronger supervision than Pcomp classification.
\begin{wrapfigure}[14]{r}{0.35\textwidth}
    \centering
    \includegraphics[width=0.35\textwidth]{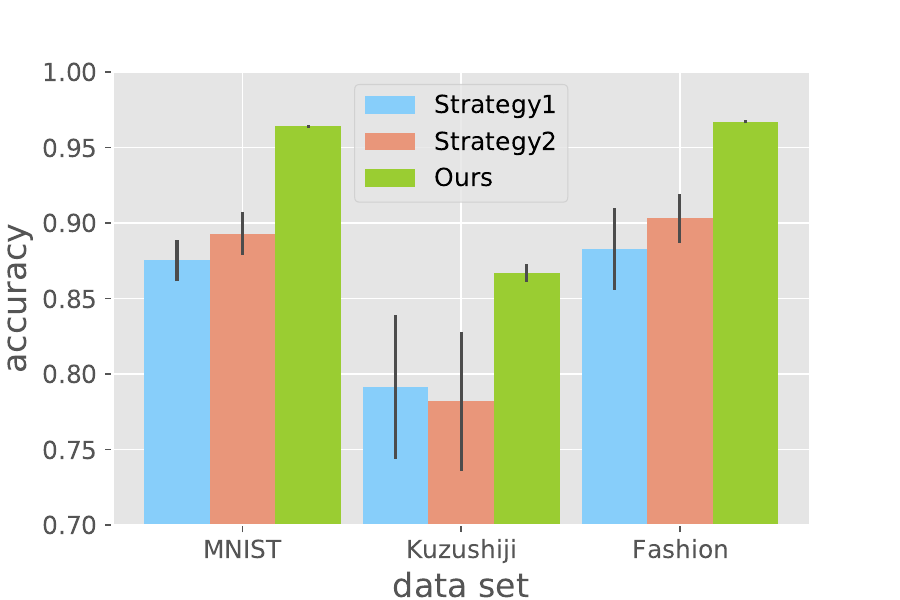} \\
    \caption{Experimental results for different strategies of handling examples from $(-1,+1)$.}\label{pilot}
\end{wrapfigure}
Nevertheless, we argue that, in many real-world scenarios, we may not only know one example is more likely to be positive than the other, but also know how much the difference in confidence is, as explained
in the above examples. Therefore, the setting of the current paper is not so restrictive compared with that of Pcomp classification.
Furthermore, our setting is more flexible than that of Pcomp classification from the viewpoint of data generation process. Pcomp classification limits the labels of pairs of training data to be in $\{(+1,+1),(+1,-1),(-1,-1)\}$. To cope with collected data with labels $(-1,+1)$, they either discard them or reverse them as $(+1,-1)$. On the contrary, our setting is more general and we take the examples from $(-1,+1)$ also into consideration explicitly in the data distribution assumption. Figure~\ref{pilot} shows the results of a pilot experiment. Here, Strategy1 denotes Pcomp-Teacher~\citep{FSLHXNAS21} discarding examples from $(-1,+1)$ while Strategy2 denotes Pcomp-Teacher reversing examples from $(-1,+1)$. We can observe that both strategies perform actually similarly. Since Strategy1 loses many data, it is intuitively expected that Strategy2 works much better, but the improvement is actually marginal. This may be because the training data distribution after reversing differs from the expected one. On the contrary, in this work we take examples from $(-1,+1)$ into consideration directly, which is a more appropriate way to handle these data. 

Learning with pairwise comparisons has been investigated pervasively in the community~\citep{BSRLDHH05,CQLTL07,JN11,KLMZ17,PNZSD15,SBW19,XZMSD17}, with applications in information retrieval~\citep{L11}, computer vision~\citep{FHXXGWY15}, regression~\citep{XHNS19,XBSD20}, crowdsourcing~\citep{CBCH13,ZS22}, and graph learning~\citep{HGWRYC22}. It is noteworthy that there exist distinct differences between our work and previous works on learning with pairwise comparisons. Previous works have mainly tried to learn a ranking function that can rank candidate examples according to relevance or preference. In this paper, we try to learn a \emph{pointwise binary classifier} by conducting \emph{empirical risk minimization} under the binary classification setting. 

Our contributions are summarized as follows:
\begin{itemize}[leftmargin=1em, itemsep=1pt, topsep=1pt, parsep=-1pt]
\item We investigate confidence-difference (ConfDiff) classification, a novel and practical weakly supervised learning problem, which can be solved via empirical risk minimization by constructing an \emph{unbiased risk estimator}. The proposed approach can be equipped with any model, loss function, and optimizer flexibly.
\item An estimation error bound is derived, showing that the proposed approach achieves the optimal parametric convergence rate. The robustness is further demonstrated by probing into the influence of an inaccurate class prior probability and noisy confidence difference. 
\item To mitigate overfitting issues, a risk correction approach~\citep{LZNS20} with consistency guarantee is further introduced. Extensive experimental results on benchmark data sets and a real-world recommender system data set validate the effectiveness of the proposed approaches. 
\end{itemize}

\section{Preliminaries}
In this section, we discuss the background of binary classification, binary classification with soft labels, and Pcomp classification. Then, we elucidate the data generation process of ConfDiff classification. 
\subsection{Binary Classification}
For binary classification, let $\mathcal{X} = \mathbb{R}^d$ denote the $d$-dimensional feature space and $\mathcal{Y}=\left\{+1, -1\right\}$ denote the label space. Let $p(\bm{x}, y)$ denote the unknown joint probability density over random variables $(\bm{x}, y)\in \mathcal{X}\times \mathcal{Y}$. The task of binary classification is to learn a binary classifier $g:\mathcal{X}\rightarrow \mathbb{R}$ which minimizes the following classification risk:
\begin{equation}\label{risk}
R(g) = \mathbb{E}_{p(\bm{x},y)}[\ell(g(\bm{x}),y)],
\end{equation}
where $\ell(\cdot,\cdot)$ is a non-negative binary-class loss function, such as the 0-1 loss and logistic loss. Let $\pi_{+}=p(y=+1)$ and $\pi_{-}=p(y=-1)$ denote the class prior probabilities for the positive and negative classes respectively. Furthermore, let $p_{+}(\bm{x})=p(\bm{x}|y=+1)$ and $p_{-}(\bm{x})=p(\bm{x}|y=-1)$ denote the class-conditional probability densities of positive and negative data respectively. Then the classification risk in Eq.~(\ref{risk}) can be equivalently expressed as
\begin{equation}
R(g) = \pi_{+}\mathbb{E}_{p_{+}(\bm{x})}[\ell(g(\bm{x}),+1)]+\pi_{-}\mathbb{E}_{p_{-}(\bm{x})}[\ell(g(\bm{x}),-1)].
\end{equation}

\subsection{Binary Classification with Soft Labels}
When the soft labels of training examples are accessible to the learning algorithm, taking advantage of them can often improve the generalization performance~\citep{szegedy2016rethinking}. First, the classification risk in Eq.~(\ref{risk}) can be equivalently expressed as 
\begin{equation}
R(g) = \mathbb{E}_{p(\bm{x})}[p(y=+1|\bm{x})\ell(g(\bm{x}),+1)+p(y=-1|\bm{x})\ell(g(\bm{x}),-1)].
\end{equation}
Then, given training data equipped with confidence $\{(\bm{x}_{i}, r_i)\}_{i=1}^{n}$ where $r_{i}=p(y_{i}=+1|\bm{x}_{i})$ is the \emph{pointwise positive confidence} associated with $\bm{x}_{i}$, we minimize the following unbiased risk estimator to perform empirical risk minimization:
\begin{equation}\label{pconf_ure}
\widehat{R}_{\rm soft}(g)=\frac{1}{n}\sum\nolimits_{i=1}^{n}(r_{i}\ell(g(\bm{x}_{i}), +1) + (1-r_{i})\ell(g(\bm{x}_{i}), -1)).
\end{equation}
However, accurate pointwise positive confidence may be hard to be obtained in reality~\citep{SKS20}.
\subsection{Pairwise-Comparison (Pcomp) Classification}
In principle, collecting supervision information of pairwise comparisons is much easier and cheaper than pointwise supervision information~\citep{FSLHXNAS21}. In Pcomp classification~\citep{FSLHXNAS21}, we are given pairs of unlabeled data where we know which one is more likely to be positive than the other. It is assumed that Pcomp data are sampled from labeled data pairs whose labels belong to $\{(+1, -1), (+1, +1), (-1, -1)\}$. Based on this assumption, the probability density of Pcomp data $(\bm{x}, \bm{x}')$ is given as $q(\bm{x}, \bm{x}')/(\pi_{+}^{2}+\pi_{-}^{2}+\pi_{+}\pi_{-})$ where $q(\bm{x}, \bm{x}')=\pi_{+}^{2}p_{+}(\bm{x})p_{+}(\bm{x}')+\pi_{-}^{2}p_{-}(\bm{x})p_{-}(\bm{x}')+\pi_{+}\pi_{-}p_{+}(\bm{x})p_{-}(\bm{x}')$. Then, an unbiased risk estimator for Pcomp classification is derived as follows:
\begin{equation}
\widehat{R}_{\rm Pcomp}(g)=\frac{1}{n}\sum\nolimits_{i=1}^{n}(\ell(g(\bm{x}_{i}),+1)+\ell(g(\bm{x}_{i}'),-1)-\pi_{+}\ell(g(\bm{x}_{i}),-1)-\pi_{-}\ell(g(\bm{x}_{i}'),+1)).
\end{equation}
In real-world scenarios, we may not only know one example is more likely to be positive than the other, but also know how much the difference in confidence is. Next, a novel weakly supervised learning setting named ConfDiff classification is introduced which can utilize such confidence difference.
\subsection{Confidence-Difference (ConfDiff) Classification}
In this subsection, the formal definition of confidence difference is given firstly. Then, we elaborate the data generation process of ConfDiff data.
\begin{definition}[Confidence Difference]\label{df1}
The confidence difference $c(\bm{x}, \bm{x}')$ between an unlabeled data pair $(\bm{x}, \bm{x}')$ is defined as 
\begin{equation}
c(\bm{x}, \bm{x}') 
= p(y'=+1|\bm{x}') - p(y=+1|\bm{x}).
\end{equation}
\end{definition}
As shown in the definition above, the confidence difference denotes the difference in the class posterior probabilities between the unlabeled data pair, which can measure how confident the pairwise comparison is. In ConfDiff classification, we are only given $n$ unlabeled data pairs with confidence difference $\mathcal{D}=\{((\bm{x}_{i}, \bm{x}'_{i}), c_{i})\}_{i=1}^{n}$. Here, $c_{i}=c(\bm{x}_{i}, \bm{x}'_{i})$ is the confidence difference for the unlabeled data pair $(\bm{x}_{i}, \bm{x}'_{i})$. Furthermore, the unlabeled data pair $(\bm{x}_{i}, \bm{x}'_{i})$ is assumed to be drawn from a probability density $p(\bm{x}, \bm{x}')=p(\bm{x})p(\bm{x}')$. This indicates that $\bm{x}_{i}$ and $\bm{x}'_{i}$ are two i.i.d.~instances sampled from $p(\bm{x})$. It is worth noting that the confidence difference $c_{i}$ will be positive if the second instance $\bm{x}'_{i}$ has a higher probability to be positive than the first instance $\bm{x}_{i}$, and will be negative otherwise. During the data collection process, the labeler can first sample two unlabeled data independently from the marginal distribution $p(\bm{x})$, then provide the confidence difference for them.
\section{The Proposed Approach}\label{section_algo}
In this section, we introduce our proposed approaches with theoretical guarantees. Besides, we show the influence of an inaccurate class prior probability and noisy confidence difference theoretically. Furthermore, we introduce a risk correction approach to improve the generalization performance. 
\subsection{Unbiased Risk Estimator}
In this subsection, we show that the classification risk in Eq.~(\ref{risk}) can be expressed with ConfDiff data in an equivalent way.
\begin{theorem}\label{ccrisk}
The classification risk $R(g)$ in Eq.~(\ref{risk}) can be equivalently expressed as 
\begin{equation}\label{ccrisk_eqn}
R_{\rm CD}(g) = \mathbb{E}_{p(\bm{x}, \bm{x}')}[\frac{1}{2}(\mathcal{L}(\bm{x}, \bm{x}')+\mathcal{L}(\bm{x}', \bm{x}))],
\end{equation}
where 
\begin{equation}
\mathcal{L}(\bm{x}, \bm{x}') = (\pi_{+}-c(\bm{x}, \bm{x}'))\ell(g(\bm{x}),+1)+(\pi_{-}-c(\bm{x}, \bm{x}'))\ell(g(\bm{x}'),-1). \nonumber
\end{equation}
\end{theorem}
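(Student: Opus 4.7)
The plan is to directly compute $\mathbb{E}_{p(\bm{x},\bm{x}')}[\mathcal{L}(\bm{x},\bm{x}')]$ by substituting $c(\bm{x},\bm{x}') = p(y'=+1\mid\bm{x}') - p(y=+1\mid\bm{x})$ into the definition of $\mathcal{L}$, then exploiting the independence $p(\bm{x},\bm{x}') = p(\bm{x})p(\bm{x}')$ to marginalize out whichever of the two variables does not appear inside the loss factor. The symmetrized combination $\tfrac{1}{2}(\mathcal{L}(\bm{x},\bm{x}')+\mathcal{L}(\bm{x}',\bm{x}))$ is really only there to make the integrand symmetric in the two instances; each summand already equals $R(g)$ in expectation, and the two contributions collapse by the symmetry of the product measure.

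Concretely, after substitution the first term of $\mathcal{L}(\bm{x},\bm{x}')$ becomes $(\pi_+ - p(y'=+1\mid\bm{x}') + p(y=+1\mid\bm{x}))\,\ell(g(\bm{x}),+1)$. I would take expectation by iterating first over $\bm{x}'$ while holding $\bm{x}$ fixed: since $\ell(g(\bm{x}),+1)$ does not depend on $\bm{x}'$, and $\mathbb{E}_{p(\bm{x}')}[p(y'=+1\mid\bm{x}')] = \int p(y'=+1\mid\bm{x}')p(\bm{x}')\diff\bm{x}' = \pi_+$ by marginalization, the $\pi_+$ and $-\mathbb{E}[p(y'=+1\mid\bm{x}')]$ cancel and what remains is $\mathbb{E}_{p(\bm{x})}[p(y=+1\mid\bm{x})\,\ell(g(\bm{x}),+1)]$. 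An analogous argument applied to the second term of $\mathcal{L}(\bm{x},\bm{x}')$, this time integrating first over $\bm{x}$, uses $\mathbb{E}_{p(\bm{x})}[p(y=+1\mid\bm{x})] = \pi_+$; combining with $\pi_- = 1-\pi_+$ turns the bracket into $1 - p(y'=+1\mid\bm{x}') = p(y'=-1\mid\bm{x}')$, and one obtains $\mathbb{E}_{p(\bm{x}')}[p(y'=-1\mid\bm{x}')\,\ell(g(\bm{x}'),-1)]$. Adding the two expectations reproduces the standard expression $R(g) = \mathbb{E}_{p(\bm{x})}[p(y=+1\mid\bm{x})\ell(g(\bm{x}),+1)+p(y=-1\mid\bm{x})\ell(g(\bm{x}),-1)]$ already recorded in the preliminaries.

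Finally, the very same derivation with the roles of $\bm{x}$ and $\bm{x}'$ interchanged yields $\mathbb{E}_{p(\bm{x},\bm{x}')}[\mathcal{L}(\bm{x}',\bm{x})] = R(g)$ as well, so the half-sum is $R(g)$, which is exactly $R_{\rm CD}(g)$. There is no real technical obstacle; the only thing that needs care is the bookkeeping to verify that in each of the four scalar pieces obtained by expanding $\mathcal{L}$ the "marginalize out the other variable" step applies cleanly, i.e.\ that the variable not appearing in $\ell$ can be integrated first. The independence assumption $p(\bm{x},\bm{x}') = p(\bm{x})p(\bm{x}')$ is essential here: without it, $\mathbb{E}[p(y'=+1\mid\bm{x}')\mid\bm{x}]$ would not collapse to $\pi_+$ and the confidence-difference contributions would not cancel.
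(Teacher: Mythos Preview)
Your proof is correct and follows the same underlying idea as the paper: show that each of the four scalar pieces in the expansion of $\tfrac{1}{2}(\mathcal{L}(\bm{x},\bm{x}')+\mathcal{L}(\bm{x}',\bm{x}))$ reduces, after integrating out the variable not appearing in the loss, to one of the standard risk components, and then sum. The paper organizes this via an auxiliary Lemma that rewrites $c(\bm{x},\bm{x}')$ in terms of the class-conditional densities $p_{\pm}$ and then computes, e.g., $\mathbb{E}_{p(\bm{x},\bm{x}')}[(\pi_{+}-c)\ell(g(\bm{x}),+1)]=\pi_{+}\mathbb{E}_{p_{+}(\bm{x})}[\ell(g(\bm{x}),+1)]$; you instead stay with the posteriors and use $\mathbb{E}_{p(\bm{x}')}[p(y'=+1\mid\bm{x}')]=\pi_{+}$ directly to obtain $\mathbb{E}_{p(\bm{x})}[p(y=+1\mid\bm{x})\ell(g(\bm{x}),+1)]$. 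The two targets are equal by Bayes' rule, so the difference is purely stylistic. Your route is a little more streamlined since it avoids the detour through $p_{\pm}$, while the paper's version has the advantage that its four intermediate identities are reused later (for the minimum-variance analysis and the nonnegativity needed in the risk-correction theory). Your remark that already $\mathbb{E}[\mathcal{L}(\bm{x},\bm{x}')]=R(g)$ (so the symmetrization is not needed for unbiasedness) is also correct and is exactly what the paper exploits in its Lemma on the $\alpha$-weighted estimators.
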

Accordingly, we can derive an unbiased risk estimator for ConfDiff classification: 
\begin{align}\label{ure}
\widehat{R}_{\rm CD}(g) = \frac{1}{2n}\sum\nolimits_{i=1}^{n}( \mathcal{L}(\bm{x}_{i}, \bm{x}'_{i})+\mathcal{L}(\bm{x}'_{i}, \bm{x}_{i})).
\end{align}

\paragraph{Minimum-variance risk estimator.}Actually, Eq.~(\ref{ure}) is one of the candidates of the unbiased risk estimator. We introduce the following lemma:
\begin{lemma}\label{lemma_mvre}
The following expression is also an unbiased risk estimator:
\begin{equation}\label{minimum_var}
\frac{1}{n}\sum\nolimits_{i=1}^{n}(\alpha \mathcal{L}(\bm{x}_{i}, \bm{x}'_{i})+(1-\alpha)\mathcal{L}(\bm{x}'_{i}, \bm{x}_{i})),
\end{equation}
where $\alpha\in[0,1]$ is an arbitrary weight.
\end{lemma}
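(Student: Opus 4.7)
The plan is to reduce the lemma to Theorem~\ref{ccrisk} via a symmetry argument on the joint density $p(\bm{x},\bm{x}') = p(\bm{x})p(\bm{x}')$. The only structural fact needed is that $\bm{x}_i$ and $\bm{x}'_i$ are i.i.d.~draws from the same marginal $p(\bm{x})$, so the joint density is invariant under the swap $\bm{x} \leftrightarrow \bm{x}'$.

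First, I would define
\[
A := \mathbb{E}_{p(\bm{x},\bm{x}')}[\mathcal{L}(\bm{x},\bm{x}')], \qquad B := \mathbb{E}_{p(\bm{x},\bm{x}')}[\mathcal{L}(\bm{x}',\bm{x})],
\]
so that Theorem~\ref{ccrisk} reads $R(g) = \tfrac{1}{2}(A+B)$. Next, in the integral defining $B$, I would relabel dummy variables $\bm{x} \leftrightarrow \bm{x}'$; because $p(\bm{x})p(\bm{x}')$ factorizes into equal marginals, this relabeling gives $B = A$. The antisymmetry of the confidence difference, $c(\bm{x}',\bm{x}) = -c(\bm{x},\bm{x}')$, does not require special care: it is automatically accommodated when the dummy-variable swap is applied inside the explicit expression for $\mathcal{L}(\bm{x}',\bm{x})$.

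Consequently $R(g) = A = B$, so for any $\alpha \in [0,1]$,
\[
\mathbb{E}_{p(\bm{x},\bm{x}')}\bigl[\alpha\,\mathcal{L}(\bm{x},\bm{x}') + (1-\alpha)\,\mathcal{L}(\bm{x}',\bm{x})\bigr] = \alpha A + (1-\alpha) B = R(g).
\]
Taking the sample average over the $n$ i.i.d.~pairs immediately yields the claimed unbiased risk estimator in Eq.~(\ref{minimum_var}).

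I do not anticipate any real obstacle: the lemma is essentially the observation that two expectations which coincide can be combined in any affine way and still equal the common value. The only minor care needed is verifying the relabeling step so that the antisymmetry of $c$ is not mishandled, but this is routine once $\mathcal{L}(\bm{x}',\bm{x})$ is written out explicitly and integrated term by term.
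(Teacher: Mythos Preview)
Your proposal is correct. Both you and the paper arrive at the same intermediate fact, namely that
\[
\mathbb{E}_{p(\bm{x},\bm{x}')}[\mathcal{L}(\bm{x},\bm{x}')] \;=\; \mathbb{E}_{p(\bm{x},\bm{x}')}[\mathcal{L}(\bm{x}',\bm{x})] \;=\; R(g),
\]
and then take the convex combination. The difference is only in how this equality is established. The paper invokes Lemma~\ref{lemma_apd1} directly, computing each of the two expectations term by term to show that each equals $\pi_{+}\mathbb{E}_{p_{+}}[\ell(g(\bm{x}),+1)]+\pi_{-}\mathbb{E}_{p_{-}}[\ell(g(\bm{x}),-1)]$. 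You instead take Theorem~\ref{ccrisk} (which already gives $R(g)=\tfrac{1}{2}(A+B)$) as a black box and add the single observation that the change of variables $\bm{x}\leftrightarrow\bm{x}'$ under the symmetric density $p(\bm{x})p(\bm{x}')$ forces $A=B$. Your route is slightly more economical---it avoids re-opening the four identities of Lemma~\ref{lemma_apd1}---while the paper's route is more explicit and makes the link to the class-conditional expectations visible. Either argument is complete; your handling of the antisymmetry $c(\bm{x}',\bm{x})=-c(\bm{x},\bm{x}')$ via dummy-variable relabeling is sound and needs no further justification.
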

Then, we introduce the following theorem:
\begin{theorem}\label{min_var_thm}
The unbiased risk estimator in Eq.~(\ref{ure}) has the minimum variance among all the candidate unbiased risk estimators in the form of Eq.~(\ref{minimum_var}) w.r.t.~$\alpha\in[0,1]$.
\end{theorem}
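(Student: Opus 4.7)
The plan is to reduce the statement to a one-variable quadratic minimization. Since the pairs $\{(\bm{x}_i,\bm{x}'_i)\}_{i=1}^n$ are i.i.d., the variance of the estimator in Eq.~(\ref{minimum_var}) equals $\tfrac{1}{n}\mathrm{Var}\bigl(\alpha\mathcal{L}(\bm{x},\bm{x}')+(1-\alpha)\mathcal{L}(\bm{x}',\bm{x})\bigr)$, so the factor $\tfrac{1}{n}$ and the index $i$ can be dropped and the argument carried out at the level of a single pair drawn from $p(\bm{x})p(\bm{x}')$.

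Next, I would introduce the shorthand $A:=\mathcal{L}(\bm{x},\bm{x}')$ and $B:=\mathcal{L}(\bm{x}',\bm{x})$ and observe the key symmetry: because $p(\bm{x},\bm{x}')=p(\bm{x})p(\bm{x}')$ is invariant under swapping $\bm{x}$ and $\bm{x}'$, the random variables $A$ and $B$ have the same marginal distribution. In particular $\mathrm{Var}(A)=\mathrm{Var}(B)=:\sigma^{2}$. Writing $\rho:=\mathrm{Cov}(A,B)$, the variance to be minimized is
\begin{equation*}
V(\alpha)=\alpha^{2}\sigma^{2}+(1-\alpha)^{2}\sigma^{2}+2\alpha(1-\alpha)\rho.
\end{equation*}

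Then I would treat $V$ as a quadratic in $\alpha$ and minimize it on $[0,1]$. A direct calculation gives $V'(\alpha)=2(2\alpha-1)(\sigma^{2}-\rho)$ and $V''(\alpha)=2(\sigma^{2}-\rho)$. By Cauchy--Schwarz, $\rho\le\sigma^{2}$, with equality only in the degenerate case $A=B$ almost surely; hence $V$ is convex and the unique stationary point $\alpha^{\star}=1/2$ is the minimizer (in the degenerate case $V$ is constant on $[0,1]$ and the claim holds trivially). This $\alpha^{\star}=1/2$ is exactly the weight used in Eq.~(\ref{ure}), establishing the theorem.

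The main obstacle, such as it is, is not computational but conceptual: one has to notice and justify the distributional symmetry $A\stackrel{d}{=}B$ that forces the marginal variances to coincide, since without this the minimizer would generally be $\alpha^{\star}=(\sigma_B^{2}-\rho)/(\sigma_A^{2}+\sigma_B^{2}-2\rho)\ne 1/2$. Once this symmetry is in hand, the remainder is a routine one-variable quadratic argument, and the Cauchy--Schwarz bound $\rho\le\sigma^{2}$ cleanly handles the edge case.
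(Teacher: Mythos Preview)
Your proposal is correct and follows essentially the same route as the paper: exploit the exchangeability of $(\bm{x},\bm{x}')$ under $p(\bm{x})p(\bm{x}')$ to equate the two second-moment terms, reduce the variance to a quadratic in $\alpha$ with nonnegative leading coefficient, and read off the minimizer $\alpha=1/2$. The paper differs only cosmetically---it works with the full sum rather than reducing to a single pair, completes the square instead of differentiating, and shows $\mu_1-\mu_2\ge 0$ by writing it as the expectation of a square rather than invoking Cauchy--Schwarz.
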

Theorem~\ref{min_var_thm} indicates the variance minimality of the proposed unbiased risk estimator in Eq.~(\ref{ure}), and we adopt this risk estimator in the following sections.
\subsection{Estimation Error Bound}
In this subsection, we elaborate the convergence property of the proposed risk estimator $\widehat{R}_{\rm CD}(g)$ by giving an estimation error bound. Let $\mathcal{G}=\{g:\mathcal{X}\mapsto \mathbb{R}\}$ denote the model class. It is assumed that there exists some constant $C_{g}$ such that $\sup_{g\in\mathcal{G}}\|g\|_{\infty} \leq C_{g}$ and some constant $C_{\ell}$ such that $\sup_{|z|\leq C_{g}}\ell(z, y) \leq C_{\ell}$. We also assume that the binary loss function $\ell(z, y)$ is Lipschitz continuous for $z$ and $y$ with a Lipschitz constant $L_{\ell}$. Let $g^{*}=\mathop{\arg\min}_{g\in\mathcal{G}}R(g)$ denote the minimizer of the classification risk in Eq.~(\ref{risk}) and $\widehat{g}_{\rm CD}=\mathop{\arg\min}_{g\in\mathcal{G}}\widehat{R}_{\rm CD}(g)$ denote the minimizer of the unbiased risk estimator in Eq.~(\ref{ure}). The following theorem can be derived:
\begin{theorem}\label{eeb}
For any $\delta > 0$, the following inequality holds with probability at least $1 - \delta$:
\begin{equation}
R(\widehat{g}_{\rm CD}) - R(g^{*}) \leq 8L_{\ell}\mathfrak{R}_{n}(\mathcal{G})+4C_{\ell}\sqrt{\frac{\ln 2/\delta}{2n}},
\end{equation}
where $\mathfrak{R}_{n}(\mathcal{G})$ denotes the Rademacher complexity of $\mathcal{G}$ for unlabeled data with size $n$.
\end{theorem}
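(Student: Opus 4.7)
The plan is to follow a standard empirical-risk-minimization estimation-error argument. First, since Theorem~\ref{ccrisk} gives $R_{\rm CD}(g)=R(g)$ for every $g\in\mathcal{G}$ and $\widehat{g}_{\rm CD}$ minimizes $\widehat{R}_{\rm CD}$, inserting $\pm\widehat{R}_{\rm CD}(\widehat{g}_{\rm CD})$ and $\pm\widehat{R}_{\rm CD}(g^*)$ and using the optimality of $\widehat{g}_{\rm CD}$ gives
\begin{equation*}
R(\widehat{g}_{\rm CD})-R(g^*)\;\leq\;2\sup_{g\in\mathcal{G}}\bigl|\widehat{R}_{\rm CD}(g)-R_{\rm CD}(g)\bigr|,
\end{equation*}
so it suffices to bound the uniform deviation on the right-hand side with high probability.

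The next step is to control this deviation via the McDiarmid plus symmetrization recipe. Since $|c_i|\leq 1$, $\pi_{\pm}\in[0,1]$, and $0\leq\ell\leq C_{\ell}$, each of the four coefficients $(\pi_{\pm}\pm c_i)/2$ that appear in the per-pair summand $\tfrac{1}{2}(\mathcal{L}(\bm{x}_i,\bm{x}'_i)+\mathcal{L}(\bm{x}'_i,\bm{x}_i))$ lies in $[-1,1]$, so the summand is uniformly bounded by a constant multiple of $C_{\ell}$. Replacing one pair $(\bm{x}_i,\bm{x}'_i,c_i)$ by an independent copy therefore perturbs $\sup_{g}|\widehat{R}_{\rm CD}(g)-R_{\rm CD}(g)|$ by only $O(C_{\ell}/n)$, and McDiarmid's bounded-differences inequality then yields the $4C_{\ell}\sqrt{\ln(2/\delta)/(2n)}$ fluctuation about its expectation. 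Standard symmetrization reduces that expectation to $2\mathfrak{R}_n(\mathcal{H})$, where $\mathcal{H}$ is the induced loss class. Using the identity $c(\bm{x}',\bm{x})=-c(\bm{x},\bm{x}')$, each element of $\mathcal{H}$ expands into four summands of the form $a_k\ell(g(\bm{x}_i),\pm 1)$ or $a_k\ell(g(\bm{x}'_i),\pm 1)$ with $|a_k|\leq 1$; subadditivity of the Rademacher complexity splits $\mathfrak{R}_n(\mathcal{H})$ into four pieces, and on each piece Talagrand's contraction lemma (applied conditionally on the observed $c_i$) peels off the $L_{\ell}$-Lipschitz composition $z\mapsto a_k\ell(z,\pm 1)$ and leaves a plain Rademacher average of $g$. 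Because $\bm{x}_i$ and $\bm{x}'_i$ are i.i.d.\ copies from $p(\bm{x})$, the $\bm{x}$- and $\bm{x}'$-indexed averages both equal $\mathfrak{R}_n(\mathcal{G})$, and collecting constants produces the $8L_{\ell}\mathfrak{R}_n(\mathcal{G})$ term.

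The hard part will be the contraction step: the weights $a_k=(\pi_{\pm}\pm c_i)/2$ are data dependent and can change sign, so the plain contraction lemma does not apply verbatim. The plan is to handle this by conditioning on the observed $c_i$, treating the weights as fixed scalars, and invoking the general Ledoux--Talagrand lemma with common Lipschitz constant $L_{\ell}$ (valid since $|a_k|\leq 1$); the outer expectation over the sample then recovers $\mathfrak{R}_n(\mathcal{G})$. Careful bookkeeping of the $\tfrac{1}{2}$ in $\widehat{R}_{\rm CD}$, the four loss pieces, and the two factors of $2$ coming from symmetrization and the ERM decomposition is what produces the explicit constants $8L_{\ell}$ and $4C_{\ell}$ in the stated bound.
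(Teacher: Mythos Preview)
Your proposal is correct and follows essentially the same route as the paper: the standard ERM decomposition reduces to a uniform deviation, McDiarmid controls the fluctuation, symmetrization passes to the Rademacher complexity of the induced loss class, and a contraction argument reduces to $\mathfrak{R}_n(\mathcal{G})$.

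One caveat on the constants: splitting $\mathcal{L}_{\rm CD}$ into four pieces and applying Ledoux--Talagrand with the crude bound $|a_k|=|(\pi_{\pm}\pm c_i)/2|\le 1$ yields $\mathfrak{R}_n(\mathcal{H})\le 4L_\ell\mathfrak{R}_n(\mathcal{G})$, which after the factors $2$ from symmetrization and $2$ from the ERM decomposition gives $16L_\ell\mathfrak{R}_n(\mathcal{G})$, not $8L_\ell$. The paper sharpens this by treating $\mathcal{L}_{\rm CD}$ as a single function of $(g(\bm{x}),g(\bm{x}'))$ and doing a short case analysis on $c_i$ to show $\tfrac{1}{2}\bigl(|\pi_+-c_i|+|\pi_--c_i|+|\pi_++c_i|+|\pi_-+c_i|\bigr)\le 2$, i.e., the combined Lipschitz constant is $2L_\ell$ rather than $4L_\ell$; this is what recovers the stated $8L_\ell$. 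Your ``hard part'' discussion about data-dependent signs is handled in the paper in exactly the way you anticipate (conditionally on the sample, then take the outer expectation), so no gap there.
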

From Theorem~\ref{eeb}, we can observe that as $n\rightarrow \infty$, $R(\widehat{g}_{\rm CD}) \rightarrow R(g^{*})$ because $\mathcal{R}_{n}(\mathcal{G}) \rightarrow 0$ for all parametric models with a bounded norm, such as deep neural networks trained with weight decay~\citep{GRS18}. Furthermore, the estimation error bound converges in $\mathcal{O}_{p}(1/\sqrt{n})$, where $\mathcal{O}_{p}$ denotes the order in probability, which is the optimal parametric rate for empirical risk minimization without making additional assumptions~\citep{M08}.
\subsection{Robustness of Risk Estimator}
In the previous subsections, it was assumed that the class prior probability is known in advance. In addition, it was assumed that the ground-truth confidence difference of each unlabeled data pair is accessible. However, these assumptions can rarely be satisfied in real-world scenarios, since the collection of confidence difference is inevitably injected with noise. In this subsection, we theoretically analyze the influence of an inaccurate class prior probability and noisy confidence difference on the learning procedure. Later in Section~\ref{noise_section}, we will experimentally verify our theoretical findings. 

Let $\bar{\mathcal{D}}=\{((\bm{x}_{i}, \bm{x}'_{i}), \bar{c}_{i})\}_{i=1}^{n}$ denote $n$ unlabeled data pairs with noisy confidence difference, where $\bar{c}_{i}$ is generated by corrupting the ground-truth confidence difference $c_{i}$ with noise. Besides, let $\bar{\pi}_{+}$ denote the inaccurate class prior probability accessible to the learning algorithm. Furthermore, let $\bar{R}_{\rm CD}(g)$ denote the empirical risk calculated based on the inaccurate class prior probability and noisy confidence difference. Let $\bar{g}_{\rm CD}=\mathop{\arg\min}_{g\in\mathcal{G}}\bar{R}_{\rm CD}(g)$ denote the minimizer of $\bar{R}_{\rm CD}(g)$. Then the following theorem gives an estimation error bound: 
\begin{theorem}\label{noisy_eeb}
Based on the assumptions above, for any $\delta > 0$, the following inequality holds with probability at least $1 - \delta$:
\begin{equation}
R(\bar{g}_{\rm CD}) - R(g^{*}) \leq 16L_{\ell}\mathfrak{R}_{n}(\mathcal{G})+8C_{\ell}\sqrt{\frac{\ln{2/\delta}}{2n}}+\frac{4C_{\ell}\sum\nolimits_{i=1}^{n}|\bar{c}_{i}-c_{i}|}{n}+4C_{\ell}|\bar{\pi}_{+}-\pi_{+}|.
\end{equation}
\end{theorem}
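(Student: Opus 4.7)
The plan is to reduce the noisy estimation error to a combination of the clean bound from Theorem~\ref{eeb} and a deterministic perturbation bound under mis-specification of $\pi_{+}$ and the $c_{i}$'s. The natural decomposition is
\begin{equation*}
R(\bar{g}_{\rm CD}) - R(g^{*}) = \bigl[R(\bar{g}_{\rm CD}) - R(\widehat{g}_{\rm CD})\bigr] + \bigl[R(\widehat{g}_{\rm CD}) - R(g^{*})\bigr],
\end{equation*}
after which Theorem~\ref{eeb} directly handles the right bracket, contributing $8L_{\ell}\mathfrak{R}_{n}(\mathcal{G}) + 4C_{\ell}\sqrt{\ln(2/\delta)/(2n)}$ on an event $E$ of probability at least $1-\delta$. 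For the left bracket I will exploit $\bar{R}_{\rm CD}(\bar{g}_{\rm CD}) \leq \bar{R}_{\rm CD}(\widehat{g}_{\rm CD})$, insert $\pm \bar{R}_{\rm CD}(\cdot)$, and telescope to obtain $R(\bar{g}_{\rm CD}) - R(\widehat{g}_{\rm CD}) \leq 2\sup_{g\in\mathcal{G}}|R(g) - \bar{R}_{\rm CD}(g)|$.

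I then split this supremum by the triangle inequality into $\sup_{g}|R(g) - \widehat{R}_{\rm CD}(g)| + \sup_{g}|\widehat{R}_{\rm CD}(g) - \bar{R}_{\rm CD}(g)|$. The first summand is exactly the quantity controlled inside the proof of Theorem~\ref{eeb}; its bound $4L_{\ell}\mathfrak{R}_{n}(\mathcal{G}) + 2C_{\ell}\sqrt{\ln(2/\delta)/(2n)}$ holds on the same event $E$, so no additional union bound is required, and the $2\sup$ prefactor multiplies it to $8L_{\ell}\mathfrak{R}_{n}(\mathcal{G}) + 4C_{\ell}\sqrt{\ln(2/\delta)/(2n)}$, which combines with the Theorem~\ref{eeb} contribution from the right bracket to produce precisely the $16L_{\ell}\mathfrak{R}_{n}(\mathcal{G})$ and $8C_{\ell}\sqrt{\ln(2/\delta)/(2n)}$ terms stated in the theorem.

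The second summand is deterministic and is where the noise terms appear. For each pair, the difference $\mathcal{L}(\bm{x}_{i},\bm{x}'_{i}) - \bar{\mathcal{L}}(\bm{x}_{i},\bm{x}'_{i})$ is a linear combination of $(\pi_{+}-\bar{\pi}_{+})$, $(\pi_{-}-\bar{\pi}_{-}) = -(\pi_{+}-\bar{\pi}_{+})$, and $(c_{i}-\bar{c}_{i})$ against two loss values each bounded by $C_{\ell}$; the reversed pair $\mathcal{L}(\bm{x}'_{i},\bm{x}_{i}) - \bar{\mathcal{L}}(\bm{x}'_{i},\bm{x}_{i})$ has identical magnitude because $c(\bm{x}'_{i},\bm{x}_{i}) = -c(\bm{x}_{i},\bm{x}'_{i})$ and likewise for the noisy version, so the perturbation $|c-\bar{c}|$ is unchanged. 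Averaging over $i$ yields
\begin{equation*}
\sup_{g\in\mathcal{G}}\bigl|\widehat{R}_{\rm CD}(g) - \bar{R}_{\rm CD}(g)\bigr| \leq 2C_{\ell}|\bar{\pi}_{+}-\pi_{+}| + \frac{2C_{\ell}}{n}\sum_{i=1}^{n}|\bar{c}_{i} - c_{i}|,
\end{equation*}
and multiplying by the $2\sup$ prefactor produces the last two terms of the theorem. The main obstacle is really only careful bookkeeping in this deterministic step, namely identifying the four loss summands per pair, using the uniform bound $C_{\ell}$ to absorb them, and correctly accounting for the reversed pair; everything else is a direct reuse of the clean-case analysis in the proof of Theorem~\ref{eeb}.
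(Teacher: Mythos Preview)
Your proposal is correct and essentially matches the paper's argument: both rely on the deterministic perturbation bound $\sup_{g}|\widehat{R}_{\rm CD}(g)-\bar{R}_{\rm CD}(g)|\leq 2C_{\ell}|\bar{\pi}_{+}-\pi_{+}|+\tfrac{2C_{\ell}}{n}\sum_{i}|\bar{c}_{i}-c_{i}|$ combined with the uniform deviation from Lemma~\ref{eeb_lemma}, and both exploit that $\bar{g}_{\rm CD}$ minimizes $\bar{R}_{\rm CD}$ together with $R(\widehat{g}_{\rm CD})-R(g^{*})\leq 2\sup_{g}|R(g)-\widehat{R}_{\rm CD}(g)|$. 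The only cosmetic difference is that the paper telescopes directly through $\widehat{R}_{\rm CD}$ (writing six summands), whereas you first pass through $\sup_{g}|R(g)-\bar{R}_{\rm CD}(g)|$ and then apply the triangle inequality; the resulting bounds and the single high-probability event used are identical.
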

Theorem~\ref{noisy_eeb} indicates that the estimation error is bounded by twice the original bound in Theorem~\ref{eeb} with the mean absolute error of the noisy confidence difference and the inaccurate class prior probability. Furthermore, if $\sum\nolimits_{i=1}^{n}|\bar{c}_{i}-c_{i}|$ has a sublinear growth rate with high probability and the class prior probability is estimated consistently, the risk estimator can be even consistent. It elaborates the robustness of the proposed approach. 
\subsection{Risk Correction Approach}\label{section_rc}
It is worth noting that the empirical risk in Eq.~(\ref{ure}) may be negative due to negative terms, which is unreasonable because of the non-negative property of loss functions. This phenomenon will result in severe overfitting problems when complex models are adopted~\citep{LZNS20,CFXANS21,FSLHXNAS21}. To circumvent this difficulty, we wrap the individual loss terms in Eq.~(\ref{ure}) with \emph{risk correction functions} proposed in~\citet{LZNS20}, such as the rectified linear unit (ReLU) function $f(z) = \max(0, z)$ and the absolute value function $f(z) = |z|$. In this way, the corrected risk estimator for ConfDiff classification can be expressed as follows:
\begin{align} \label{corrected_ure}
\widetilde{R}_{\rm CD}(g) = \frac{1}{2n}(&f(\sum\nolimits_{i=1}^{n}(\pi_{+}-c_{i})\ell(g(\bm{x}_{i}),+1))+f(\sum\nolimits_{i=1}^{n}(\pi_{-}-c_{i})\ell(g(\bm{x}'_{i}),-1)) \nonumber\\
+&f(\sum\nolimits_{i=1}^{n}(\pi_{+}+c_{i})\ell(g(\bm{x}'_{i}),+1))+f(\sum\nolimits_{i=1}^{n}(\pi_{-}+c_{i})\ell(g(\bm{x}_{i}),-1))).
\end{align}

\paragraph{Theoretical analysis.}We assume that the risk correction function $f(z)$ is Lipschitz continuous with Lipschitz constant $L_{f}$. For ease of notation, let $\widehat{A}({g})=\sum\nolimits_{i=1}^{n}(\pi_{+}-c_{i})\ell(g(\bm{x}_{i}),+1)/2n, \widehat{B}({g})=\sum\nolimits_{i=1}^{n}(\pi_{-}-c_{i})\ell(g(\bm{x}'_{i}),-1)/2n,\widehat{C}({g})=\sum\nolimits_{i=1}^{n}(\pi_{+}+c_{i})\ell(g(\bm{x}'_{i}),+1)/2n$, and $\widehat{D}({g})=\sum\nolimits_{i=1}^{n}(\pi_{-}+c_{i})\ell(g(\bm{x}_{i}),-1)/2n$. From Lemma~\ref{lemma_apd1} in Appendix~\ref{proof_of_ccrisk}, the values of $\mathbb{E}[\widehat{A}({g})],\mathbb{E}[\widehat{B}({g})],\mathbb{E}[\widehat{C}({g})]$, and $\mathbb{E}[\widehat{D}({g})]$ are non-negative. Therefore, we assume that there exist non-negative constants $a,b,c,$ and $d$ such that $\mathbb{E}[\widehat{A}({g})] \geq a,\mathbb{E}[\widehat{B}({g})] \geq b,\mathbb{E}[\widehat{C}({g})] \geq c, $ and $\mathbb{E}[\widehat{D}({g})] \geq d$. Besides, let $\widetilde{g}_{\rm CD}=\mathop{\arg\min}_{g\in\mathcal{G}}\widetilde{R}_{\rm CD}(g)$ denote the minimizer of $\widetilde{R}_{\rm CD}(g)$. Then, Theorem~\ref{rc_consist} is provided to elaborate the bias and consistency of $\widetilde{R}_{\rm CD}(g)$.
\begin{theorem}\label{rc_consist}
Based on the assumptions above, the bias of the risk estimator $\widetilde{R}_{\rm CD}(g)$ decays exponentially as $n\rightarrow \infty$:
\begin{equation}
0\leq \mathbb{E}[\widetilde{R}_{\rm CD}(g)]-R(g) \leq 2(L_{f}+1)C_{\ell}\Delta, 
\end{equation}
where $\Delta=\exp{(-2a^{2}n/C_{\ell}^{2})}+\exp{(-2b^{2}n/C_{\ell}^{2})}+\exp{(-2c^{2}n/C_{\ell}^{2})}+\exp{(-2d^{2}n/C_{\ell}^{2})}$. Furthermore, with probability at least $1-\delta$, we have
\begin{equation}
|\widetilde{R}_{\rm CD}(g)-R(g)|\leq 2C_{\ell}L_{f}\sqrt{\frac{\ln{2/\delta}}{2n}}+2(L_{f}+1)C_{\ell}\Delta. 
\end{equation}
\end{theorem}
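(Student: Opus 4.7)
The plan is to decompose $\widetilde{R}_{\rm CD}(g)$ into the four normalized sums $\widehat{A}(g)$, $\widehat{B}(g)$, $\widehat{C}(g)$, $\widehat{D}(g)$ already isolated before the statement, and to analyze each separately. First, I would exploit positive homogeneity of both candidate correction functions (ReLU and absolute value) to rewrite $\widetilde{R}_{\rm CD}(g)=f(\widehat{A}(g))+f(\widehat{B}(g))+f(\widehat{C}(g))+f(\widehat{D}(g))$, mirroring the identity $\widehat{R}_{\rm CD}(g)=\widehat{A}(g)+\widehat{B}(g)+\widehat{C}(g)+\widehat{D}(g)$. Because both correction choices satisfy $f(z)\ge z$ with equality on $[0,\infty)$, and Theorem~\ref{ccrisk} guarantees $\mathbb{E}[\widehat{R}_{\rm CD}(g)]=R(g)$, the lower bound $\mathbb{E}[\widetilde{R}_{\rm CD}(g)]-R(g)\ge 0$ is immediate.

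For the upper bound on the bias, I would split $\mathbb{E}[f(\widehat{A}(g))-\widehat{A}(g)]$ on the sign of $\widehat{A}(g)$: the integrand vanishes on $\{\widehat{A}(g)\ge 0\}$, while on the complement, the $L_f$-Lipschitz assumption together with $f(0)=0$ give $|f(\widehat{A}(g))|\le L_f|\widehat{A}(g)|$ and hence $f(\widehat{A}(g))-\widehat{A}(g)\le (L_f+1)|\widehat{A}(g)|$. The uniform boundedness of each summand of $\widehat{A}(g)$ by a constant multiple of $C_\ell/n$ yields $|\widehat{A}(g)|=O(C_\ell)$, and Hoeffding's inequality applied to the $n$ i.i.d.\ summands of $\widehat{A}(g)$, combined with the hypothesis $\mathbb{E}[\widehat{A}(g)]\ge a$, gives $\Pr[\widehat{A}(g)\le 0]\le\exp(-2a^2n/C_\ell^2)$. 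Multiplying and summing the analogous bounds for $\widehat{B},\widehat{C},\widehat{D}$ (with constants $b,c,d$) produces the claimed $2(L_f+1)C_\ell\Delta$ bound.

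For the concentration claim, I would apply McDiarmid's bounded-difference inequality to $\widetilde{R}_{\rm CD}(g)$, viewed as a function of the $n$ independent pairs $(\bm{x}_i,\bm{x}'_i)$. Swapping a single pair perturbs each of the four inner averages by at most a constant times $C_\ell/n$, and the $L_f$-Lipschitz property of $f$ scales this perturbation by $L_f$; McDiarmid then yields $|\widetilde{R}_{\rm CD}(g)-\mathbb{E}[\widetilde{R}_{\rm CD}(g)]|\le 2C_\ell L_f\sqrt{\ln(2/\delta)/(2n)}$ with probability at least $1-\delta$. The triangle inequality $|\widetilde{R}_{\rm CD}(g)-R(g)|\le|\widetilde{R}_{\rm CD}(g)-\mathbb{E}[\widetilde{R}_{\rm CD}(g)]|+|\mathbb{E}[\widetilde{R}_{\rm CD}(g)]-R(g)|$ then combines this deviation bound with the bias bound to give the second inequality in the theorem.

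The main obstacle will be aligning the numerical constants exactly as stated: tracking the range of each $(\pi_\pm\pm c_i)\ell(g(\bm{x}),\pm 1)$ carefully enough that the Hoeffding exponent becomes precisely $-2a^2n/C_\ell^2$ (and similarly for $b,c,d$), and ensuring that the four-term McDiarmid bookkeeping collapses to exactly the stated prefactor $2C_\ell L_f$ rather than a larger multiple. Conceptually, however, the argument is a four-coordinate adaptation of the standard non-negative risk correction analysis of \citet{LZNS20}, so once the decomposition and sign-splitting are in place the remainder is routine.
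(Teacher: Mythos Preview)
Your proposal is correct and follows essentially the same route as the paper. The paper organizes the sign-splitting slightly differently---it works with the single ``bad'' event $\mathfrak{D}_n^-(g)=\{\widehat{A}(g)\le 0\}\cup\{\widehat{B}(g)\le 0\}\cup\{\widehat{C}(g)\le 0\}\cup\{\widehat{D}(g)\le 0\}$, bounds $\widetilde{R}_{\rm CD}(g)-\widehat{R}_{\rm CD}(g)$ by its supremum on that event times $\mathbb{P}(\mathfrak{D}_n^-(g))$, and then union-bounds the probability via McDiarmid---whereas you handle each of the four terms separately; but the resulting inequalities and constants coincide, and the McDiarmid step for the concentration claim and the final triangle-inequality combination are identical.
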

Theorem~\ref{rc_consist} demonstrates that $\widetilde{R}_{\rm CD}(g) \rightarrow R(g)$ in $\mathcal{O}_{p}(1/\sqrt{n})$, which means that $\widetilde{R}_{\rm CD}(g)$ is biased yet consistent. The estimation error bound of $\widetilde{g}_{\rm CD}$ is analyzed in Theorem~\ref{rc_eeb}.
\begin{theorem}\label{rc_eeb}
Based on the assumptions above, for any $\delta > 0$, the following inequality holds with probability at least $1 - \delta$:
\begin{equation}
R(\widetilde{g}_{\rm CD})-R(g^{*})\leq 8L_{\ell}\mathfrak{R}_{n}(\mathcal{G})+4C_{\ell}(L_{f}+1)\sqrt{\frac{\ln{2/\delta}}{2n}} + 4(L_{f}+1)C_{\ell}\Delta.
\end{equation}
\end{theorem}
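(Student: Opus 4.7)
The plan is to follow the standard ERM estimation-error template. Since $\widetilde{g}_{\rm CD}$ minimizes $\widetilde{R}_{\rm CD}$ over $\mathcal{G}$, inserting $\widetilde{R}_{\rm CD}(\widetilde{g}_{\rm CD})\leq\widetilde{R}_{\rm CD}(g^{*})$ into the chain $R(\widetilde{g}_{\rm CD})-R(g^{*})=[R(\widetilde{g}_{\rm CD})-\widetilde{R}_{\rm CD}(\widetilde{g}_{\rm CD})]+[\widetilde{R}_{\rm CD}(\widetilde{g}_{\rm CD})-\widetilde{R}_{\rm CD}(g^{*})]+[\widetilde{R}_{\rm CD}(g^{*})-R(g^{*})]$ yields
\begin{equation*}
R(\widetilde{g}_{\rm CD})-R(g^{*})\ \leq\ 2\sup_{g\in\mathcal{G}}\bigl|\widetilde{R}_{\rm CD}(g)-R(g)\bigr|,
\end{equation*}
so the task reduces to a uniform bound on $|\widetilde{R}_{\rm CD}(g)-R(g)|$. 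I would then split
\begin{equation*}
\widetilde{R}_{\rm CD}(g)-R(g)=\bigl[\widetilde{R}_{\rm CD}(g)-\mathbb{E}\widetilde{R}_{\rm CD}(g)\bigr]+\bigl[\mathbb{E}\widetilde{R}_{\rm CD}(g)-R(g)\bigr]
\end{equation*}
and bound the bias and concentration pieces separately, so as to reuse the structure already established for the unbiased estimator in Theorem~\ref{eeb}.

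The bias piece is immediate from Theorem~\ref{rc_consist}: its bound $2(L_{f}+1)C_{\ell}\Delta$ does not depend on $g$, hence is automatically uniform, and after the factor-of-two inflation above it contributes exactly the $4(L_{f}+1)C_{\ell}\Delta$ summand in the statement.

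For the concentration piece I would invoke McDiarmid's inequality. Replacing a single ConfDiff pair $(\bm{x}_{i},\bm{x}'_{i},c_{i})$ perturbs each of the four inner sums $\widehat{A}(g),\widehat{B}(g),\widehat{C}(g),\widehat{D}(g)$ in~\eqref{corrected_ure} by $O(C_{\ell}/n)$---the coefficients $|\pi_{\pm}\pm c_{i}|$ are bounded by $2$ and the losses by $C_{\ell}$---so $L_{f}$-Lipschitzness of $f$ bounds the change in $\widetilde{R}_{\rm CD}(g)$ by $O(L_{f}C_{\ell}/n)$, and McDiarmid yields, with probability at least $1-\delta$,
\begin{equation*}
\sup_{g}\bigl|\widetilde{R}_{\rm CD}(g)-\mathbb{E}\widetilde{R}_{\rm CD}(g)\bigr|\ \leq\ \mathbb{E}\!\left[\sup_{g}\bigl|\widetilde{R}_{\rm CD}(g)-\mathbb{E}\widetilde{R}_{\rm CD}(g)\bigr|\right]+2C_{\ell}(L_{f}+1)\sqrt{\tfrac{\ln(2/\delta)}{2n}},
\end{equation*}
the $L_{f}+1$ arising from propagating the Lipschitz constant of $f$ through the bounded-differences argument. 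For the remaining expected supremum I would exploit that the correction functions used in practice (ReLU, absolute value) are $1$-Lipschitz and positively homogeneous of degree one, so $f(2n\widehat{\star}(g))/(2n)=f(\widehat{\star}(g))$ and $\widetilde{R}_{\rm CD}(g)=\sum_{\star\in\{A,B,C,D\}}f(\widehat{\star}(g))$ is a sum of per-instance compositions. Standard symmetrization followed by two applications of Talagrand's contraction inequality (once for $f$, once for the $L_{\ell}$-Lipschitz loss $\ell$), combined with the pointwise bound on the coefficients, then delivers a Rademacher bound of $8L_{\ell}\mathfrak{R}_{n}(\mathcal{G})$, matching the shape of Theorem~\ref{eeb}.

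The main obstacle is that $\widetilde{R}_{\rm CD}$ is not an empirical mean: the outer $f$ wrapping the four partial sums breaks the additive structure that the Rademacher analysis behind Theorem~\ref{eeb} relies on, and without a workaround the contraction step threatens to inflate the $\mathfrak{R}_{n}(\mathcal{G})$ coefficient from $L_{\ell}$ to $L_{f}L_{\ell}$. The positive-homogeneity identity above is the pivot that rescues the structure, turning $f$ applied to a scaled sum into $f$ applied to an empirical average, after which Talagrand contraction applies cleanly. A secondary technical point is the book-keeping of Lipschitz constants, ensuring that $L_{f}$ appears only as the visible $(L_{f}+1)$ factor on the $\sqrt{\ln(2/\delta)/n}$ term and the $\Delta$ term, and does not further inflate the coefficient of $\mathfrak{R}_{n}(\mathcal{G})$. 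Once the bias, McDiarmid gap, and Rademacher bound are assembled and the outer factor of two applied, the three summands in the claimed inequality emerge.
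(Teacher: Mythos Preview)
Your decomposition $R(\widetilde{g}_{\rm CD})-R(g^{*})\leq 2\sup_{g}|\widetilde{R}_{\rm CD}(g)-R(g)|$ and the bias piece via Theorem~\ref{rc_consist} are fine, but the Rademacher step contains a genuine gap. Positive homogeneity of $f$ indeed gives $\widetilde{R}_{\rm CD}(g)=\sum_{\star}f(\widehat{\star}(g))$, but each summand $f(\widehat{\star}(g))$ is $f$ applied to an \emph{average}, not an average of per-instance compositions. Symmetrization requires the empirical process to be of the form $\tfrac{1}{n}\sum_{i}h(g;z_{i})$; the object $f\bigl(\tfrac{1}{n}\sum_{i}a_{i}(g)\bigr)$ is not of this form unless $f$ is linear, and Talagrand contraction operates on per-instance Lipschitz maps, not on an outer Lipschitz map wrapping a sum. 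So the step ``standard symmetrization followed by two applications of Talagrand's contraction inequality'' does not go through, and without it you cannot recover the $8L_{\ell}\mathfrak{R}_{n}(\mathcal{G})$ coefficient free of $L_{f}$.

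The paper avoids this obstacle by a different decomposition: instead of comparing $\widetilde{g}_{\rm CD}$ to $g^{*}$ directly, it inserts the \emph{unbiased} minimizer $\widehat{g}_{\rm CD}$,
\[
R(\widetilde{g}_{\rm CD})-R(g^{*})=[R(\widetilde{g}_{\rm CD})-\widetilde{R}_{\rm CD}(\widetilde{g}_{\rm CD})]+[\widetilde{R}_{\rm CD}(\widetilde{g}_{\rm CD})-\widetilde{R}_{\rm CD}(\widehat{g}_{\rm CD})]+[\widetilde{R}_{\rm CD}(\widehat{g}_{\rm CD})-R(\widehat{g}_{\rm CD})]+[R(\widehat{g}_{\rm CD})-R(g^{*})],
\]
drops the second bracket by optimality of $\widetilde{g}_{\rm CD}$, bounds the first and third brackets by Theorem~\ref{rc_consist} (each contributing $2C_{\ell}L_{f}\sqrt{\ln(2/\delta)/(2n)}+2(L_{f}+1)C_{\ell}\Delta$), and bounds the last bracket by Theorem~\ref{eeb}. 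The Rademacher term thus comes entirely from the analysis of the \emph{unbiased} estimator $\widehat{R}_{\rm CD}$, which is a genuine empirical mean, so no uniform deviation argument for $\widetilde{R}_{\rm CD}$ is ever needed. Summing gives $8L_{\ell}\mathfrak{R}_{n}(\mathcal{G})+4C_{\ell}(L_{f}+1)\sqrt{\ln(2/\delta)/(2n)}+4(L_{f}+1)C_{\ell}\Delta$ as claimed.
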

Theorem~\ref{rc_eeb} elucidates that as $n\rightarrow \infty$, $R(\widetilde{g}_{\rm CD}) \rightarrow R(g^{*})$, since $\mathcal{R}_{n}(\mathcal{G}) \rightarrow 0$ for all parametric models with a bounded norm~\citep{MAA12} and $\Delta \rightarrow 0$. Furthermore, the estimation error bound converges in $\mathcal{O}_{p}(1/\sqrt{n})$, which is the optimal parametric rate for empirical risk minimization without additional assumptions~\citep{M08}.
\section{Experiments}\label{exp_section}
In this section, we verify the effectiveness of our proposed approaches experimentally. 

\subsection{Experimental Setup}
We conducted experiments on benchmark data sets, including MNIST~\citep{LBBH98}, Kuzushiji-MNIST~\citep{CBKLYH18}, Fashion-MNIST~\citep{XRV17}, and CIFAR-10~\citep{KH09}. In addition, four UCI data sets~\citep{DG17} were used, including Optdigits, USPS, Pendigits, and Letter. Since the data sets were originally designed for multi-class classification, we manually partitioned them into binary classes. For CIFAR-10, we used ResNet-34~\citep{HZRS16} as the model architecture. For other data sets, we used a multilayer perceptron (MLP) with three hidden layers of width 300 equipped with the ReLU~\citep{NH10} activation function and batch normalization~\citep{IS15}. The logistic loss is utilized to instantiate the loss function $\ell$. 

It is worth noting that confidence difference is given by labelers in real-world applications, while it was generated synthetically in this paper to facilitate comprehensive experimental analysis. We firstly trained a probabilistic classifier via logistic regression with ordinarily labeled data and the same neural network architecture. Then, we 
sampled unlabeled data in pairs at random, and generated the class posterior probabilities by inputting them into the probabilistic classifier. After that, we generated confidence difference for each pair of sampled data according to Definition~\ref{df1}. To verify the effectiveness of our approaches under different class prior settings, we set $\pi_{+}\in\{0.2, 0.5, 0.8\}$ for all the data sets. Besides, we assumed that the class prior $\pi_{+}$ was known for all the compared methods. We repeated the sampling-and-training procedure for five times, and the mean accuracy as well as the standard deviation were recorded. 

We adopted the following variants of our proposed approaches: 1) ConfDiff-Unbiased, which denotes the  method working by minimizing the unbiased risk estimator; 2) ConfDiff-ReLU, which denotes the method working by minimizing the corrected risk estimator with the ReLU function as the risk correction function; 3) ConfDiff-ABS, which denotes the method working by minimizing the corrected risk estimator with the absolute value function as the risk correction function. We compared our proposed approaches with several Pcomp methods~\citep{FSLHXNAS21}, including Pcomp-Unbiased, Pcomp-ReLU, Pcomp-ABS, and Pcomp-Teacher. We also recorded the experimental results of supervised learning methods, including Oracle-Hard having access to ground-truth hard labels and Oracle-Soft having access to pointwise positive confidence. All the experiments were conducted on NVIDIA GeForce RTX 3090. The number of training epoches was set to 200 and we obtained the testing accuracy by averaging the results in the last 10 epoches. All the methods were implemented in PyTorch~\citep{paszke2019pytorch}. We used the Adam optimizer~\citep{KB15}. To ensure fair comparisons, We set the same hyperparameter values for all the compared approaches, where the details can be found in Appendix~\ref{exp_appendix}.
\begin{table*}[t]
\small
\caption{Classification accuracy (mean$\pm$std) of each method on benchmark data sets with different class priors, where the best performance (excluding Oracle) is shown in bold.}\label{exp_res}

\centering
\begin{tabular}{clllll}
\toprule
Class Prior          & \multicolumn{1}{c}{Method} & \multicolumn{1}{c}{MNIST} & \multicolumn{1}{c}{Kuzushiji} & \multicolumn{1}{c}{Fashion} &\multicolumn{1}{c} {CIFAR-10}  \\
\midrule
\multirow{9}{*}{$\pi_{+}=0.2$} & Pcomp-Unbiased &0.761$\pm$0.017 &0.637$\pm$0.052 &0.737$\pm$0.050 &0.776$\pm$0.023 \\
&Pcomp-ReLU &0.800$\pm$0.000 &0.800$\pm$0.000 &0.800$\pm$0.000 &0.800$\pm$0.000 \\
&Pcomp-ABS &0.800$\pm$0.000 &0.800$\pm$0.000 &0.800$\pm$0.000 &0.800$\pm$0.000 \\
&Pcomp-Teacher &0.965$\pm$0.010 &0.871$\pm$0.046 &0.853$\pm$0.017 &0.836$\pm$0.019 \\
\cline{2-6}
& Oracle-Hard & 0.990$\pm$0.000 & 0.939$\pm$0.001&0.979$\pm$0.001&0.894$\pm$0.003\\
& Oracle-Soft & 0.989$\pm$0.001 & 0.939$\pm$0.004 & 0.979$\pm$0.001 & 0.893$\pm$0.003 \\
\cline{2-6}
& ConfDiff-Unbiased &0.789$\pm$0.041 &0.672$\pm$0.053 &0.855$\pm$0.024 &0.789$\pm$0.025 \\
& ConfDiff-ReLU &0.968$\pm$0.003 &0.860$\pm$0.017 &0.964$\pm$0.004 &0.844$\pm$0.020 \\
& ConfDiff-ABS &\bf 0.975$\pm$0.003 &\bf 0.898$\pm$0.003 &\bf 0.965$\pm$0.002 &\bf 0.862$\pm$0.015 \\
\midrule
\multirow{9}{*}{$\pi_{+}=0.5$} & Pcomp-Unbiased &0.712$\pm$0.020 &0.578$\pm$0.036 &0.723$\pm$0.042 &0.703$\pm$0.042 \\
&Pcomp-ReLU &0.502$\pm$0.003 &0.502$\pm$0.004 &0.500$\pm$0.000 &0.602$\pm$0.032 \\
&Pcomp-ABS &0.842$\pm$0.012 &0.727$\pm$0.006 &0.851$\pm$0.012 &0.583$\pm$0.018 \\
&Pcomp-Teacher &0.893$\pm$0.014 &0.782$\pm$0.046 &0.903$\pm$0.016 &0.779$\pm$0.016 \\
\cline{2-6}
& Oracle-Hard & 0.986$\pm$0.000&0.929$\pm$0.002&0.976$\pm$0.001&0.871$\pm$0.003 \\
& Oracle-Soft & 0.985$\pm$0.001 & 0.928$\pm$0.002 & 0.978$\pm$0.001 & 0.877$\pm$0.002 \\
\cline{2-6}
& ConfDiff-Unbiased &0.911$\pm$0.046 &0.712$\pm$0.046 &0.896$\pm$0.036 &0.720$\pm$0.024 \\
& ConfDiff-ReLU &0.944$\pm$0.011 &0.805$\pm$0.015 &0.960$\pm$0.003 &0.830$\pm$0.007 \\
& ConfDiff-ABS &\bf 0.964$\pm$0.001 &\bf 0.867$\pm$0.006 &\bf 0.967$\pm$0.001 &\bf 0.843$\pm$0.004 \\
\midrule
\multirow{9}{*}{$\pi_{+}=0.8$} & Pcomp-Unbiased &0.799$\pm$0.005 &0.671$\pm$0.029 &0.813$\pm$0.029 &0.737$\pm$0.022 \\
&Pcomp-ReLU &0.910$\pm$0.031 &0.775$\pm$0.022 &0.897$\pm$0.023 &0.851$\pm$0.010 \\
&Pcomp-ABS &0.854$\pm$0.027 &0.838$\pm$0.026 &0.921$\pm$0.017 &0.849$\pm$0.007 \\
&Pcomp-Teacher &0.943$\pm$0.026 &0.814$\pm$0.027 &0.936$\pm$0.014 &0.821$\pm$0.003 \\
\cline{2-6}
& Oracle-Hard &0.991$\pm$0.001&0.942$\pm$0.003&0.979$\pm$0.000&0.897$\pm$0.002 \\
& Oracle-Soft & 0.990$\pm$0.002 & 0.945$\pm$0.003 & 0.980$\pm$0.001 & 0.904$\pm$0.009 \\
\cline{2-6}
& ConfDiff-Unbiased &0.792$\pm$0.017 &0.758$\pm$0.033 &0.810$\pm$0.035 &0.794$\pm$0.012 \\
& ConfDiff-ReLU &0.970$\pm$0.004 &0.886$\pm$0.009 &0.970$\pm$0.002 &0.851$\pm$0.012 \\
& ConfDiff-ABS &\bf 0.983$\pm$0.002 &\bf 0.915$\pm$0.001 &\bf 0.975$\pm$0.002 &\bf 0.874$\pm$0.011 \\
\bottomrule
\end{tabular}
\end{table*}
\begin{table*}[t]
\small
\caption{Classification accuracy (mean$\pm$std) of each method on UCI data sets with different class priors, where the best performance (excluding Oracle) is shown in bold.}\label{exp_res_uci}
\centering
\begin{tabular}{clllll}
\toprule
Class Prior          & \multicolumn{1}{c}{Method} & \multicolumn{1}{c}{Optdigits} & \multicolumn{1}{c}{USPS} & \multicolumn{1}{c}{Pendigits} &\multicolumn{1}{c} {Letter}  \\
\midrule
\multirow{9}{*}{$\pi_{+}=0.2$} & Pcomp-Unbiased &0.771$\pm$0.016 &0.721$\pm$0.046 &0.743$\pm$0.057 &0.757$\pm$0.028 \\
&Pcomp-ReLU &0.800$\pm$0.000 &0.800$\pm$0.000 &0.800$\pm$0.000 &0.800$\pm$0.000 \\
&Pcomp-ABS &0.800$\pm$0.001 &0.800$\pm$0.000 &0.800$\pm$0.000 &0.800$\pm$0.000 \\
&Pcomp-Teacher &0.901$\pm$0.023 &0.894$\pm$0.023 &0.928$\pm$0.019 &0.883$\pm$0.006 \\
\cline{2-6}
& Oracle-Hard & 0.990$\pm$0.002&0.984$\pm$0.002&0.997$\pm$0.001&0.978$\pm$0.003\\
& Oracle-Soft & 0.990$\pm$0.003 & 0.984$\pm$0.004 & 0.998$\pm$0.001 & 0.971$\pm$0.007 \\
\cline{2-6}
& ConfDiff-Unbiased &0.831$\pm$0.078 &0.840$\pm$0.078 &0.865$\pm$0.079 &0.732$\pm$0.053 \\
& ConfDiff-ReLU &0.953$\pm$0.014 &0.957$\pm$0.007 &0.987$\pm$0.003 &0.929$\pm$0.008 \\
& ConfDiff-ABS &\bf 0.963$\pm$0.009 &\bf 0.960$\pm$0.005 &\bf 0.988$\pm$0.002 &\bf 0.942$\pm$0.007 \\
\midrule
\multirow{9}{*}{$\pi_{+}=0.5$} & Pcomp-Unbiased &0.651$\pm$0.112 &0.671$\pm$0.090 &0.748$\pm$0.038 &0.632$\pm$0.019 \\
&Pcomp-ReLU &0.630$\pm$0.076 &0.554$\pm$0.048 &0.514$\pm$0.019 &0.525$\pm$0.023 \\
&Pcomp-ABS &0.787$\pm$0.031 &0.814$\pm$0.018 &0.793$\pm$0.017 &0.748$\pm$0.031 \\
&Pcomp-Teacher &0.890$\pm$0.009 &0.860$\pm$0.012 &0.883$\pm$0.018 &0.864$\pm$0.024 \\
\cline{2-6}
& Oracle-Hard & 0.988$\pm$0.003&0.980$\pm$0.003&0.997$\pm$0.001&0.975$\pm$0.001\\
& Oracle-Soft & 0.987$\pm$0.003 & 0.980$\pm$0.003 & 0.997$\pm$0.001 & 0.967$\pm$0.006 \\
\cline{2-6}
& ConfDiff-Unbiased &0.917$\pm$0.006 &0.936$\pm$0.010 &0.945$\pm$0.052 &0.755$\pm$0.041 \\
& ConfDiff-ReLU &0.921$\pm$0.011 &0.945$\pm$0.009 &0.981$\pm$0.004 &0.895$\pm$0.006 \\
& ConfDiff-ABS &\bf 0.962$\pm$0.006 &\bf 0.959$\pm$0.004 &\bf 0.988$\pm$0.003 &\bf 0.925$\pm$0.003 \\
\midrule
\multirow{9}{*}{$\pi_{+}=0.8$} & Pcomp-Unbiased &0.765$\pm$0.023 &0.746$\pm$0.012 &0.743$\pm$0.026 &0.694$\pm$0.031 \\
&Pcomp-ReLU &0.902$\pm$0.017 &0.891$\pm$0.024 &0.913$\pm$0.023 &0.827$\pm$0.025 \\
&Pcomp-ABS &0.894$\pm$0.019 &0.879$\pm$0.009 &0.911$\pm$0.009 &0.870$\pm$0.006 \\
&Pcomp-Teacher &0.918$\pm$0.007 &0.933$\pm$0.023 &0.903$\pm$0.008 &0.872$\pm$0.011 \\
\cline{2-6}
& Oracle-Hard & 0.987$\pm$0.003&0.983$\pm$0.002&0.997$\pm$0.001&0.976$\pm$0.004 \\
& Oracle-Soft & 0.986$\pm$0.003 & 0.985$\pm$0.004 & 0.998$\pm$0.001 & 0.965$\pm$0.010 \\
\cline{2-6}
& ConfDiff-Unbiased &0.886$\pm$0.037 &0.803$\pm$0.042 &0.892$\pm$0.096 &0.748$\pm$0.015 \\
& ConfDiff-ReLU &0.949$\pm$0.007 &0.958$\pm$0.008 &0.986$\pm$0.003 &0.927$\pm$0.008 \\
& ConfDiff-ABS &\bf 0.964$\pm$0.005 &\bf 0.964$\pm$0.003 &\bf 0.987$\pm$0.002 &\bf 0.945$\pm$0.007 \\
\bottomrule
\end{tabular}
\end{table*}
\subsection{Experimental Results}
\paragraph{Benchmark data sets.}Table~\ref{exp_res} reports detailed experimental results for all the compared methods on four benchmark data sets. Based on Table~\ref{exp_res}, we can draw the following conclusions: a) On all the cases of benchmark data sets, our proposed ConfDiff-ABS method achieves superior performance against all of the other compared approaches significantly, which validates the effectiveness of our approach in utilizing supervision information of confidence difference; b) Pcomp-Teacher achieves superior performance against all of the other Pcomp approaches by a large margin. The excellent performance benefits from the effectiveness of consistency regularization for weakly supervised learning problems~\citep{BCGPOR19,WWZ22,LSH20}; c) It is worth noting that the classification results of ConfDiff-ReLU and ConfDiff-ABS have smaller variances than ConfDiff-Unbiased. It demonstrates that the risk correction method can enhance the stability and robustness for ConfDiff classification.

\paragraph{UCI data sets.}Table~\ref{exp_res_uci} reports detailed experimental results on four UCI data sets as well. From Table~\ref{exp_res_uci}, we can observe that: a) On all the UCI data sets under different class prior probability settings, our proposed ConfDiff-ABS method achieves the best performance among all the compared approaches with significant superiority, which verifies the effectiveness of our proposed approaches again; b) The performance of our proposed approaches is more stable than the compared Pcomp approaches under different class prior probability settings, demonstrating the superiority of our methods in dealing
with various kinds of data distributions.
\subsection{Experiments on a Real-world Recommender System Data Set}
\begin{wraptable}[11]{r}{0.35\textwidth}
\small
\vspace{-10pt}
\caption{Experimental results on the KuaiRec data set.}\label{kuairec}
\centering
\begin{tabular}{lcc}
\toprule
\multicolumn{1}{c}{Method} & HR & NDCG \\
\midrule
BPR & 0.464	& 0.256 \\
MRL & 0.476 & 0.271 \\
Oracle-Hard & 0.469 & 0.283 \\
Oracle-Soft & 0.534 & 0.380 \\
Pcomp-Teacher & 0.179 & 0.066 \\
\midrule
ConfDiff-ABS & 0.570 & 0.372 \\
\bottomrule
\end{tabular}
\end{wraptable}
We also conducted experiments on a recommender system data set to demonstrate our approach's usefulness and promising applications in real-world scenarios. We used the KuaiRec~\citep{gao2022kuairec} data set, a real-world recommender system data set collected from a well-known short-video mobile app. In this data set, user-item interactions are represented by watching ratios, i.e., the ratios of watching time to the entire length of videos. Such statistics could reveal the confidence of preference, and we regarded them as pointwise positive confidence. We generated pairwise confidence difference between pairs of items for a given user. We adopted the NCF~\citep{he2017neural} model as our backbone. Details of the experimental setup and the data set can be found in Appendix~\ref{exp_appendix}. We employed 5 compared methods, including BPR~\citep{rendle2009bpr}, Margin Ranking Loss (MRL), Oracle-Hard, Oracle-Soft, and Pcomp-Teacher. Table~\ref{kuairec} reports the hit ratio (HR) and normalized discounted cumulative gain (NDCG) results. Our approach performs comparably against Oracle in terms of NDCG and even performs better in terms of HR. On the contrary, Pcomp-Teacher does not perform well on this data set. It validates the effectiveness of our approach in exploiting the supervision information of the confidence difference. 
\begin{figure*}[t]
  \centering
  \subfigure[Kuzushiji]{
    \includegraphics[width=3.2cm]{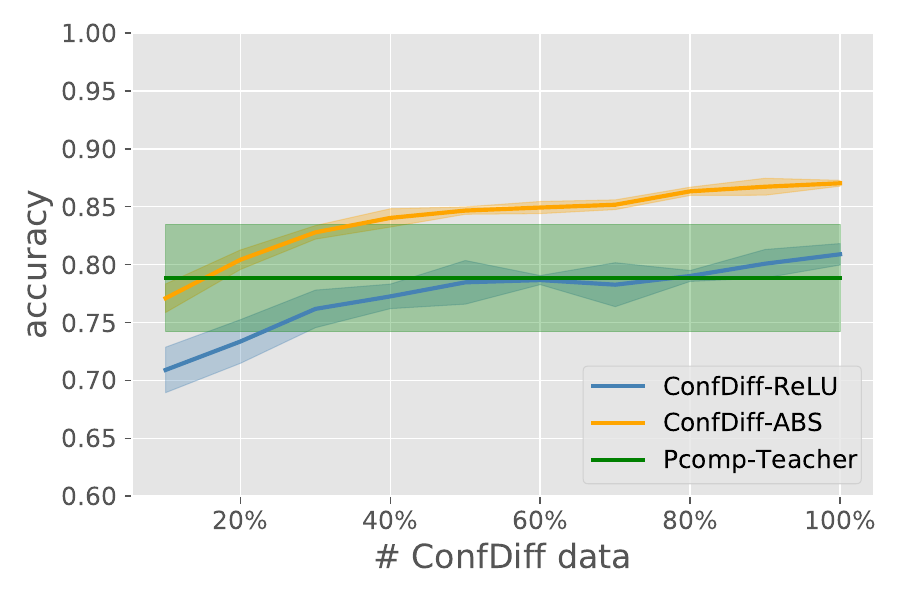}
  }
  \subfigure[Fashion]{
    \includegraphics[width=3.2cm]{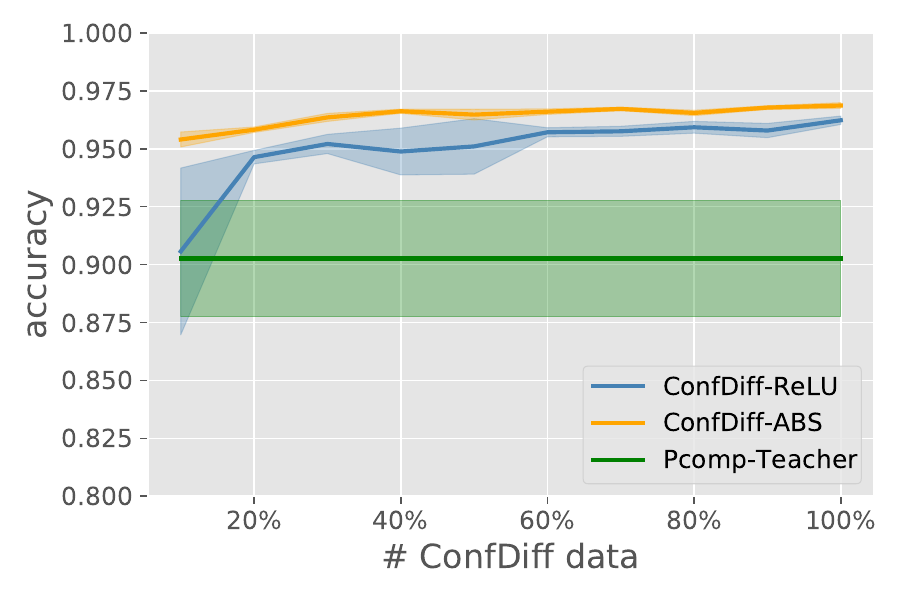}
  }
  \subfigure[USPS]{
    \includegraphics[width=3.2cm]{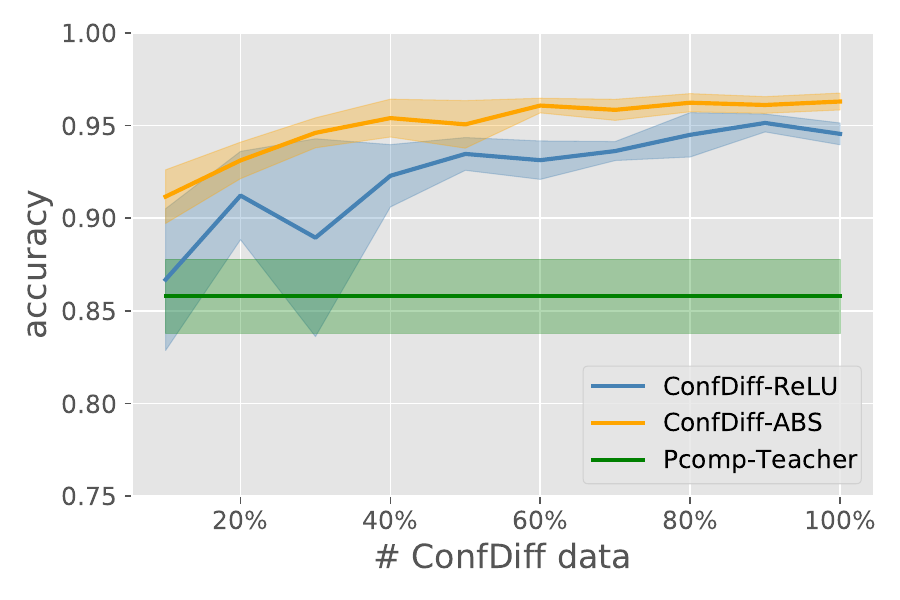}
  }
  \subfigure[Letter]{
    \includegraphics[width=3.2cm]{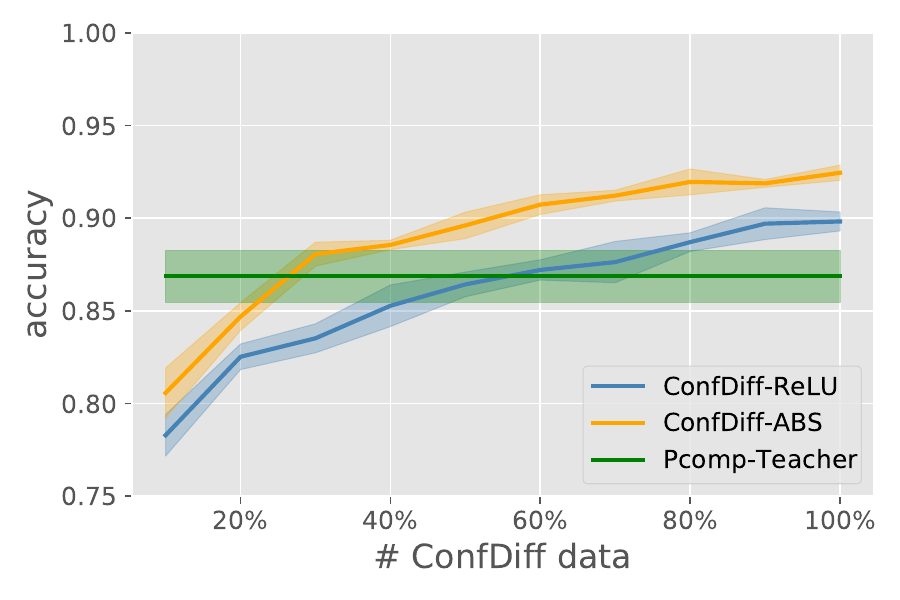}
  }
  \\
  \subfigure[Optdigits]{
    \includegraphics[width=3.2cm]{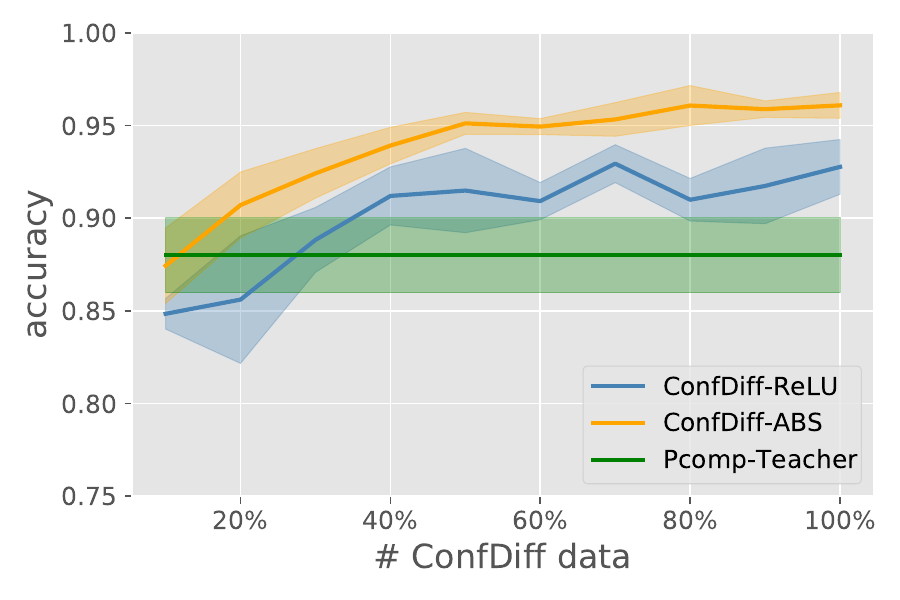}
  }
  \subfigure[Pendigits]{
    \includegraphics[width=3.2cm]{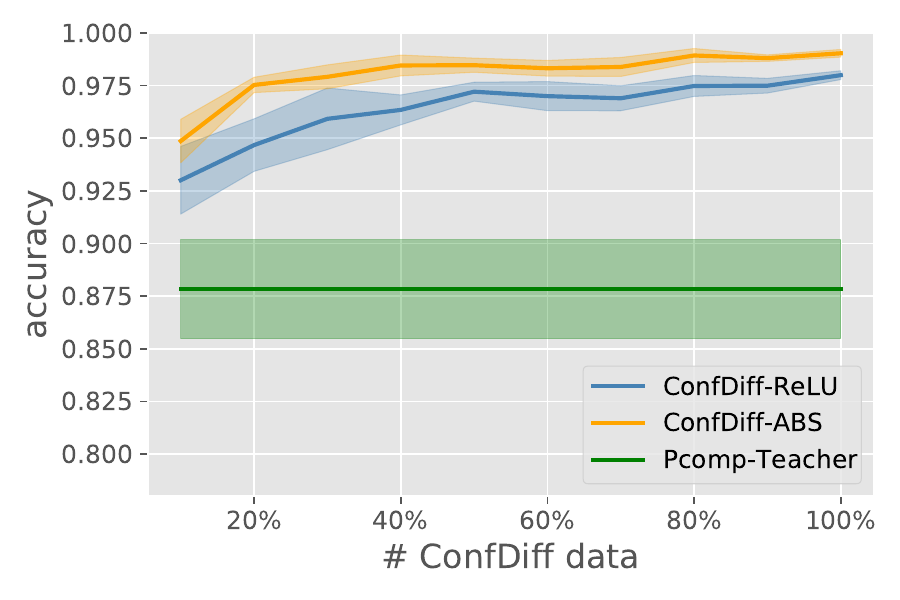}
  }
  \subfigure[MNIST]{
    \includegraphics[width=3.2cm]{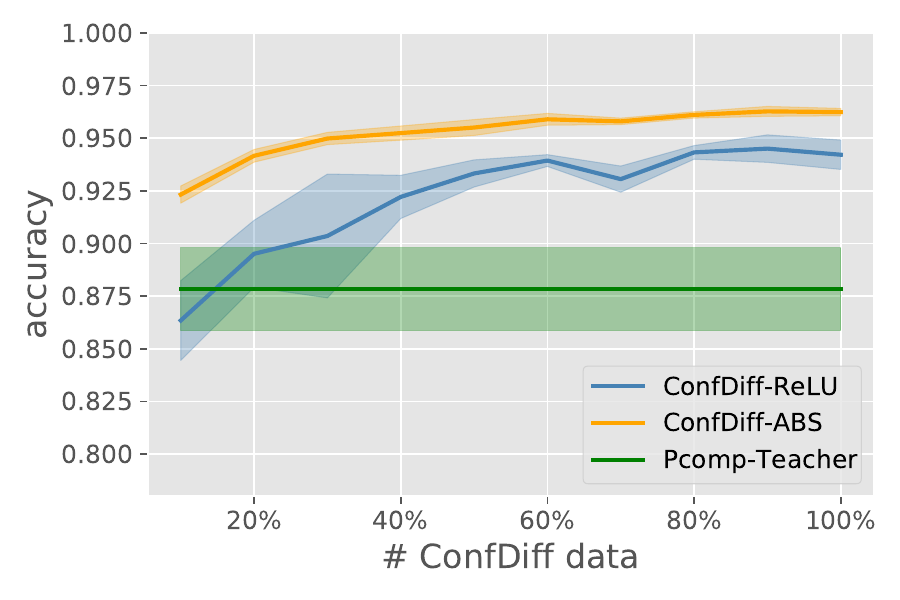}
  }
  \subfigure[CIFAR-10]{
    \includegraphics[width=3.2cm]{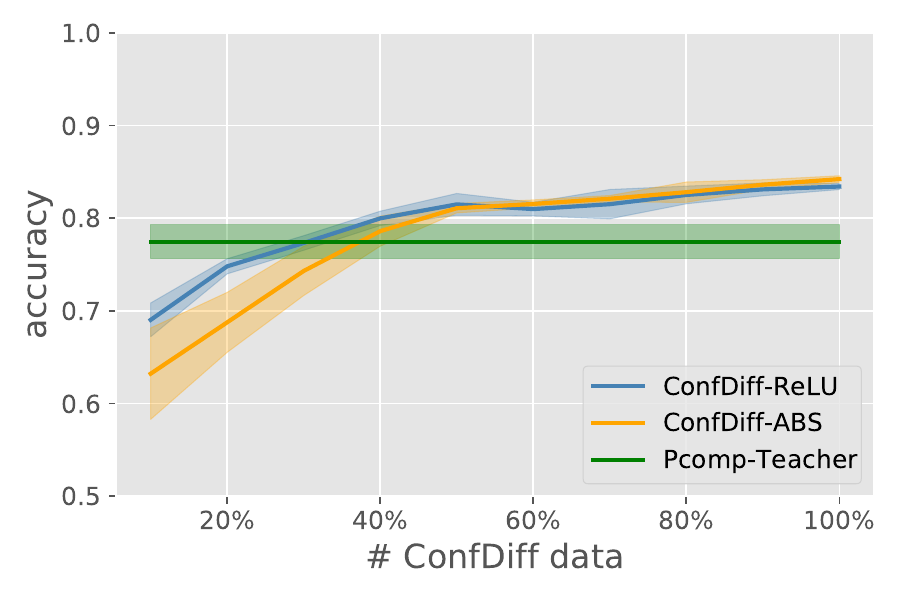}
  }
  \\  
  \caption{Classification performance of ConfDiff-ReLU and ConfDiff-ABS given a fraction of training data as well as Pcomp-Teacher given 100\% of training data ($\pi_{+}=0.5$).}\label{diff_n}
\end{figure*}
\subsection{Performance with Fewer Training Data}
We conducted experiments by changing the fraction of training data for ConfDiff-ReLU and ConfDiff-ABS (100\% indicated that all the ConfDiff data were used for training). For comparison, we used 100\% of training data for Pcomp-Teacher during the training process. Figure~\ref{diff_n} shows the results with $\pi_{+}=0.5$, and more experimental results can be found in Appendix~\ref{apd_exp_increasing}. We can observe that the classification performance of our proposed approaches is still advantageous given a fraction of training data. Our approaches can achieve superior or comparable performance even when only 10\% of training data are used. It elucidates that leveraging confidence difference may be more effective than increasing the number of training examples.  
\begin{figure*}[ht]
  \centering
  \subfigure[ConfDiff-Unbiased]{
    \includegraphics[width=4.2cm]{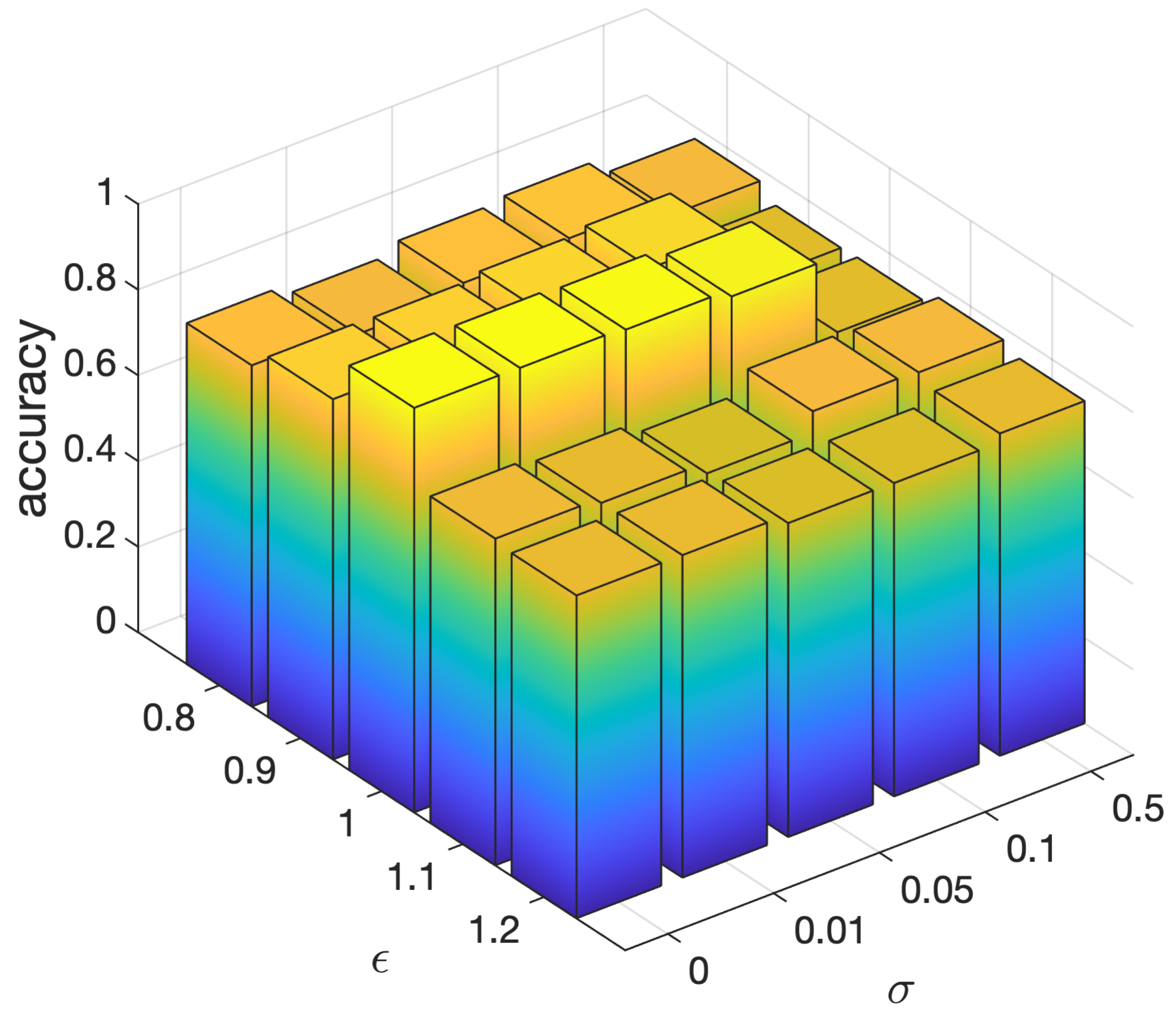}
  }
  \subfigure[ConfDiff-ReLU]{
    \includegraphics[width=4.2cm]{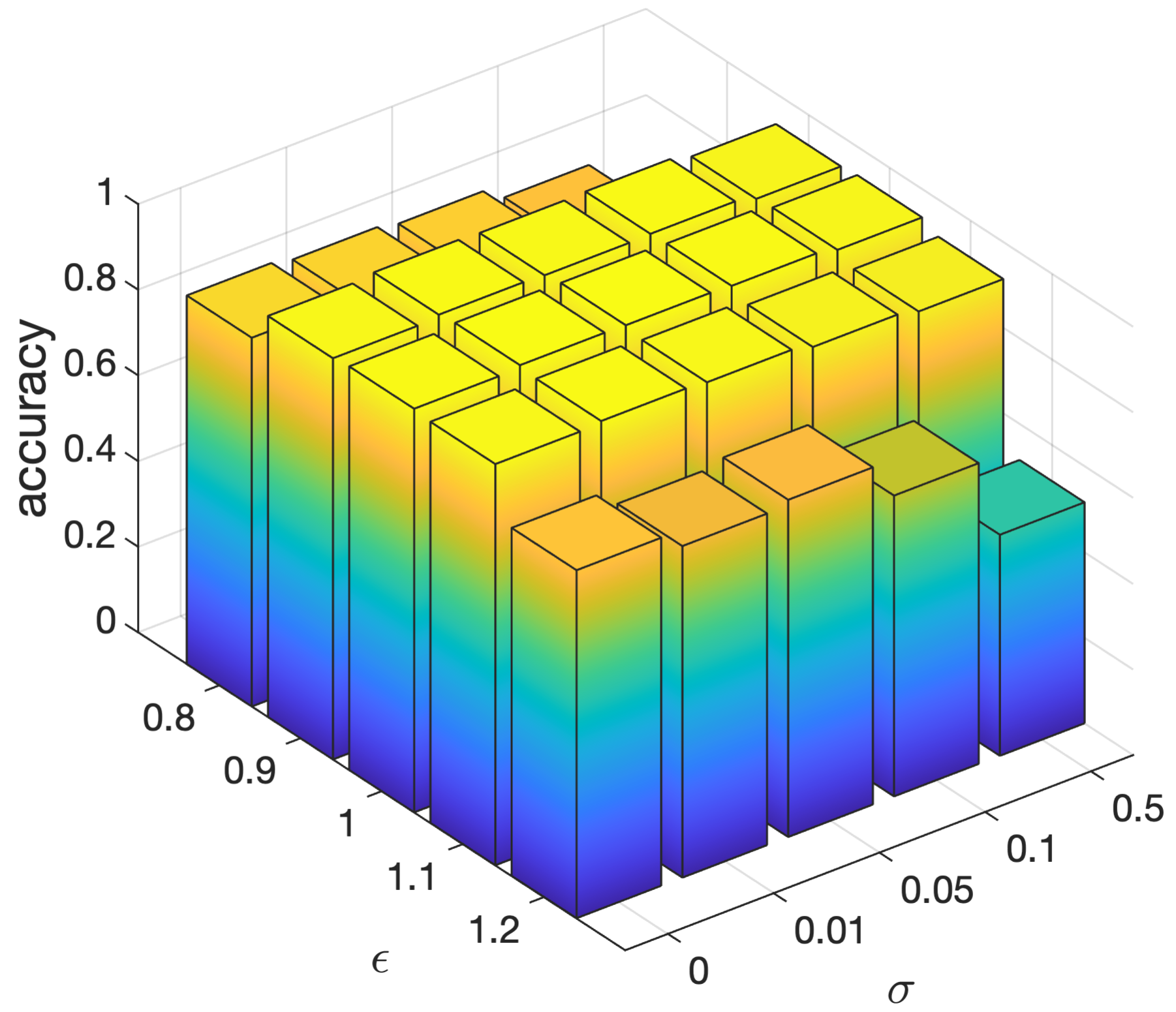}
  }
  \subfigure[ConfDiff-ABS]{
    \includegraphics[width=4.2cm]{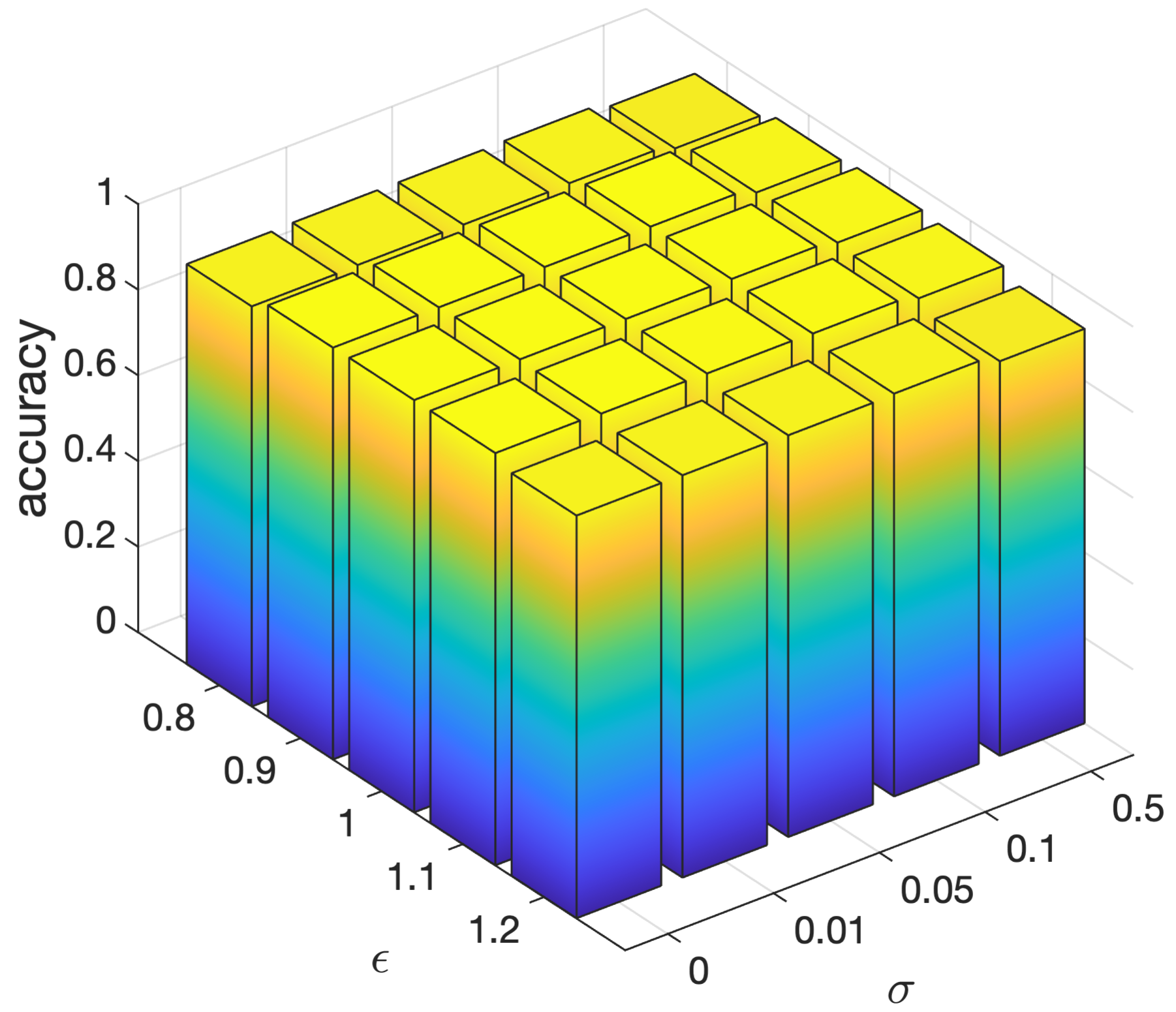}
  }\\
  \subfigure[ConfDiff-Unbiased]{
    \includegraphics[width=4.2cm]{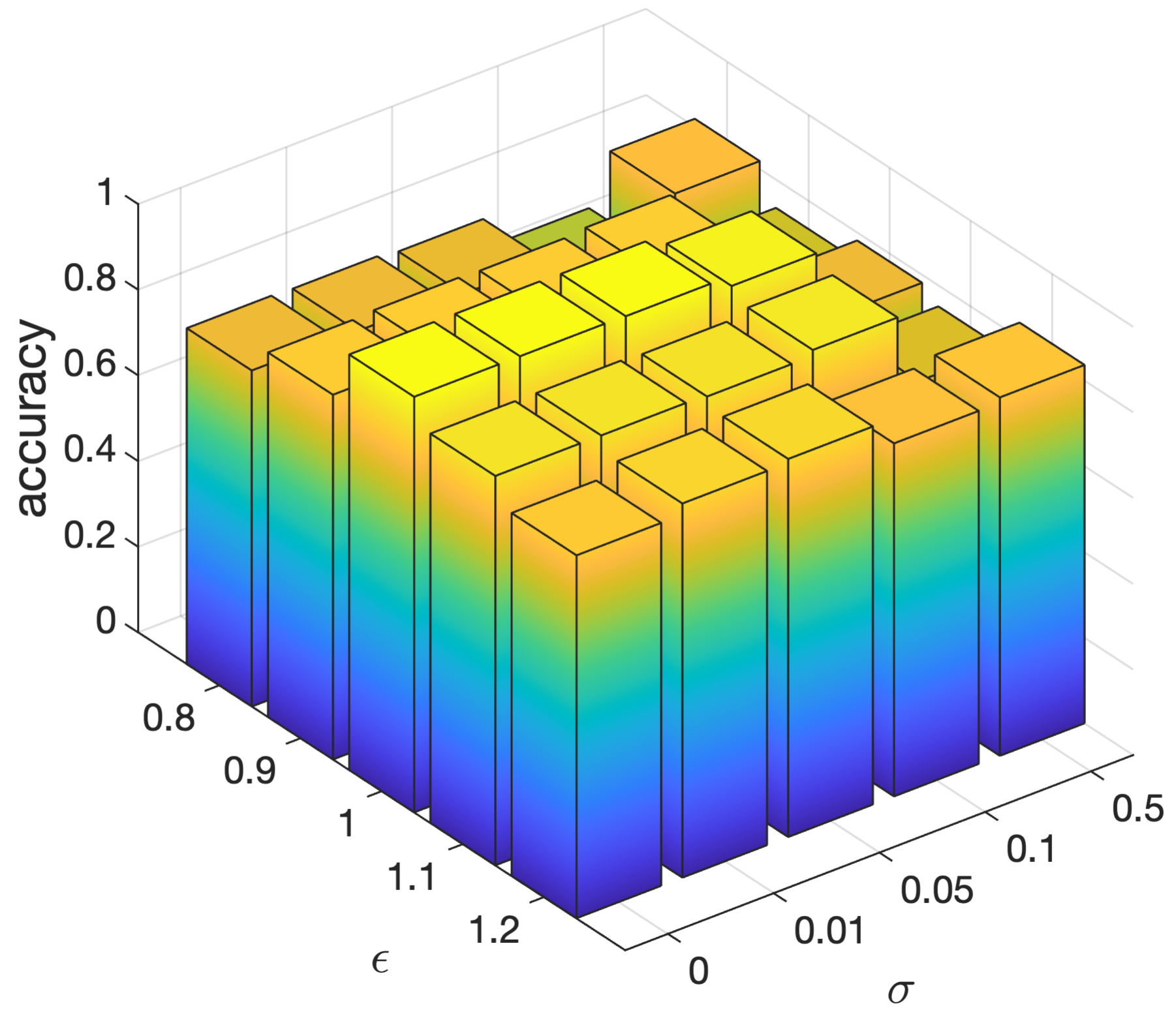}
  }
  \subfigure[ConfDiff-ReLU]{
    \includegraphics[width=4.2cm]{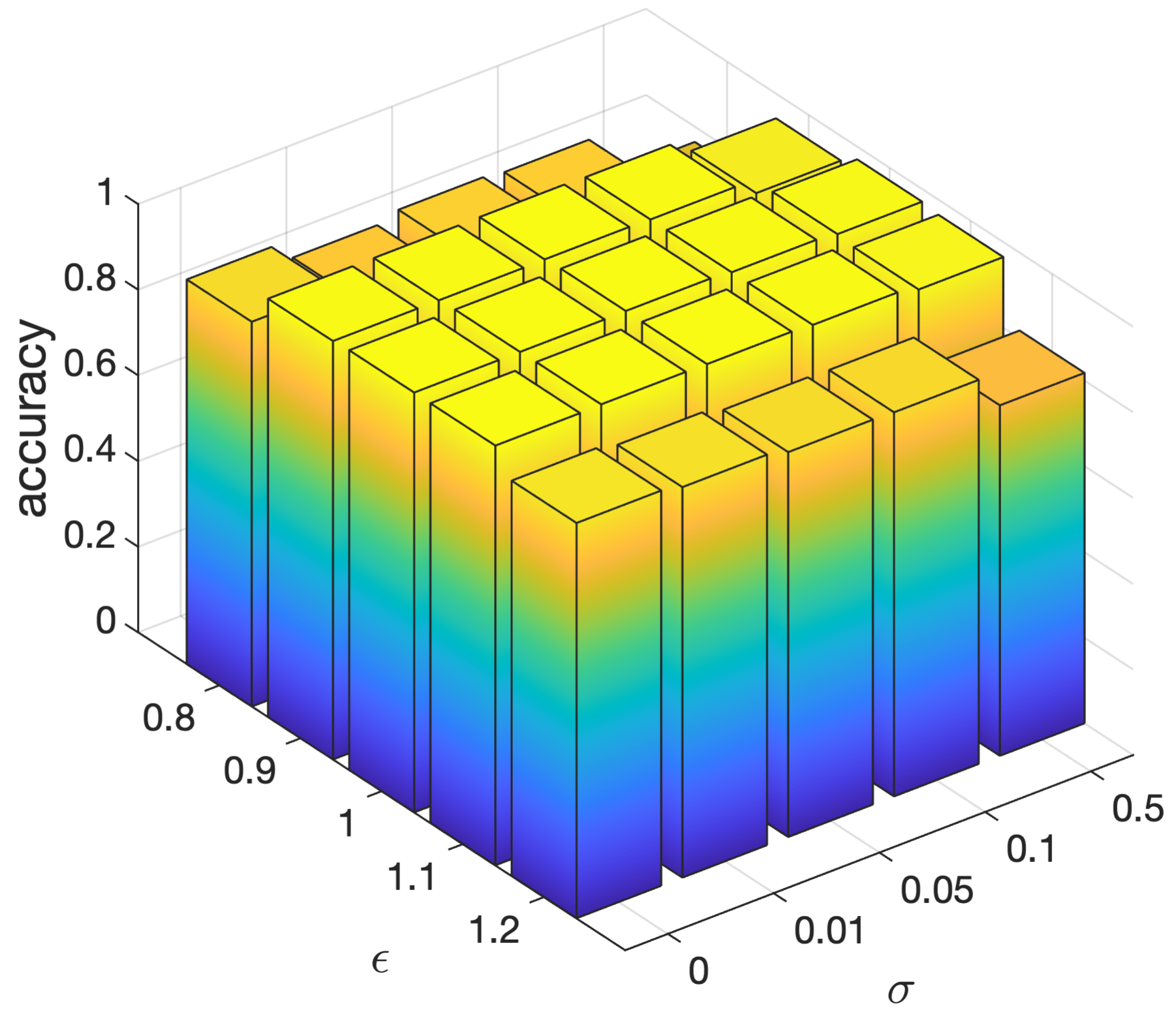}
  }
  \subfigure[ConfDiff-ABS]{
    \includegraphics[width=4.2cm]{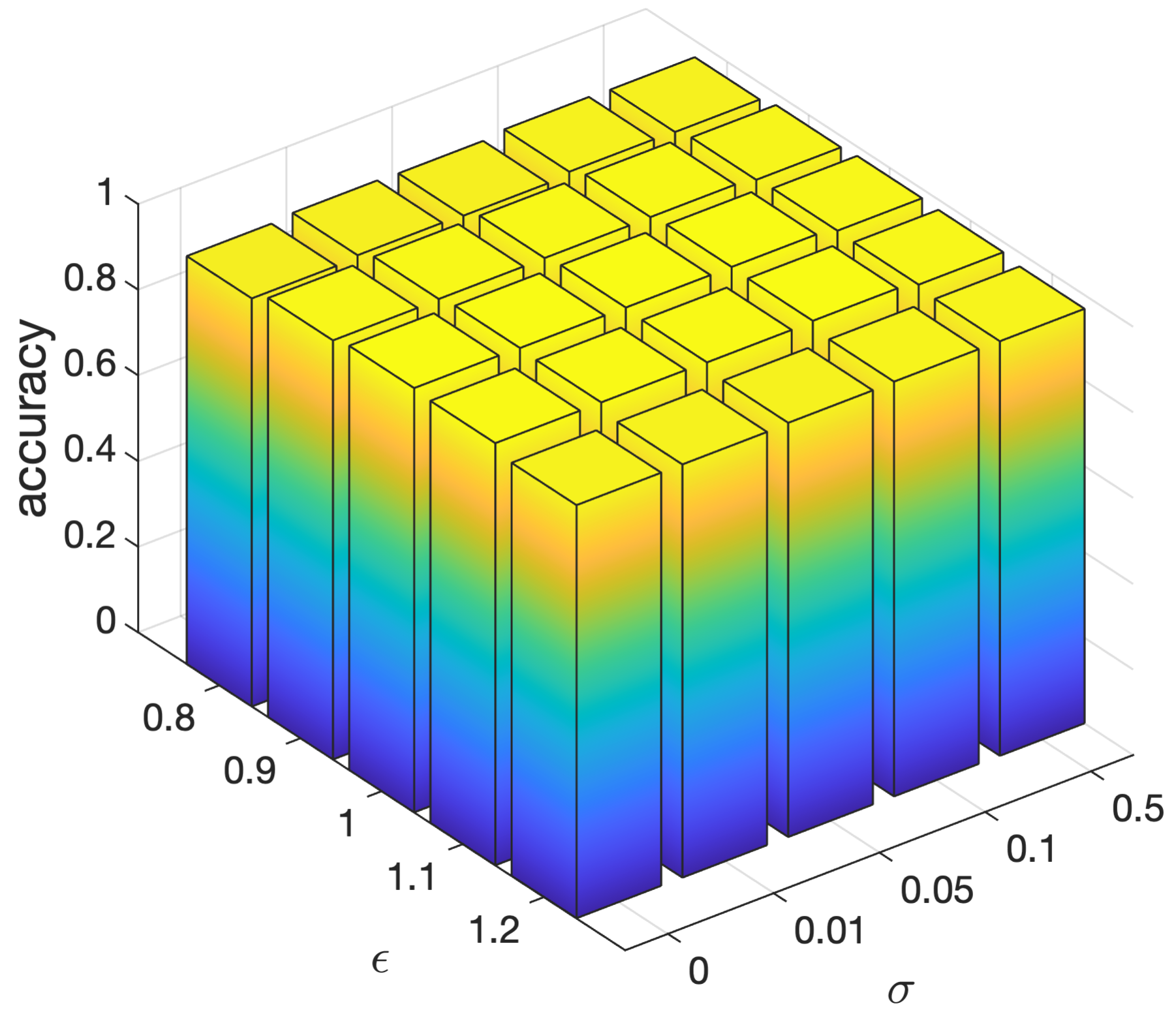}
  }
  \\
  \caption{Classification accuracy on MNIST (the first row) and Pendigits (the second row) with $\pi_{+}=0.5$ given an inaccurate class prior probability and noisy confidence difference. }\label{noise_exp}
\end{figure*}
\subsection{Analysis on Robustness}\label{noise_section}
In this subsection, we investigate the influence of an inaccurate class prior probability and noisy confidence difference on the generalization performance of the proposed approaches. Specifically, let $\bar{\pi}_{+}=\epsilon\pi_{+}$ denote the corrupted class prior probability with $\epsilon$ being a real number around 1. Let $\bar{c}_{i}=\epsilon'_{i} c_{i}$ denote the noisy confidence difference where $\epsilon'_{i}$ is sampled from a normal distribution $\mathcal{N}(1, \sigma^{2})$. Figure~\ref{noise_exp} shows the classification performance of our proposed approaches on MNIST and Pendigits ($\pi_{+}=0.5$) with different $\epsilon$ and $\sigma$. It is demonstrated that the performance degenerates with $\epsilon=0.8$ or $\epsilon=1.2$ on some data sets, which indicates that it is important to estimate the class prior accurately. 

\section{Conclusion}
In this paper, we dived into a novel weakly supervised learning setting where only unlabeled data pairs equipped with confidence difference were given. To solve the problem, an unbiased risk estimator was derived to perform empirical risk minimization. An estimation error bound was established to show that the optimal parametric convergence rate can be achieved. Furthermore, a risk correction approach was introduced to alleviate overfitting issues. Extensive experimental results validated the superiority of our proposed approaches. In future, it would be promising to apply our approaches in real-world scenarios and multi-class classification settings. 

\bibliographystyle{unsrtnat}
\bibliography{ConfDiff}
\newpage
\appendix
\section{Proof of Theorem~\ref{ccrisk}}\label{proof_of_ccrisk}
Before giving the proof of Theorem 1, we begin with the following lemmas:
\setcounter{lemma}{1}
\begin{lemma}\label{lemma_df}
The confidence difference $c(\bm{x}, \bm{x}')$ can be equivalently expressed as
\begin{align}
c(\bm{x}, \bm{x}') 
&= \frac{\pi_{+}p(\bm{x})p_{+}(\bm{x}')-\pi_{+}p_{+}(\bm{x})p(\bm{x}') }{p(\bm{x})p(\bm{x}')} \\
&= \frac{\pi_{-}p_{-}(\bm{x})p(\bm{x}')-\pi_{-}p(\bm{x})p_{-}(\bm{x}')}{p(\bm{x})p(\bm{x}')}
\end{align}
\end{lemma}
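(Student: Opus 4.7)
The plan is to derive both equalities directly from Definition~\ref{df1} using Bayes' rule and the law of total probability, so the argument reduces to elementary algebraic manipulation rather than any probabilistic insight.

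First I would apply Bayes' rule to rewrite the class posterior probabilities appearing in Definition~\ref{df1}, namely $p(y=+1|\bm{x}) = \pi_{+} p_{+}(\bm{x})/p(\bm{x})$ and $p(y'=+1|\bm{x}') = \pi_{+} p_{+}(\bm{x}')/p(\bm{x}')$. Substituting these into $c(\bm{x},\bm{x}') = p(y'=+1|\bm{x}') - p(y=+1|\bm{x})$ and placing the two terms over the common denominator $p(\bm{x})p(\bm{x}')$ immediately yields the first claimed expression.

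For the second equality, I would use the total-probability decomposition $p(\bm{x}) = \pi_{+} p_{+}(\bm{x}) + \pi_{-} p_{-}(\bm{x})$ to eliminate $\pi_{+} p_{+}$ in favor of $\pi_{-} p_{-}$, substituting $\pi_{+} p_{+}(\bm{x}) = p(\bm{x}) - \pi_{-} p_{-}(\bm{x})$ (and analogously at $\bm{x}'$) into the numerator of the first expression. The $p(\bm{x})p(\bm{x}')$ cross terms cancel, leaving precisely $\pi_{-} p_{-}(\bm{x}) p(\bm{x}') - \pi_{-} p(\bm{x}) p_{-}(\bm{x}')$ in the numerator, which is the second identity.

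There is no real obstacle here: the whole lemma is a two-line calculation once one writes out Bayes' rule and the marginal decomposition. The only thing to be slightly careful about is bookkeeping of signs when substituting $\pi_{+} p_{+} = p - \pi_{-} p_{-}$ at both arguments, to confirm the $p(\bm{x})p(\bm{x}')$ terms indeed cancel and not reinforce. This lemma is purely a convenient rewriting that will later be used inside the proof of Theorem~\ref{ccrisk} to re-express expectations under $p(\bm{x},\bm{x}')$ in terms of class-conditional densities, which is why both forms are recorded.
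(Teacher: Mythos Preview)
Your proposal is correct and essentially matches the paper's proof: the first equality is derived identically via Bayes' rule and a common denominator. For the second equality the paper instead rewrites $c(\bm{x},\bm{x}')=p(y=-1|\bm{x})-p(y'=-1|\bm{x}')$ and applies Bayes' rule for the negative class, whereas you substitute $\pi_{+}p_{+}=p-\pi_{-}p_{-}$ into the first numerator; both routes are one-line computations of the same identity and neither offers any advantage over the other.
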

\begin{proof}
On one hand,
\begin{align}
c(\bm{x}, \bm{x}') 
&= p(y'=+1|\bm{x}') - p(y=+1|\bm{x}) \nonumber \\
&= \frac{p(\bm{x}',y'=+1)}{p(\bm{x}')} - \frac{p(\bm{x},y=+1)}{p(\bm{x})} \nonumber \\
&= \frac{\pi_{+}p_{+}(\bm{x}')}{p(\bm{x}')} - \frac{\pi_{+}p_{+}(\bm{x})}{p(\bm{x})} \nonumber \\
&= \frac{\pi_{+}p(\bm{x})p_{+}(\bm{x}')-\pi_{+}p_{+}(\bm{x})p(\bm{x}') }{p(\bm{x})p(\bm{x}')}. \nonumber
\end{align}
On the other hand, 
\begin{align}
c(\bm{x}, \bm{x}') 
&= p(y'=+1|\bm{x}') - p(y=+1|\bm{x}) \nonumber \\
&= (1-p(y'=0|\bm{x}')) - (1-p(y=0|\bm{x})) \nonumber \\
&= p(y=0|\bm{x}) - p(y'=0|\bm{x}') \nonumber \\
&= \frac{p(\bm{x},y=0)}{p(\bm{x})} - \frac{p(\bm{x}',y=0)}{p(\bm{x}')} \nonumber \\
&= \frac{\pi_{-}p_{-}(\bm{x})}{p(\bm{x})} - \frac{\pi_{-}p_{-}(\bm{x}')}{p(\bm{x}')} \nonumber \\
&= \frac{\pi_{-}p_{-}(\bm{x})p(\bm{x}')-\pi_{-}p(\bm{x})p_{-}(\bm{x}')}{p(\bm{x})p(\bm{x}')},\nonumber
\end{align}
which concludes the proof.
\end{proof}
\begin{lemma}\label{lemma_apd1}
The following equations hold:
\begin{align}
\mathbb{E}_{p(\bm{x}, \bm{x}')}[(\pi_{+}-c(\bm{x}, \bm{x}'))\ell(g(\bm{x}),+1)]&=\pi_{+}\mathbb{E}_{p_{+}(\bm{x})}[\ell(g(\bm{x}),+1)], \label{lemma_apd1_eq1} \\
\mathbb{E}_{p(\bm{x}, \bm{x}')}[(\pi_{-}+c(\bm{x}, \bm{x}'))\ell(g(\bm{x}),-1)]&=\pi_{-}\mathbb{E}_{p_{-}(\bm{x})}[\ell(g(\bm{x}),-1)],\label{lemma_apd1_eq2} \\
\mathbb{E}_{p(\bm{x}, \bm{x}')}[(\pi_{+}+c(\bm{x}, \bm{x}'))\ell(g(\bm{x}'),+1)]&=\pi_{+}\mathbb{E}_{p_{+}(\bm{x}')}[\ell(g(\bm{x}'),+1)], \label{lemma_apd1_eq3}\\
\mathbb{E}_{p(\bm{x}, \bm{x}')}[(\pi_{-}-c(\bm{x}, \bm{x}'))\ell(g(\bm{x}'),-1)]&=\pi_{-}\mathbb{E}_{p_{-}(\bm{x}')}[\ell(g(\bm{x}'),-1)]. \label{lemma_apd1_eq4}
\end{align}
\end{lemma}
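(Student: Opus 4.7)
The plan is to verify the four identities by direct calculation, using Lemma~\ref{lemma_df} to rewrite $c(\bm{x},\bm{x}')$ in a form that ``cancels'' nicely against $\pi_{+}$ or $\pi_{-}$, and then exploiting the independence $p(\bm{x},\bm{x}')=p(\bm{x})p(\bm{x}')$ together with the fact that $p$, $p_{+}$, $p_{-}$ all integrate to one. Equations~(\ref{lemma_apd1_eq1}) and (\ref{lemma_apd1_eq2}) are the core cases; equations~(\ref{lemma_apd1_eq3}) and (\ref{lemma_apd1_eq4}) will follow by a symmetry argument, so the real work is only done twice.

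For (\ref{lemma_apd1_eq1}), I would apply the first form of Lemma~\ref{lemma_df} to get
\begin{equation*}
(\pi_{+}-c(\bm{x},\bm{x}'))\,p(\bm{x})p(\bm{x}')
= \pi_{+}p(\bm{x})p(\bm{x}') - \pi_{+}p(\bm{x})p_{+}(\bm{x}') + \pi_{+}p_{+}(\bm{x})p(\bm{x}').
\end{equation*}
Multiplying by $\ell(g(\bm{x}),+1)$, integrating over $(\bm{x},\bm{x}')$, and applying Fubini, the first two terms on the right hand side produce $\pi_{+}\int \ell(g(\bm{x}),+1)p(\bm{x})d\bm{x}$ and $-\pi_{+}\int \ell(g(\bm{x}),+1)p(\bm{x})d\bm{x}$ (because $\int p_{+}(\bm{x}')d\bm{x}'=1$), which cancel. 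The remaining term equals $\pi_{+}\mathbb{E}_{p_{+}(\bm{x})}[\ell(g(\bm{x}),+1)]$, which is the claim. For (\ref{lemma_apd1_eq2}), I would use the second form of Lemma~\ref{lemma_df}, so that
\begin{equation*}
(\pi_{-}+c(\bm{x},\bm{x}'))\,p(\bm{x})p(\bm{x}')
= \pi_{-}p(\bm{x})p(\bm{x}') + \pi_{-}p_{-}(\bm{x})p(\bm{x}') - \pi_{-}p(\bm{x})p_{-}(\bm{x}'),
\end{equation*}
and the same Fubini-plus-normalization argument collapses the first and third terms, leaving $\pi_{-}\mathbb{E}_{p_{-}(\bm{x})}[\ell(g(\bm{x}),-1)]$.

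For (\ref{lemma_apd1_eq3}) and (\ref{lemma_apd1_eq4}), I would observe two symmetries: the joint density $p(\bm{x})p(\bm{x}')$ is invariant under swapping $\bm{x}$ and $\bm{x}'$, while $c(\bm{x}',\bm{x})=-c(\bm{x},\bm{x}')$ by Definition~\ref{df1}. Applying (\ref{lemma_apd1_eq1}) with the roles of $\bm{x}$ and $\bm{x}'$ interchanged yields $\mathbb{E}[(\pi_{+}-c(\bm{x}',\bm{x}))\ell(g(\bm{x}'),+1)]=\pi_{+}\mathbb{E}_{p_{+}(\bm{x}')}[\ell(g(\bm{x}'),+1)]$, and the anti-symmetry of $c$ turns the left hand side into $\mathbb{E}[(\pi_{+}+c(\bm{x},\bm{x}'))\ell(g(\bm{x}'),+1)]$, which is (\ref{lemma_apd1_eq3}). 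The same swap applied to (\ref{lemma_apd1_eq2}) gives (\ref{lemma_apd1_eq4}).

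There is essentially no conceptual obstacle here; the whole proof is bookkeeping. The only care needed is to pick the correct form of Lemma~\ref{lemma_df} (first form when the $\pi_{+}$ terms should cancel, second form for the $\pi_{-}$ terms) and to track signs when using the anti-symmetry $c(\bm{x}',\bm{x})=-c(\bm{x},\bm{x}')$ in the symmetry step. All integrals separate cleanly because of the independence assumption $p(\bm{x},\bm{x}')=p(\bm{x})p(\bm{x}')$, so no measure-theoretic subtleties arise.
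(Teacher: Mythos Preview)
Your proposal is correct and follows essentially the same approach as the paper: the paper also proves (\ref{lemma_apd1_eq1}) and (\ref{lemma_apd1_eq2}) by substituting the first and second forms of Lemma~\ref{lemma_df} respectively, expanding the double integral, and cancelling the two marginal terms, and then deduces (\ref{lemma_apd1_eq3}) and (\ref{lemma_apd1_eq4}) from the swap symmetry $p(\bm{x},\bm{x}')=p(\bm{x}',\bm{x})$ together with the anti-symmetry $c(\bm{x},\bm{x}')=-c(\bm{x}',\bm{x})$.
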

\begin{proof}
Firstly, the proof of Eq.~(\ref{lemma_apd1_eq1}) is given:
\begin{align}
&\mathbb{E}_{p(\bm{x}, \bm{x}')}[(\pi_{+}-c(\bm{x}, \bm{x}'))\ell(g(\bm{x}),+1)] \nonumber \\
=& \int\int \frac{\pi_{+}p(\bm{x})p(\bm{x}')-\pi_{+}p(\bm{x})p_{+}(\bm{x}')+\pi_{+}p_{+}(\bm{x})p(\bm{x}') }{p(\bm{x})p(\bm{x}')}\ell(g(\bm{x}),+1)p(\bm{x}, \bm{x}') \diff \bm{x} \diff \bm{x}' \nonumber \\
=& \int\int (\pi_{+}p(\bm{x})p(\bm{x}')-\pi_{+}p(\bm{x})p_{+}(\bm{x}')+\pi_{+}p_{+}(\bm{x})p(\bm{x}'))\ell(g(\bm{x}),+1) \diff \bm{x} \diff \bm{x}' \nonumber \\
=& \int\pi_{+}p(\bm{x})\ell(g(\bm{x}),+1)\diff \bm{x}\int p(\bm{x}')\diff \bm{x}'-\int\pi_{+}p(\bm{x})\ell(g(\bm{x}),+1)\diff \bm{x}\int p_{+}(\bm{x}')\diff \bm{x}' \nonumber \\
&+\int \pi_{+}p_{+}(\bm{x})\ell(g(\bm{x}),+1)\diff \bm{x}\int p(\bm{x}')\diff \bm{x}' \nonumber \\
=& \int\pi_{+}p(\bm{x})\ell(g(\bm{x}),+1)\diff \bm{x} - \int\pi_{+}p(\bm{x})\ell(g(\bm{x}),+1)\diff \bm{x} + \int \pi_{+}p_{+}(\bm{x})\ell(g(\bm{x}),+1)\diff \bm{x} \nonumber \\
=& \int \pi_{+}p_{+}(\bm{x})\ell(g(\bm{x}),+1)\diff \bm{x} \nonumber \\
=& \pi_{+}\mathbb{E}_{p_{+}(\bm{x})}[\ell(g(\bm{x}),+1)]. \nonumber
\end{align} 
After that, the proof of Eq.~(\ref{lemma_apd1_eq2}) is given:
\begin{align}
&\mathbb{E}_{p(\bm{x}, \bm{x}')}[(\pi_{-}+c(\bm{x}, \bm{x}'))\ell(g(\bm{x}),-1)] \nonumber \\
=& \int\int \frac{\pi_{-}p(\bm{x})p(\bm{x}')+\pi_{-}p_{-}(\bm{x})p(\bm{x}')-\pi_{-}p(\bm{x})p_{-}(\bm{x}') }{p(\bm{x})p(\bm{x}')}\ell(g(\bm{x}),-1)p(\bm{x}, \bm{x}') \diff \bm{x} \diff \bm{x}' \nonumber \\
=& \int\int (\pi_{-}p(\bm{x})p(\bm{x}')+\pi_{-}p_{-}(\bm{x})p(\bm{x}')-\pi_{-}p(\bm{x})p_{-}(\bm{x}'))\ell(g(\bm{x}),-1) \diff \bm{x} \diff \bm{x}' \nonumber \nonumber \\
=& \int\pi_{-}p(\bm{x})\ell(g(\bm{x}),-1)\diff \bm{x}\int p(\bm{x}')\diff \bm{x}'+\int\pi_{-}p_{-}(\bm{x})\ell(g(\bm{x}),-1)\diff \bm{x}\int p(\bm{x}')\diff \bm{x}' \nonumber \\
&-\int \pi_{-}p(\bm{x})\ell(g(\bm{x}),-1)\diff \bm{x}\int p_{-}(\bm{x}')\diff \bm{x}' \nonumber \\
=& \int\pi_{-}p(\bm{x})\ell(g(\bm{x}),-1)\diff \bm{x} + \int\pi_{-}p_{-}(\bm{x})\ell(g(\bm{x}),-1)\diff \bm{x} - \int \pi_{-}p(\bm{x})\ell(g(\bm{x}),-1)\diff \bm{x} \nonumber \\
=& \int\pi_{-}p_{-}(\bm{x})\ell(g(\bm{x}),-1)\diff \bm{x} \nonumber \\
=& \pi_{-}\mathbb{E}_{p_{-}(\bm{x})}[\ell(g(\bm{x}),-1)]. \nonumber
\end{align} 
It can be noticed that $c(\bm{x}, \bm{x}')=-c(\bm{x}', \bm{x})$ and $p(\bm{x}, \bm{x}')=p(\bm{x}', \bm{x})$. Therefore, it can be deduced naturally that  $\mathbb{E}_{p(\bm{x}, \bm{x}')}[(\pi_{+}-c(\bm{x}, \bm{x}'))\ell(g(\bm{x}),+1)]=\mathbb{E}_{p(\bm{x}',\bm{x})}[(\pi_{+}+c(\bm{x}', \bm{x}))\ell(g(\bm{x}),+1)]$. Because $\bm{x}$ and $\bm{x}'$ are symmetric, we can swap them and deduce Eq.~(\ref{lemma_apd1_eq3}). Eq.~(\ref{lemma_apd1_eq4}) can be deduced in the same manner, which concludes the proof.
\end{proof}
Based on Lemma~\ref{lemma_apd1}, the proof of Theorem~\ref{ccrisk} is given.
\begin{proof}[Proof of Theorem~\ref{ccrisk}]
To begin with, it can be noticed that $\mathbb{E}_{p_{+}(\bm{x})}[\ell(g(\bm{x}),+1)]=\mathbb{E}_{p_{+}(\bm{x}')}[\ell(g(\bm{x}'),+1)]$ and $\mathbb{E}_{p_{-}(\bm{x})}[\ell(g(\bm{x}),-1)]=\mathbb{E}_{p_{-}(\bm{x}')}[\ell(g(\bm{x}'),-1)]$. Then, by summing up all the equations from Eq.~(\ref{lemma_apd1_eq1}) to Eq.~(\ref{lemma_apd1_eq4}), we can get the following equation:
\begin{align}
&\mathbb{E}_{p(\bm{x}, \bm{x}')}[\mathcal{L}_{+}(g(\bm{x}),g(\bm{x}'))+\mathcal{L}_{-}(g(\bm{x}),g(\bm{x}'))] \nonumber \\
&= 2\pi_{+}\mathbb{E}_{p_{+}(\bm{x})}[\ell(g(\bm{x}),+1)] + 2\pi_{-}\mathbb{E}_{p_{-}(\bm{x})}[\ell(g(\bm{x}),-1)] \nonumber
\end{align}
After dividing each side of the equation above by 2, we can obtain Theorem~\ref{ccrisk}.
\end{proof}
\section{Analysis on Variance of Risk Estimator}
\subsection{Proof of Lemma~\ref{lemma_mvre}}
\begin{proof}[\unskip \nopunct]
Based on Lemma~\ref{lemma_apd1}, it can be observed that 
\begin{align}
\mathbb{E}_{p(\bm{x}, \bm{x}')}[\mathcal{L}(\bm{x}, \bm{x}')] = &\mathbb{E}_{p(\bm{x}, \bm{x}')}[(\pi_{+}-c(\bm{x}, \bm{x}'))\ell(g(\bm{x}),+1)+(\pi_{-}-c(\bm{x}, \bm{x}'))\ell(g(\bm{x}'),-1)] \nonumber \\
=&\pi_{+}\mathbb{E}_{p_{+}(\bm{x})}[\ell(g(\bm{x}),+1)]+\pi_{-}\mathbb{E}_{p_{-}(\bm{x}')}[\ell(g(\bm{x}'),-1)] \nonumber \\
=&\pi_{+}\mathbb{E}_{p_{+}(\bm{x})}[\ell(g(\bm{x}),+1)]+\pi_{-}\mathbb{E}_{p_{-}(\bm{x})}[\ell(g(\bm{x}),-1)] \nonumber \\
=&R(g) \nonumber
\end{align}
and
\begin{align}
\mathbb{E}_{p(\bm{x}, \bm{x}')}[\mathcal{L}(\bm{x}', \bm{x})] = &\mathbb{E}_{p(\bm{x}, \bm{x}')}[(\pi_{+}+c(\bm{x}, \bm{x}'))\ell(g(\bm{x}'),+1)+(\pi_{-}+c(\bm{x}, \bm{x}'))\ell(g(\bm{x}),-1)] \nonumber \\
=&\pi_{-}\mathbb{E}_{p_{-}(\bm{x})}[\ell(g(\bm{x}),-1)]+\pi_{+}\mathbb{E}_{p_{+}(\bm{x}')}[\ell(g(\bm{x}'),+1)] \nonumber \\
=&\pi_{-}\mathbb{E}_{p_{-}(\bm{x})}[\ell(g(\bm{x}),-1)]+\pi_{+}\mathbb{E}_{p_{+}(\bm{x})}[\ell(g(\bm{x}),+1)] \nonumber \\
=&R(g). \nonumber
\end{align}
Therefore, for an arbitrary weight $\alpha\in[0,1]$,
\begin{align}
R(g)=&\alpha R(g) + (1-\alpha)R(g) \nonumber \\
=&\alpha\mathbb{E}_{p(\bm{x}, \bm{x}')}[\mathcal{L}(\bm{x}, \bm{x}')] + (1-\alpha)\mathbb{E}_{p(\bm{x}, \bm{x}')}[\mathcal{L}(\bm{x}', \bm{x})], \nonumber
\end{align}
which indicates that
\begin{equation}
\frac{1}{n}\sum_{i=1}^{n}(\alpha \mathcal{L}(\bm{x}_{i}, \bm{x}'_{i})+(1-\alpha)\mathcal{L}(\bm{x}'_{i}, \bm{x}_{i})) \nonumber
\end{equation}
is also an unbiased risk estimator and concludes the proof.
\end{proof}
\subsection{Proof of Theorem~\ref{min_var_thm}}
In this subsection, we show that Eq.~(8) in the main paper achieves the minimum variance of 
\begin{equation}
S(g;\alpha)=\frac{1}{n}\sum_{i=1}^{n}(\alpha \mathcal{L}(\bm{x}_{i}, \bm{x}'_{i})+(1-\alpha)\mathcal{L}(\bm{x}'_{i}, \bm{x}_{i})) \nonumber
\end{equation}
w.r.t.~any $\alpha\in[0,1]$. To begin with, we introduce the following notations:
\begin{align}
\mu_{1}&\triangleq \mathbb{E}_{p(\bm{x}, \bm{x}')}[(\frac{1}{n}\sum_{i=1}^{n}\mathcal{L}(\bm{x}_{i}, \bm{x}'_{i}))^2]=\mathbb{E}_{p(\bm{x}, \bm{x}')}[(\frac{1}{n}\sum_{i=1}^{n}\mathcal{L}(\bm{x}'_{i}, \bm{x}_{i}))^2], \nonumber \\
\mu_{2}&\triangleq \mathbb{E}_{p(\bm{x}, \bm{x}')} [\frac{1}{n^2}\sum_{i=1}^{n}\mathcal{L}(\bm{x}_{i}, \bm{x}'_{i})\sum_{i=1}^{n}\mathcal{L}(\bm{x}'_{i}, \bm{x}_{i})]. \nonumber
\end{align}
Furthermore, according to Lemma~\ref{lemma_mvre} in the main paper, we have 
\begin{equation}
\mathbb{E}_{p(\bm{x}, \bm{x}')}[S(g;\alpha)]=R(g). \nonumber
\end{equation}
Then, we provide the proof of Theorem~\ref{min_var_thm} as follows.
\begin{proof}[Proof of Theorem~\ref{min_var_thm}]
\begin{align}
{\rm Var}(S(g;\alpha))=&\mathbb{E}_{p(\bm{x}, \bm{x}')}[(S(g;\alpha)-R(g))^2] \nonumber \\
=&\mathbb{E}_{p(\bm{x}, \bm{x}')}[S(g;\alpha)^2]-R(g)^2 \nonumber \\
=&\alpha^2 \mathbb{E}_{p(\bm{x}, \bm{x}')}[(\frac{1}{n}\sum_{i=1}^{n}\mathcal{L}(\bm{x}_{i}, \bm{x}'_{i}))^2] + (1-\alpha)^2\mathbb{E}_{p(\bm{x}, \bm{x}')}[(\frac{1}{n}\sum_{i=1}^{n}\mathcal{L}(\bm{x}'_{i}, \bm{x}_{i}))^2] \nonumber \\
&+2\alpha(1-\alpha)\mathbb{E}_{p(\bm{x}, \bm{x}')} [\frac{1}{n^2}\sum_{i=1}^{n}\mathcal{L}(\bm{x}_{i}, \bm{x}'_{i})\sum_{i=1}^{n}\mathcal{L}(\bm{x}'_{i}, \bm{x}_{i})] - R(g)^2 \nonumber \\
=&\mu_{1}\alpha^2+\mu_{1}(1-\alpha)^2+2\mu_{2}\alpha(1-\alpha)- R(g)^2 \nonumber \\
=&(2\mu_{1}-2\mu_{2})(\alpha-\frac{1}{2})^2+\frac{1}{2}(\mu_{1}+\mu_{2})-R(g)^2. \nonumber
\end{align}
Besides, it can be observed that 
\begin{equation}
2\mu_{1}-2\mu_{2}=\mathbb{E}_{p(\bm{x}, \bm{x}')}[(\frac{1}{n}\sum_{i=1}^{n}(\mathcal{L}(\bm{x}_{i}, \bm{x}'_{i})-\mathcal{L}(\bm{x}'_{i}, \bm{x}_{i})))^2] \geq 0. \nonumber
\end{equation}
Therefore, ${\rm Var}(S(g;\alpha))$ achieves the minimum value when $\alpha=1/2$, which concludes the proof.
\end{proof}
\section{Proof of Theorem~\ref{eeb}}
To begin with, we give the definition of Rademacher complexity. 
\begin{definition}[Rademacher complexity] Let $\mathcal{X}_{n}=\{\bm{x}_{1}, \cdots \bm{x}_{n}\}$ denote $n$ i.i.d. random variables drawn from a probability distribution with density $p(\bm{x})$, $\mathcal{G}=\{g:\mathcal{X}\mapsto \mathbb{R}\}$ denote a class of measurable functions, and $\bm{\sigma}=(\sigma_{1}, \sigma_{2}, \cdots, \sigma_{n})$ denote Rademacher variables taking values from $\{+1, -1\}$ uniformly. Then, the (expected) Rademacher complexity of $\mathcal{G}$ is defined as
\begin{equation}
\mathfrak{R}_{n}(\mathcal{G})=\mathbb{E}_{\mathcal{X}_{n}} \mathbb{E}_{\bm{\sigma}}\left[\sup _{g \in \mathcal{G}} \frac{1}{n} \sum_{i=1}^{n} \sigma_{i} g(\bm{x}_{i})\right].
\end{equation}
\end{definition}
Let $\mathcal{D}_{n} \stackrel{\text { i.i.d. }}{\sim}p(\bm{x}, \bm{x}')$ denote $n$ pairs of ConfDiff data and $\mathcal{L}_{\rm CD}(g;\bm{x}_{i}, \bm{x}'_{i})=(\mathcal{L}(\bm{x}, \bm{x}')+\mathcal{L}(\bm{x}', \bm{x}))/2$, then we introduce the following lemma.
\begin{lemma}\label{radamacher_lemma}
\begin{equation}
\bar{\mathfrak{R}}_{n}(\mathcal{L}_{\rm{CD}}\circ\mathcal{G}) \leq 2L_{\ell}\mathfrak{R}_{n}(\mathcal{G}), \nonumber
\end{equation}
where $\mathcal{L}_{\rm{CD}}\circ\mathcal{G}=\{\mathcal{L}_{\rm{CD}}\circ g | g \in \mathcal{G}\}$ and $\bar{\mathfrak{R}}_{n}(\cdot)$ is the Rademacher complexity over ConfDiff data pairs $\mathcal{D}_{n}$ of size $n$.
\end{lemma}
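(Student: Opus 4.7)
The plan is to unfold $\mathcal{L}_{\rm CD}$ into its four loss components, exploit the swap-symmetry of the ConfDiff pair distribution together with subadditivity of the supremum to reduce to one-sample Rademacher averages, and then invoke Talagrand's contraction lemma on each piece.

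First, using $c(\bm{x}',\bm{x})=-c(\bm{x},\bm{x}')$, I would write
\begin{equation*}
\mathcal{L}_{\rm CD}(g;\bm{x}_i,\bm{x}'_i) = \tfrac{1}{2}\bigl[\mathcal{L}(\bm{x}_i,\bm{x}'_i) + \mathcal{L}(\bm{x}'_i,\bm{x}_i)\bigr].
\end{equation*}
By subadditivity of the supremum, $\bar{\mathfrak{R}}_n(\mathcal{L}_{\rm CD}\circ\mathcal{G})$ is bounded by the average of the Rademacher averages of $\mathcal{L}(\bm{x}_i,\bm{x}'_i)$ and $\mathcal{L}(\bm{x}'_i,\bm{x}_i)$; the symmetry of $p(\bm{x},\bm{x}')=p(\bm{x})p(\bm{x}')$ under $\bm{x}\leftrightarrow\bm{x}'$ renders these two averages equal, reducing the problem to bounding
\begin{equation*}
\mathbb{E}\sup_{g\in\mathcal{G}}\frac{1}{n}\sum_{i=1}^n \sigma_i\bigl[(\pi_+-c_i)\ell(g(\bm{x}_i),+1) + (\pi_--c_i)\ell(g(\bm{x}'_i),-1)\bigr].
\end{equation*}
A further subadditivity split yields two one-sample Rademacher averages, typified by $\mathbb{E}\sup_g\tfrac{1}{n}\sum_i\sigma_i(\pi_+-c_i)\ell(g(\bm{x}_i),+1)$.

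For each such average I would apply Talagrand's contraction lemma to the per-index map $z\mapsto(\pi_+-c_i)\ell(z,+1)$ (and analogously $z\mapsto(\pi_--c_i)\ell(z,-1)$): each is Lipschitz with constant at most $L_\ell$ in $z$, so after subtracting its value at $z=0$---whose Rademacher sum has zero expectation and can be dropped---contraction bounds each piece by $L_\ell\mathfrak{R}_n(\mathcal{G})$. Summing the two pieces yields the claimed $2L_\ell\mathfrak{R}_n(\mathcal{G})$.

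The main obstacle is the careful use of contraction with the data-dependent scalars $(\pi_\pm \pm c_i)$. The textbook Ledoux--Talagrand statement is given for a single Lipschitz $\phi$ satisfying $\phi(0)=0$; to accommodate per-index $\phi_i(z)=(\pi_+-c_i)\ell(z,+1)$, I would invoke the variant that allows distinct $\phi_i$ sharing a common Lipschitz bound and normalize each by subtracting $\phi_i(0)$---the subtracted terms form a Rademacher sum with zero mean and thus disappear from the bound. The swap-symmetry step further relies on the invariance of $p(\bm{x},\bm{x}')$ under $\bm{x}\leftrightarrow\bm{x}'$, which is guaranteed by the independence assumption $p(\bm{x},\bm{x}')=p(\bm{x})p(\bm{x}')$ built into the ConfDiff data generation process.
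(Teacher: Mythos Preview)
Your symmetry reduction and the overall architecture are sound, but the contraction step contains a quantitative error that breaks the constant. You assert that each per-index map $\phi_i(z)=(\pi_+-c_i)\ell(z,+1)$ is $L_\ell$-Lipschitz; in fact its Lipschitz constant is $|\pi_+-c_i|\,L_\ell$, and $|\pi_+-c_i|$ need not be bounded by $1$. Writing $r_i=p(y=+1\mid\bm{x}_i)$ and $r'_i=p(y'=+1\mid\bm{x}'_i)$, one has $\pi_+-c_i=\pi_+-r'_i+r_i$, so for instance $\pi_+=0.5$, $r_i=1$, $r'_i=0$ gives $c_i=-1$ and $|\pi_+-c_i|=1.5$. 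After your subadditivity split, the best uniform Lipschitz bounds are $\sup_i|\pi_+-c_i|\leq\pi_++1$ and $\sup_i|\pi_--c_i|\leq\pi_-+1$, so Talagrand contraction yields at most $(\pi_++1)L_\ell\mathfrak{R}_n(\mathcal{G})+(\pi_-+1)L_\ell\mathfrak{R}_n(\mathcal{G})=3L_\ell\mathfrak{R}_n(\mathcal{G})$, not $2L_\ell\mathfrak{R}_n(\mathcal{G})$.

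The paper gets the sharper constant precisely by \emph{not} splitting early: it keeps all four loss terms of $\mathcal{L}_{\rm CD}$ together and bounds the Lipschitz constant of the full map by $\tfrac{1}{2}\bigl(|\pi_+-c_i|+|\pi_--c_i|+|\pi_++c_i|+|\pi_-+c_i|\bigr)L_\ell$, then does a case analysis on the position of $c_i$ relative to $\pm\pi_\pm$ to show this sum of absolute values never exceeds $4$, hence the combined constant is $\leq 2L_\ell$. The cancellation between the $-c_i$ and $+c_i$ terms is exactly what your early subadditivity split throws away. To salvage your route with the stated constant, you would need to retain the pairing of $(\pi_+-c_i)$ with $(\pi_++c_i)$ (and likewise for $\pi_-$) before invoking contraction, which effectively brings you back to the paper's argument.
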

\begin{proof}
\begin{align}
\bar{\mathfrak{R}}_{n}(\mathcal{L}_{\rm{CD}}\circ\mathcal{G}) =&\mathbb{E}_{\mathcal{D}_{n}}\mathbb{E}_{\bm{\sigma}}[\sup _{g \in \mathcal{G}} \frac{1}{n} \sum_{i=1}^{n} \sigma_{i} \mathcal{L}_{\rm CD}(g;\bm{x}_{i}, \bm{x}'_{i})] \nonumber \\
=&\mathbb{E}_{\mathcal{D}_{n}}\mathbb{E}_{\bm{\sigma}}[\sup _{g \in \mathcal{G}} \frac{1}{2n} \sum_{i=1}^{n} \sigma_{i}((\pi_{+}-c_{i})\ell(g(\bm{x}_{i}),+1)+(\pi_{-}-c_{i})\ell(g(\bm{x}'_{i}),-1) \nonumber\\
&+(\pi_{+}+c_{i})\ell(g(\bm{x}'_{i}),+1)+(\pi_{-}+c_{i})\ell(g(\bm{x}_{i}),-1))]. \nonumber
\end{align}
Then, we can induce that
\begin{align}
&\|\nabla\mathcal{L}_{\rm CD}(g;\bm{x}_{i}, \bm{x}'_{i})\|_{2} \nonumber \\
=&\|\nabla(\frac{(\pi_{+}-c_{i})\ell(g(\bm{x}_{i}),+1)+(\pi_{-}-c_{i})\ell(g(\bm{x}'_{i}),-1)}{2} \nonumber \\
&+\frac{(\pi_{+}+c_{i})\ell(g(\bm{x}'_{i}),+1)+(\pi_{-}+c_{i})\ell(g(\bm{x}_{i}),-1)}{2})\|_{2} \nonumber \\
\leq &\|\nabla(\frac{(\pi_{+}-c_{i})\ell(g(\bm{x}_{i}),+1)}{2})\|_{2}+\|\nabla(\frac{(\pi_{-}-c_{i})\ell(g(\bm{x}'_{i}),-1)}{2})\|_{2} \nonumber \\
&+\|\nabla(\frac{(\pi_{+}+c_{i})\ell(g(\bm{x}'_{i}),+1)}{2})\|_{2}+\|\nabla(\frac{(\pi_{-}+c_{i})\ell(g(\bm{x}_{i}),-1)}{2})\|_{2} \nonumber \\
\leq &\frac{|\pi_{+}-c_{i}|L_{\ell}}{2}+\frac{|\pi_{-}-c_{i}|L_{\ell}}{2}+\frac{|\pi_{+}+c_{i}|L_{\ell}}{2}+\frac{|\pi_{-}+c_{i}|L_{\ell}}{2}. \label{grad_ub}
\end{align}
Suppose $\pi_{+}\geq\pi_{-}$, the value of RHS of Eq.~(\ref{grad_ub}) can be determined as follows: when $c_{i}\in[-1,-\pi_{+})$, the value is $-2c_{i}L_{\ell}$; when $c_{i}\in[-\pi_{+},-\pi_{-})$, the value is $(\pi_{+}-c_{i})L_{\ell}$; when $c_{i}\in[-\pi_{-},\pi_{-})$, the value is $L_{\ell}$; when $c_{i}\in[\pi_{-},\pi_{+})$, the value is $(\pi_{+}+c_{i})L_{\ell}$; when $c_{i}\in[\pi_{+},1]$, the value is $2c_{i}L_{\ell}$. To sum up,  when $\pi_{+}\geq\pi_{-}$, the value of RHS of Eq.~(\ref{grad_ub}) is less than $2L_{\ell}$. When $\pi_{+}\leq\pi_{-}$, we can deduce that the value of RHS of Eq.~(\ref{grad_ub}) is less than $2L_{\ell}$ in the same way. Therefore,
\begin{align}
\bar{\mathfrak{R}}_{n}(\mathcal{L}_{\rm{CD}}\circ\mathcal{G}) \leq& 2L_{\ell}\mathbb{E}_{\mathcal{D}_{n}}\mathbb{E}_{\bm{\sigma}}[\sup _{g \in \mathcal{G}} \frac{1}{n} \sum_{i=1}^{n} \sigma_{i} g(\bm{x}_{i})] \nonumber \\
=&2L_{\ell}\mathbb{E}_{\mathcal{X}_{n}}\mathbb{E}_{\bm{\sigma}}[\sup _{g \in \mathcal{G}} \frac{1}{n} \sum_{i=1}^{n} \sigma_{i} g(\bm{x}_{i})] \nonumber \\
=&2L_{\ell}\mathfrak{R}_{n}(\mathcal{G}), \nonumber
\end{align}
which concludes the proof.
\end{proof}
After that, we introduce the following lemma.
\begin{lemma}\label{eeb_lemma}
The inequality below hold with probability at least $1-\delta$:
\begin{equation}
\sup _{g \in \mathcal{G}}|R(g)-\widehat{R}_{\rm CD}(g)| \leq 4L_{\ell}\mathfrak{R}_{n}(\mathcal{G})+2C_{\ell}\sqrt{\frac{\ln{2/\delta}}{2n}}. \nonumber
\end{equation}
\end{lemma}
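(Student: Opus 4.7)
The plan is to attack this with the standard three-step recipe for uniform deviation bounds: (i) a bounded-differences concentration via McDiarmid to reduce the supremum to its expectation, (ii) a symmetrization argument with a ghost sample to introduce Rademacher variables, and (iii) a Rademacher/contraction bound, which is already prepared for us in Lemma~\ref{radamacher_lemma}. Symmetry in $\bm{x}$ vs.~$\bm{x}'$ will let us work with the symmetrized summand $\mathcal{L}_{\rm CD}(g;\bm{x},\bm{x}')$ throughout.

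First I would verify that $\mathcal{L}_{\rm CD}(g;\bm{x},\bm{x}')$ is uniformly bounded by a constant on the order of $C_\ell$. The exact constant comes from the same case analysis over $c(\bm{x},\bm{x}')\in[-1,1]$ that the authors already perform in the proof of Lemma~\ref{radamacher_lemma}: the coefficient sum $|\pi_+-c|+|\pi_--c|+|\pi_++c|+|\pi_-+c|$ is bounded (after absorbing the factor $1/2$ in the definition of $\mathcal{L}_{\rm CD}$) by the quantity needed to yield $|\mathcal{L}_{\rm CD}(g;\bm{x},\bm{x}')|\leq C_\ell$ uniformly in $g\in\mathcal{G}$. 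Consequently, swapping any single training pair $(\bm{x}_i,\bm{x}'_i)$ perturbs $\widehat{R}_{\rm CD}(g)$, and hence the supremum $\sup_{g\in\mathcal{G}}(R(g)-\widehat{R}_{\rm CD}(g))$, by at most $2C_\ell/n$. McDiarmid's inequality then yields, with probability at least $1-\delta/2$,
\begin{equation}
\sup_{g\in\mathcal{G}}(R(g)-\widehat{R}_{\rm CD}(g))\leq \mathbb{E}\Bigl[\sup_{g\in\mathcal{G}}(R(g)-\widehat{R}_{\rm CD}(g))\Bigr]+2C_\ell\sqrt{\frac{\ln(2/\delta)}{2n}},\nonumber
\end{equation}
and an identical argument handles $\sup_{g}(\widehat{R}_{\rm CD}(g)-R(g))$; a union bound then controls both tails simultaneously at level $\delta$.

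Next I would bound the expected supremum via symmetrization. Introducing an independent ghost copy of $\mathcal{D}_n$ and i.i.d.\ Rademacher variables $\sigma_i$, the textbook chain of inequalities gives
\begin{equation}
\mathbb{E}\Bigl[\sup_{g\in\mathcal{G}}(R(g)-\widehat{R}_{\rm CD}(g))\Bigr]\leq 2\bar{\mathfrak{R}}_n(\mathcal{L}_{\rm CD}\circ\mathcal{G}),\nonumber
\end{equation}
and Lemma~\ref{radamacher_lemma} upgrades the right-hand side to $4L_\ell\mathfrak{R}_n(\mathcal{G})$. Combining this with the McDiarmid step and the union bound delivers exactly $4L_\ell\mathfrak{R}_n(\mathcal{G})+2C_\ell\sqrt{\ln(2/\delta)/(2n)}$, which is the claim.

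The main obstacle is pinning down the right uniform envelope on $\mathcal{L}_{\rm CD}$ so that the McDiarmid constant lines up with the target coefficient $2C_\ell$: that requires exactly the case analysis on the coefficient sum over $c\in[-1,1]$ that the authors already executed for the Lipschitz constant in Lemma~\ref{radamacher_lemma}. Once that envelope is established, symmetrization and the invocation of Lemma~\ref{radamacher_lemma} are entirely boilerplate.
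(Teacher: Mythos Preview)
Your proposal is correct and mirrors the paper's proof exactly: McDiarmid with bounded-difference constant $2C_\ell/n$, symmetrization to $2\bar{\mathfrak{R}}_n(\mathcal{L}_{\rm CD}\circ\mathcal{G})$, Lemma~\ref{radamacher_lemma}, and a union bound over the two tails. One minor slip worth flagging: the coefficient sum $|\pi_+-c|+|\pi_--c|+|\pi_++c|+|\pi_-+c|$ is bounded by $4$ (not $2$), so the crude envelope on $|\mathcal{L}_{\rm CD}|$ is $2C_\ell$ rather than $C_\ell$; however, the \emph{oscillation} $\sup\mathcal{L}_{\rm CD}-\inf\mathcal{L}_{\rm CD}=\tfrac{C_\ell}{2}\sum_j|a_j|\leq 2C_\ell$, so your McDiarmid constant $2C_\ell/n$ is still correct and nothing downstream changes.
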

\begin{proof}
To begin with, we introduce $\Phi=\sup _{g \in \mathcal{G}}(R(g)-\widehat{R}_{\rm CD}(g))$ and $\bar{\Phi}=\sup _{g \in \mathcal{G}}(R(g)-\widehat{\bar{R}}_{\rm CD}(g))$, where $\widehat{R}_{\rm CD}(g)$ and $\widehat{\bar{R}}_{\rm CD}(g)$ denote the empirical risk over two sets of training examples with exactly one different point $\{(\bm{x}_{i}, \bm{x}'_{i}), c_{i}\}$ and $\{(\bar{\bm{x}}_{i}, \bar{\bm{x}}'_{i}), c(\bar{\bm{x}}_{i}, \bar{\bm{x}}'_{i})\}$ respectively. Then we have
\begin{align}
\bar{\Phi} - \Phi &\leq \sup _{g \in \mathcal{G}}(\widehat{R}_{\rm CD}(g) - \widehat{\bar{R}}_{\rm CD}(g)) \nonumber \\
&\leq \sup _{g \in \mathcal{G}}(\frac{\mathcal{L}_{\rm CD}(g;\bm{x}_{i}, \bm{x}'_{i})-\mathcal{L}_{\rm CD}(g;\bar{\bm{x}}_{i}, \bar{\bm{x}}'_{i})}{n}) \nonumber \\
&\leq \frac{2C_{\ell}}{n}. \nonumber
\end{align}
Accordingly, $\Phi-\bar{\Phi}$ can be bounded in the same way. The following inequalities holds with probability at least $1-\delta/2$ by applying McDiarmid’s inequality:
\begin{equation}
\sup _{g \in \mathcal{G}}(R(g)-\widehat{R}_{\rm CD}(g))\leq \mathbb{E}_{\mathcal{D}_{n}}[\sup _{g \in \mathcal{G}}(R(g)-\widehat{R}_{\rm CD}(g))]+2C_{\ell}\sqrt{\frac{\ln{2/\delta}}{2n}}, \nonumber
\end{equation}
Furthermore, we can bound $\mathbb{E}_{\mathcal{D}_{n}}[\sup _{g \in \mathcal{G}}(R(g)-\widehat{R}_{\rm CD}(g))]$ with Rademacher complexity. It is a routine work to show by symmetrization~\citep{MAA12} that 
\begin{equation}
\mathbb{E}_{\mathcal{D}_{n}}[\sup _{g \in \mathcal{G}}(R(g)-\widehat{R}_{\rm CD}(g))] \leq 2\bar{\mathfrak{R}}_{n}(\mathcal{L}_{\rm{CD}}\circ\mathcal{G}) 
\leq 4L_{\ell}\mathfrak{R}_{n}(\mathcal{G}), \nonumber 
\end{equation}
where the second inequality is from Lemma~\ref{radamacher_lemma}. Accordingly, $\sup _{g \in \mathcal{G}}(\widehat{R}_{\rm CD}(g)-R(g))$ has the same bound. By using the union bound, the following inequality holds with probability at least $1-\delta$: 
\begin{equation}
\sup _{g \in \mathcal{G}}|R(g)-\widehat{R}_{\rm CD}(g)| \leq 4L_{\ell}\mathfrak{R}_{n}(\mathcal{G})+2C_{\ell}\sqrt{\frac{\ln{2/\delta}}{2n}}, \nonumber
\end{equation}
which concludes the proof.
\end{proof}
Finally, the proof of Theorem~\ref{eeb} is provided.
\begin{proof}[Proof of Theorem~\ref{eeb}]
\begin{align}
 R(\widehat{g}_{\rm CD})-R(g^{*}) &=(R(\widehat{g}_{\rm CD})-\widehat{R}_{\rm CD}(\widehat{g}_{\rm CD}))+(\widehat{R}_{\rm CD}(\widehat{g}_{\rm CD})-\widehat{R}_{\rm CD}(g^{*}))+(\widehat{R}_{\rm CD}(g^{*})-R(g^{*})) \nonumber \\ & \leq(R(\widehat{g}_{\rm CD})-\widehat{R}_{\rm CD}(\widehat{g}_{\rm CD}))+(\widehat{R}_{\rm CD}(g^{*})-R(g^{*})) \nonumber \\ & \leq|R(\widehat{g}_{\rm CD})-\widehat{R}_{\rm CD}(\widehat{g}_{\rm CD})|+\left|\widehat{R}_{\rm CD}(g^{*})-R(g^{*})\right| \nonumber \\ & \leq 2 \sup _{g \in \mathcal{G}}|R(g)-\widehat{R}_{\rm CD}(g)| \nonumber \\
 &\leq 8L_{\ell}\mathfrak{R}_{n}(\mathcal{G})+4C_{\ell}\sqrt{\frac{\ln{2/\delta}}{2n}}. \nonumber
\end{align}
The first inequality is derived because $\widehat{g}_{\rm CD}$ is the minimizer of $\widehat{R}_{\rm CD}(g)$. The last inequality is derived according to Lemma~\ref{eeb_lemma}, which concludes the proof. 
\end{proof}
\section{Proof of Theorem~\ref{noisy_eeb}}
\begin{proof}[\unskip \nopunct]
To begin with, we provide the following inequality:
\begin{align}
&|\bar{R}_{\rm CD}(g)-\widehat{R}_{\rm CD}(g)| \nonumber \\
=&\frac{1}{2n}|\sum_{i=1}^{n}((\bar{\pi}_{+}-\pi_{+}+c_{i}-\bar{c}_{i})\ell(g(\bm{x}_{i}),+1)+(\bar{\pi}_{-}-\pi_{-}+c_{i}-\bar{c}_{i})\ell(g(\bm{x}'_{i}),-1) \nonumber\\
&+(\bar{\pi}_{+}-\pi_{+}+\bar{c}_{i}-c_{i})\ell(g(\bm{x}'_{i}),+1)+(\bar{\pi}_{-}-\pi_{-}+\bar{c}_{i}-c_{i})\ell(g(\bm{x}_{i}),-1))| \nonumber \\
\leq &\frac{1}{2n}\sum_{i=1}^{n}(|(\bar{\pi}_{+}-\pi_{+}+c_{i}-\bar{c}_{i})\ell(g(\bm{x}_{i}),+1)|+|(\bar{\pi}_{-}-\pi_{-}+c_{i}-\bar{c}_{i})\ell(g(\bm{x}'_{i}),-1)| \nonumber\\
&+|(\bar{\pi}_{+}-\pi_{+}+\bar{c}_{i}-c_{i})\ell(g(\bm{x}'_{i}),+1)|+|(\bar{\pi}_{-}-\pi_{-}+\bar{c}_{i}-c_{i})\ell(g(\bm{x}_{i}),-1)|) \nonumber \\
= &\frac{1}{2n}\sum_{i=1}^{n}(|\bar{\pi}_{+}-\pi_{+}+c_{i}-\bar{c}_{i}|\ell(g(\bm{x}_{i}),+1)+|\bar{\pi}_{-}-\pi_{-}+c_{i}-\bar{c}_{i}|\ell(g(\bm{x}'_{i}),-1) \nonumber\\
&+|\bar{\pi}_{+}-\pi_{+}+\bar{c}_{i}-c_{i}|\ell(g(\bm{x}'_{i}),+1)+|\bar{\pi}_{-}-\pi_{-}+\bar{c}_{i}-c_{i}|\ell(g(\bm{x}_{i}),-1)) \nonumber \\ 
\leq &\frac{1}{2n}\sum_{i=1}^{n}((|\bar{\pi}_{+}-\pi_{+}|+|c_{i}-\bar{c}_{i}|)\ell(g(\bm{x}_{i}),+1)+(|\bar{\pi}_{-}-\pi_{-}|+|c_{i}-\bar{c}_{i}|)\ell(g(\bm{x}'_{i}),-1) \nonumber\\
&+(|\bar{\pi}_{+}-\pi_{+}|+|\bar{c}_{i}-c_{i}|)\ell(g(\bm{x}'_{i}),+1)+(|\bar{\pi}_{-}-\pi_{-}|+|\bar{c}_{i}-c_{i}|)\ell(g(\bm{x}_{i}),-1)) \nonumber \\
= &\frac{1}{2n}\sum_{i=1}^{n}((|\bar{\pi}_{+}-\pi_{+}|+|c_{i}-\bar{c}_{i}|)\ell(g(\bm{x}_{i}),+1)+(|\pi_{+}-\bar{\pi}_{+}|+|c_{i}-\bar{c}_{i}|)\ell(g(\bm{x}'_{i}),-1) \nonumber\\
&+(|\bar{\pi}_{+}-\pi_{+}|+|\bar{c}_{i}-c_{i}|)\ell(g(\bm{x}'_{i}),+1)+(|\pi_{+}-\bar{\pi}_{+}|+|\bar{c}_{i}-c_{i}|)\ell(g(\bm{x}_{i}),-1)) \nonumber \\
\leq& \frac{2C_{\ell}\sum_{i=1}^{n}|\bar{c}_{i}-c_{i}|}{n}+2C_{\ell}|\bar{\pi}_{+}-\pi_{+}|. \nonumber
\end{align}
Then, we deduce the following inequality:
\begin{align}
R(\bar{g}_{\rm CD})-R(g^{*})=&(R(\bar{g}_{\rm CD})-\widehat{R}_{\rm CD}(\bar{g}_{\rm CD}))+(\widehat{R}_{\rm CD}(\bar{g}_{\rm CD})-\bar{R}_{\rm CD}(\bar{g}_{\rm CD}))+(\bar{R}_{\rm CD}(\bar{g}_{\rm CD})-\bar{R}_{\rm CD}(\widehat{g}_{\rm CD})) \nonumber \\
&+(\bar{R}_{\rm CD}(\widehat{g}_{\rm CD})-\widehat{R}_{\rm CD}(\widehat{g}_{\rm CD}))+(\widehat{R}_{\rm CD}(\widehat{g}_{\rm CD})-R(\widehat{g}_{\rm CD}))+(R(\widehat{g}_{\rm CD})-R(g^{*})) \nonumber \\
 \leq& 2 \sup _{g \in \mathcal{G}}|R(g)-\widehat{R}_{\rm CD}(g)|+2 \sup _{g \in \mathcal{G}}|\bar{R}_{\rm CD}(g)-\widehat{R}_{\rm CD}(g)|+(R(\widehat{g}_{\rm CD})-R(g^{*})) \nonumber \\
\leq& 4 \sup _{g \in \mathcal{G}}|R(g)-\widehat{R}_{\rm CD}(g)|+2 \sup _{g \in \mathcal{G}}|\bar{R}_{\rm CD}(g)-\widehat{R}_{\rm CD}(g)| \nonumber \\
\leq& 16L_{\ell}\mathfrak{R}_{n}(\mathcal{G})+8C_{\ell}\sqrt{\frac{\ln{2/\delta}}{2n}}+\frac{4C_{\ell}\sum_{i=1}^{n}|\bar{c}_{i}-c_{i}|}{n}+4C_{\ell}|\bar{\pi}_{+}-\pi_{+}|. \nonumber
\end{align}
The first inequality is derived because $\bar{g}_{\rm CD}$ is the minimizer of $\bar{R}(g)$. The second and third inequality are derived according to the proof of Theorem~\ref{eeb} and Lemma~\ref{eeb_lemma} respectively.
\end{proof}
\section{Proof of Theorem~\ref{rc_consist}}
To begin with, let $\mathfrak{D}_{n}^{+}(g)=\{\mathcal{D}_{n}|\widehat{A}(g) \geq 0 \cap \widehat{B}(g) \geq 0 \cap \widehat{C}(g) \geq 0 \cap \widehat{D}(g) \geq 0\}$ and $\mathfrak{D}_{n}^{-}(g)=\{\mathcal{D}_{n}|\widehat{A}(g) \leq 0 \cup \widehat{B}(g) \leq 0 \cup \widehat{C}(g) \leq 0 \cup \widehat{D}(g) \leq 0\}$. Before giving the proof of Theorem~\ref{rc_consist}, we give the following lemma based on the assumptions in Section~\ref{section_algo}. 
\begin{lemma}
The probability measure of $\mathfrak{D}_{n}^{-}(g)$ can be bounded as follows: 
\begin{equation}
\mathbb{P}(\mathfrak{D}_{n}^{-}(g)) \leq \exp{(\frac{-2a^{2}n}{C_{\ell}^{2}})}+\exp{(\frac{-2b^{2}n}{C_{\ell}^{2}})}+\exp{(\frac{-2c^{2}n}{C_{\ell}^{2}})}+\exp{(\frac{-2d^{2}n}{C_{\ell}^{2}})}.
\end{equation}
\end{lemma}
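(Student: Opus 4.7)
The plan is to view $\mathfrak{D}_n^-(g)$ as a union of four events and reduce the statement to four one-sided deviation bounds via the union bound. Since
$\mathfrak{D}_n^-(g) = \{\widehat{A}(g) \leq 0\} \cup \{\widehat{B}(g) \leq 0\} \cup \{\widehat{C}(g) \leq 0\} \cup \{\widehat{D}(g) \leq 0\}$, it suffices to prove that $\mathbb{P}(\widehat{A}(g) \leq 0) \leq \exp(-2a^2 n / C_\ell^2)$ together with the analogous bounds for $\widehat{B}, \widehat{C}, \widehat{D}$ with constants $b, c, d$. I would carry out the $\widehat{A}$ case in detail and handle the others by the same recipe, invoking the symmetry between $(\pi_+ - c_i), (\pi_- - c_i), (\pi_+ + c_i), (\pi_- + c_i)$.

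For $\widehat{A}(g)$ itself: since the pairs $(\bm{x}_i, \bm{x}'_i)$ are i.i.d., the summands $X_i := (\pi_+ - c_i)\ell(g(\bm{x}_i), +1)$ are i.i.d.~as well, so I can rewrite $\widehat{A}(g) = \tfrac{1}{n}\sum_{i=1}^{n} (X_i/2)$ as an empirical mean of i.i.d.~variables. Using $\ell(\cdot,\cdot) \in [0, C_\ell]$ together with $c_i \in [-1, 1]$ and $\pi_+ \in [0,1]$, a short case split shows $X_i \in [(\pi_+ - 1)C_\ell, (\pi_+ + 1)C_\ell]$, an interval of width $2C_\ell$. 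Hence each $X_i/2$ lies in an interval of width exactly $C_\ell$. Because $\mathbb{E}[\widehat{A}(g)] \geq a$ by assumption, the event $\{\widehat{A}(g) \leq 0\}$ is contained in the deviation event $\{\widehat{A}(g) - \mathbb{E}[\widehat{A}(g)] \leq -a\}$, and the one-sided Hoeffding inequality applied to these i.i.d.~summands of range $C_\ell$ yields $\mathbb{P}(\widehat{A}(g) \leq 0) \leq \exp(-2a^2 n / C_\ell^2)$.

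The main obstacle I anticipate is simply bookkeeping: verifying that the $1/(2n)$ normalization (rather than $1/n$) is precisely what makes the effective summand range equal to $C_\ell$ rather than $2C_\ell$, which in turn produces $C_\ell^2$ rather than $4C_\ell^2$ in the Hoeffding denominator. Once this is pinned down, the parallel arguments for $\widehat{B}, \widehat{C}, \widehat{D}$ go through verbatim using the assumed lower bounds $b, c, d$, and a final application of the union bound over the four events assembles the four exponentials into the claimed inequality. No independence between $\widehat{A}, \widehat{B}, \widehat{C}, \widehat{D}$ is needed because the union bound does not require it.
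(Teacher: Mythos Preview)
Your proposal is correct and follows essentially the same route as the paper: union bound over the four events, containment of $\{\widehat{A}(g)\le 0\}$ in the deviation event via $\mathbb{E}[\widehat{A}(g)]\ge a$, and a concentration inequality giving the $\exp(-2a^2 n/C_\ell^2)$ term. The only cosmetic difference is that the paper invokes McDiarmid's inequality with bounded-difference constant $C_\ell/n$ per pair, whereas you apply Hoeffding directly to the i.i.d.\ summands of range $C_\ell$; since Hoeffding is the i.i.d.\ specialization of McDiarmid, the two arguments coincide here.
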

\begin{proof}
It can be observed that
\begin{align}
p(\mathcal{D}_{n})&=p(\bm{x}_{1},\bm{x}'_{1})\cdots p(\bm{x}_{n},\bm{x}'_{n}) \nonumber \\
&=p(\bm{x}_{1})\cdots p(\bm{x}'_{n})p(\bm{x}_{1})\cdots p(\bm{x}'_{n}). \nonumber
\end{align}
Therefore, the probability measure $\mathbb{P}(\mathfrak{D}_{n}^{-}(g))$ can be defined as follows:
\begin{align}
\mathbb{P}(\mathfrak{D}_{n}^{-}(g))=&\int_{\mathcal{D}_{n}\in \mathfrak{D}_{n}^{-}(g)}p(\mathcal{D}_{n})\diff \mathcal{D}_{n} \nonumber \\
=&\int_{\mathcal{D}_{n}\in \mathfrak{D}_{n}^{-}(g)}p(\mathcal{D}_{n})\diff \bm{x}_{1}\cdots \diff \bm{x}_{n} \diff \bm{x}'_{1}\cdots \diff \bm{x}'_{n}. \nonumber
\end{align}
When exactly one ConfDiff data pair in $S_{n}$ is replaced, the change of $\widehat{A}(g), \widehat{B}(g), \widehat{C}(g)$ and $\widehat{D}(g)$ will be no more than $C_{\ell}/n$. By applying McDiarmid’s inequality, we can obtain the following inequalities:
\begin{align}
\mathbb{P}(\mathbb{E}[\widehat{A}(g)] - \widehat{A}(g) \geq a) &\leq \exp{(\frac{-2a^{2}n}{C_{\ell}^{2}})}, \nonumber \\
\mathbb{P}(\mathbb{E}[\widehat{B}(g)] - \widehat{B}(g) \geq b) &\leq \exp{(\frac{-2b^{2}n}{C_{\ell}^{2}})}, \nonumber \\
\mathbb{P}(\mathbb{E}[\widehat{C}(g)] - \widehat{C}(g) \geq c) &\leq \exp{(\frac{-2c^{2}n}{C_{\ell}^{2}})}, \nonumber \\
\mathbb{P}(\mathbb{E}[\widehat{D}(g)] - \widehat{D}(g) \geq d) &\leq \exp{(\frac{-2d^{2}n}{C_{\ell}^{2}})}. \nonumber 
\end{align}
Furthermore, 
\begin{align}
\mathbb{P}(\mathfrak{D}_{n}^{-}(g)) 
\leq& \mathbb{P}(\widehat{A}(g) \leq 0) + \mathbb{P}(\widehat{B}(g) \leq 0) + \mathbb{P}(\widehat{C}(g) \leq 0) + \mathbb{P}(\widehat{D}(g) \leq 0) \nonumber \\
\leq& \mathbb{P}(\widehat{A}(g) \leq \mathbb{E}[\widehat{A}(g)] - a) + \mathbb{P}(\widehat{B}(g) \leq \mathbb{E}[\widehat{B}(g)] - b) \nonumber \\
&+ \mathbb{P}(\widehat{C}(g) \leq \mathbb{E}[\widehat{C}(g)] - c) + \mathbb{P}(\widehat{D}(g) \leq \mathbb{E}[\widehat{D}(g)] - d) \nonumber \\
=& \mathbb{P}(\mathbb{E}[\widehat{A}(g)] - \widehat{A}(g) \geq a) + \mathbb{P}(\mathbb{E}[\widehat{B}(g)] - \widehat{B}(g) \geq b) \nonumber \\
&+ \mathbb{P}(\mathbb{E}[\widehat{C}(g)] - \widehat{C}(g) \geq c) + \mathbb{P}(\mathbb{E}[\widehat{D}(g)] - \widehat{D}(g) \geq d) \nonumber \\
\leq& \exp{(\frac{-2a^{2}n}{C_{\ell}^{2}})}+\exp{(\frac{-2b^{2}n}{C_{\ell}^{2}})}+\exp{(\frac{-2c^{2}n}{C_{\ell}^{2}})}+\exp{(\frac{-2d^{2}n}{C_{\ell}^{2}})}, \nonumber
\end{align}
which concludes the proof.
\end{proof}
Then, the proof of Theorem~\ref{rc_consist} is given.
\begin{proof}[Proof of Theorem~\ref{rc_consist}]
To begin with, we prove the first inequality in Theorem~\ref{rc_consist}.
\begin{align}
&\mathbb{E}[\widetilde{R}_{\rm CD}(g)]-R(g) \nonumber \\
=&\mathbb{E}[\widetilde{R}_{\rm CD}(g) - \widehat{R}_{\rm CD}(g)] \nonumber \\
=&\int_{\mathcal{D}_{n}\in \mathfrak{D}_{n}^{+}(g)}(\widetilde{R}_{\rm CD}(g) - \widehat{R}_{\rm CD}(g))p(\mathcal{D}_{n})\diff \mathcal{D}_{n} \nonumber \\
&+\int_{\mathcal{D}_{n}\in \mathfrak{D}_{n}^{-}(g)}(\widetilde{R}_{\rm CD}(g) - \widehat{R}_{\rm CD}(g))p(\mathcal{D}_{n})\diff \mathcal{D}_{n} \nonumber \\
=& \int_{\mathcal{D}_{n}\in \mathfrak{D}_{n}^{-}(g)}(\widetilde{R}_{\rm CD}(g) - \widehat{R}_{\rm CD}(g))p(\mathcal{D}_{n})\diff \mathcal{D}_{n} \geq 0, \nonumber 
\end{align}
where the last inequality is derived because $\widetilde{R}_{\rm CD}(g)$ is an upper bound of $\widehat{R}_{\rm CD}(g)$. 
Furthermore, 
\begin{align}
&\mathbb{E}[\widetilde{R}_{\rm CD}(g)]-R(g) \nonumber \\
=& \int_{\mathcal{D}_{n}\in \mathfrak{D}_{n}^{-}(g)}(\widetilde{R}_{\rm CD}(g) - \widehat{R}_{\rm CD}(g))p(\mathcal{D}_{n})\diff \mathcal{D}_{n}  \nonumber \\
\leq& \sup_{\mathcal{D}_{n}\in \mathfrak{D}_{n}^{-}(g)}(\widetilde{R}_{\rm CD}(g) - \widehat{R}_{\rm CD}(g))\int_{\mathcal{D}_{n}\in \mathfrak{D}_{n}^{-}(g)}p(\mathcal{D}_{n})\diff \mathcal{D}_{n} \nonumber \\
=& \sup_{\mathcal{D}_{n}\in \mathfrak{D}_{n}^{-}(g)}(\widetilde{R}_{\rm CD}(g) - \widehat{R}_{\rm CD}(g))\mathbb{P}(\mathfrak{D}_{n}^{-}(g)) \nonumber \\
=& \sup_{\mathcal{D}_{n}\in \mathfrak{D}_{n}^{-}(g)}(f(\widehat{A}(g))+f(\widehat{B}(g))+f(\widehat{C}(g))+f(\widehat{D}(g)) \nonumber \\
&-\widehat{A}(g)-\widehat{B}(g)-\widehat{C}(g)-\widehat{D}(g))\mathbb{P}(\mathfrak{D}_{n}^{-}(g)) \nonumber \\
\leq& \sup_{\mathcal{D}_{n}\in \mathfrak{D}_{n}^{-}(g)}(L_{f}|\widehat{A}(g)|+L_{f}|\widehat{B}(g)|+L_{f}|\widehat{C}(g)|+L_{f}|\widehat{D}(g)| \nonumber \\
&+|\widehat{A}(g)|+|\widehat{B}(g)|+|\widehat{C}(g)|+|\widehat{D}(g)|)\mathbb{P}(\mathfrak{D}_{n}^{-}(g) \nonumber \\
=& \sup_{\mathcal{D}_{n}\in \mathfrak{D}_{n}^{-}(g)}\frac{L_{f}+1}{2n}(|\sum_{i=1}^{n}(\pi_{+}-c_{i})\ell(g(\bm{x}_{i}),+1)|+|\sum_{i=1}^{n}(\pi_{-}-c_{i})\ell(g(\bm{x}'_{i}),-1)| \nonumber \\
&+|\sum_{i=1}^{n}(\pi_{+}+c_{i})\ell(g(\bm{x}'_{i}),+1)|+|\sum_{i=1}^{n}(\pi_{-}+c_{i})\ell(g(\bm{x}_{i}),-1)|)\mathbb{P}(\mathfrak{D}_{n}^{-}(g)) \nonumber \\
\leq& \sup_{\mathcal{D}_{n}\in \mathfrak{D}_{n}^{-}(g)}\frac{L_{f}+1}{2n}(\sum_{i=1}^{n}|(\pi_{+}-c_{i})\ell(g(\bm{x}_{i}),+1)|+\sum_{i=1}^{n}|(\pi_{-}-c_{i})\ell(g(\bm{x}'_{i}),-1)| \nonumber \\
&+\sum_{i=1}^{n}|(\pi_{+}+c_{i})\ell(g(\bm{x}'_{i}),+1)|+\sum_{i=1}^{n}|(\pi_{-}+c_{i})\ell(g(\bm{x}_{i}),-1)|)\mathbb{P}(\mathfrak{D}_{n}^{-}(g)) \nonumber \\
=& \sup_{\mathcal{D}_{n}\in \mathfrak{D}_{n}^{-}(g)}\frac{L_{f}+1}{2n}\sum_{i=1}^{n}(|(\pi_{+}-c_{i})\ell(g(\bm{x}_{i}),+1)|+|(\pi_{-}-c_{i})\ell(g(\bm{x}'_{i}),-1)| \nonumber \\
&+|(\pi_{+}+c_{i})\ell(g(\bm{x}'_{i}),+1)|+|(\pi_{-}+c_{i})\ell(g(\bm{x}_{i}),-1)|)\mathbb{P}(\mathfrak{D}_{n}^{-}(g)) \nonumber \\
\leq& \sup_{\mathcal{D}_{n}\in \mathfrak{D}_{n}^{-}(g)}\frac{(L_{f}+1)C_{\ell}}{2n}\sum_{i=1}^{n}(|\pi_{+}-c_{i}|+|\pi_{-}-c_{i}| +|\pi_{+}+c_{i}|+|\pi_{-}+c_{i}|)\mathbb{P}(\mathfrak{D}_{n}^{-}(g)). \nonumber
\end{align}
Similar to the proof of Theorem~\ref{eeb}, we can obtain 
\begin{equation}
|\pi_{+}-c_{i}|+|\pi_{-}-c_{i}| +|\pi_{+}+c_{i}|+|\pi_{-}+c_{i}| \leq 4. \nonumber
\end{equation}
Therefore, we have
\begin{equation}
\mathbb{E}[\widetilde{R}_{\rm CD}(g)]-R(g) 
\leq 2(L_{f}+1)C_{\ell}\Delta, \nonumber
\end{equation}
which concludes the proof of the first inequality in Theorem~\ref{rc_consist}.
Before giving the proof of the second inequality, we give the upper bound of $|\widetilde{R}_{\rm CD}(g)-\mathbb{E}[\widetilde{R}_{\rm CD}(g)]|$. When exactly one ConfDiff data pair in $\mathcal{D}_{n}$ is replaced, the change of $\widetilde{R}_{\rm CD}(g)$ is no more than $2C_{\ell}L_{f}/n$. By applying McDiarmid’s inequality, we have the following inequalities with probability at least $1 - \delta / 2$:
\begin{align}
\widetilde{R}_{\rm CD}(g)-\mathbb{E}[\widetilde{R}_{\rm CD}(g)] &\leq 2C_{\ell}L_{f}\sqrt{\frac{\ln{2/\delta}}{2n}}, \nonumber \\
\mathbb{E}[\widetilde{R}_{\rm CD}(g)] - \widetilde{R}_{\rm CD}(g) &\leq 2C_{\ell}L_{f}\sqrt{\frac{\ln{2/\delta}}{2n}}. \nonumber
\end{align}
Therefore, with probability at least $1 - \delta$, we have
\begin{equation}
|\widetilde{R}_{\rm CD}(g)-\mathbb{E}[\widetilde{R}_{\rm CD}(g)]| \leq 2C_{\ell}L_{f}\sqrt{\frac{\ln{2/\delta}}{2n}}. \nonumber
\end{equation}
Finally, we have 
\begin{align}
|\widetilde{R}_{\rm CD}(g)-R(g)|&=|\widetilde{R}_{\rm CD}(g) - \mathbb{E}[\widetilde{R}_{\rm CD}(g)]+ \mathbb{E}[\widetilde{R}_{\rm CD}(g)] - R(g)| \nonumber \\
&\leq |\widetilde{R}_{\rm CD}(g) - \mathbb{E}[\widetilde{R}_{\rm CD}(g)]| + |\mathbb{E}[\widetilde{R}_{\rm CD}(g)] - R(g)|\nonumber \\
&= |\widetilde{R}_{\rm CD}(g) - \mathbb{E}[\widetilde{R}_{\rm CD}(g)]| + \mathbb{E}[\widetilde{R}_{\rm CD}(g)] - R(g) \nonumber \\
&\leq 2C_{\ell}L_{f}\sqrt{\frac{\ln{2/\delta}}{2n}} + 2(L_{f}+1)C_{\ell}\Delta, \nonumber
\end{align}
with probability at least $1 - \delta$, which concludes the proof.
\end{proof}
\section{Proof of Theorem~\ref{rc_eeb}}
\begin{proof}[\unskip \nopunct]
With probability at least $1 - \delta$, we have
\begin{align}
R(\widetilde{g}_{\rm CD})-R(g^{*})=&(R(\widetilde{g}_{\rm CD}) - \widetilde{R}_{\rm CD}(\widetilde{g}_{\rm CD}))+(\widetilde{R}_{\rm CD}(\widetilde{g}_{\rm CD}) - \widetilde{R}_{\rm CD}(\widehat{g}_{\rm CD})) \nonumber \\
&+(\widetilde{R}_{\rm CD}(\widehat{g}_{\rm CD}) - R(\widehat{g}_{\rm CD}))+(R(\widehat{g}_{\rm CD}) - R(g^{*})) \nonumber \\
\leq & |R(\widetilde{g}_{\rm CD}) - \widetilde{R}_{\rm CD}(\widetilde{g}_{\rm CD})| + |\widetilde{R}_{\rm CD}(\widehat{g}_{\rm CD}) - R(\widehat{g}_{\rm CD})| + (R(\widehat{g}_{\rm CD}) - R(g^{*})) \nonumber \\
\leq & 4C_{\ell}(L_{f}+1)\sqrt{\frac{\ln{2/\delta}}{2n}} + 4(L_{f}+1)C_{\ell}\Delta + 8L_{\ell}\mathfrak{R}_{n}(\mathcal{G}). \nonumber
\end{align}
The first inequality is derived because $\widetilde{g}_{\rm CD}$ is the minimizer of $\widetilde{R}_{\rm CD}(g)$. The second inequality is derived from Theorem~\ref{rc_consist} and Theorem~\ref{eeb}. The proof is completed.
\end{proof}

\section{Limitations and Potential Negative Social Impacts}
\subsection{Limitations}
This work focuses on binary classification problems. To generalize it to multi-class problems, we need to convert multi-class classification to a set of binary classification problems via the one-versus-rest or the one-versus-one strategies. In the future, developing methods directly handling multi-class classification problems is promising. 
\subsection{Potential Negative Social Impacts}
This work is within the scope of weakly supervised learning, which aims to achieve comparable performance while reducing labeling costs. Therefore, when this technique is very effective and prevalent in society, the demand for data annotations may be reduced, leading to the increasing unemployment rate of data annotation workers.
\section{Additional Information about Experiments}\label{exp_appendix}
In this section, the details of experimental data sets and hyperparameters are provided.
\subsection{Details of Experimental Data Sets}
\begin{table}[t]
\small
        \caption{Characteristics of experimental data sets.}\label{dataset}
        \centering
        \renewcommand{\arraystretch}{1.1}
        %\tabcolsep 0.05in
        %\small
        \begin{tabular}{cccccc}\toprule
                \textbf{Data Set} & \textbf{\# Train} &
                \textbf{\# Test} &
                \textbf{\# Features} & \textbf{\# Class Labels} & \textbf{Model} \\\midrule
                \textbf{MNIST}  & 60,000  & 10,000 & 784   & 10    & MLP \\
                \textbf{Kuzushiji}  & 60,000  & 10,000 & 784   & 10    & MLP \\
                \textbf{Fashion}  & 60,000  & 10,000 & 784   & 10    & MLP \\                
                \textbf{CIFAR-10}  & 50,000  & 10,000 & 3,072   & 10    & ResNet-34 \\
                \midrule 
                \textbf{Optdigits} & 4,495 & 1,125 & 62 & 10 & MLP \\
                \textbf{USPS} & 7,437 & 1,861 & 256 & 10 & MLP \\
                \textbf{Pendigits} & 8,793 & 2,199 & 16 & 10 & MLP \\
                \textbf{Letter} & 16,000 & 4,000 & 16 & 26 & MLP \\
\bottomrule
        \end{tabular}
\end{table}
The detailed statistics and corresponding model architectures are summarized in Table~\ref{dataset}. The basic information of data sets, sources and data split details are elaborated as follows.

For the four benchmark data sets,
\begin{itemize}[leftmargin=1em, itemsep=1pt, topsep=1pt, parsep=-1pt]
    \item MNIST~\citep{LBBH98}: It is a grayscale handwritten digits recognition data set. It is composed of 60,000 training examples and 10,000 test examples. The original feature dimension is 28*28, and the label space is 0-9. The even digits are regarded as the positive class while the odd digits are regarded as the negative class. We sampled 15,000 unlabeled data pairs as training data. The data set can be downloaded from \url{http://yann.lecun.com/exdb/mnist/}.
    \item Kuzushiji-MNIST~\citep{CBKLYH18}: It is a grayscale Japanese character recognition data set. It is composed of 60,000 training examples and 10,000 test examples. The original feature dimension is 28*28, and the label space is \{‘o’, ‘su’,‘na’, ‘ma’, ‘re’, ‘ki’,‘tsu’,‘ha’, ‘ya’,‘wo’\}. The positive class is composed of ‘o’, ‘su’,‘na’, ‘ma’, and ‘re’ while the negative class is composed of ‘ki’,‘tsu’,‘ha’, ‘ya’, and ‘wo’. We sampled 15,000 unlabeled data pairs as training data. The data set can be downloaded from \url{https://github.com/rois-codh/kmnist}.
    \item Fashion-MNIST~\citep{XRV17}: It is a grayscale fashion item recognition data set. It is composed of 60,000 training examples and 10,000 test examples. The original feature dimension is 28*28, and the label space is \{‘T-shirt’, ‘trouser’, ‘pullover’, ‘dress’, ‘sandal’, ‘coat’, ‘shirt’, ‘sneaker’, ‘bag’, ‘ankle boot’\}. The positive class is composed of ‘T-shirt’, ‘pullover’, ‘coat’, ‘shirt’, and ‘bag’ while the negative class is composed of ‘trouser’, ‘dress’, ‘sandal’, ‘sneaker’, and ‘ankle boot’. We sampled 15,000 unlabeled data pairs as training data. The data set can be downloaded from \url{https://github.com/zalandoresearch/fashion-mnist}.
    \item CIFAR-10~\citep{KH09}: It is a colorful object recognition data set. It is composed of 50,000 training examples and 10,000 test examples. The original feature dimension is 32*32*3, and the label space is \{‘airplane’, ‘bird’, ‘automobile’, ‘cat’, ‘deer’, ‘dog’, ‘frog’, ‘horse’, ‘ship’, ‘truck’\}. The positive class is composed of ‘bird’, ‘deer’, ‘dog’, ‘frog’, ‘cat’, and ‘horse’ while the negative class is composed of ‘airplane’, ‘automobile’, ‘ship’, and ‘truck’. We sampled 10,000 unlabeled data pairs as training data. The data set can be downloaded from \url{https://www.cs.toronto.edu/~kriz/cifar.html}.
\end{itemize}
For the four UCI data sets, they can be downloaded from~\citet{DG17}.
\begin{itemize}[leftmargin=1em, itemsep=1pt, topsep=1pt, parsep=-1pt]
    \item Optdigits, USPS, Pendigits~\citep{DG17}: They are handwritten digit recognition data set. The train-test split can be found in Table~\ref{dataset}. The feature dimensions are 62, 256, and 16 respectively and the label space is 0-9. The even digits are regarded as the positive class while the odd digits are regarded as the negative class. We sampled 1,200, 2,000, and 2,500 unlabeled data pairs for training respectively.
    \item Letter~\citep{DG17}: It is a letter recognition data set. It is composed of 16,000 training examples and 4,000 test examples. The feature dimension is 16 and the label space is the 26 capital letters in the English alphabet. The positive class is composed of the top 13 letters while the negative class is composed of the latter 13 letters. We sampled 4,000 unlabeled data pairs for training.
\end{itemize}
\subsection{Details of Experiments on the KuaiRec Data Set}
We used the small matrix of the KuaiRec~\citep{gao2022kuairec} data set since it has dense confidence scores. It has 1,411 users and 3,327 items. We clipped the watching ratio above 2 and regarded the examples with watching ratio greater than 2 as positive examples. Following the experimental protocol of~\citet{he2017neural}, we regarded the latest positive example foe each user as the positive testing data, and sampled 49 negative testing data to form the testing set for each user. The HR and NDCG were calculated at top 10. The learning rate was set to 1e-3 and the dropout rate was set to 0.5. The number of epochs was set to 50 and the batch size was set to 256. The number of MLP layers was 2 and the embedding dimension was 128. The hyperparameters was the same for all the approaches for a fair comparison.
\subsection{Details of Hyperparameters} 

For MNIST, Kuzushiji-MNIST and Fashion-MNIST, the learning rate was set to 1e-3 and the weight decay was set to 1e-5. The batch size was set to 256 data pairs. For training the probabilistic classifier to generate confidence, the batch size was set to 256 and the epoch number was set to 10. 

For CIFAR10, the learning rate was set to 5e-4 and the weight decay was set to 1e-5. The batch size was set to 128 data pairs. For training the probabilistic classifier to generate confidence, the batch size was set to 128 and the epoch number was set to 10. 

For all the UCI data sets, the learning rate was set to 1e-3 and the weight decay was set to 1e-5. The batch size was set to 128 data pairs. For training the probabilistic classifier to generate confidence, the batch size was set to 128 and the epoch number was set to 10. 

The learning rate and weight decay for training the probabilistic classifier were the same as the setting for each data set correspondingly. 
\section{More Experimental Results with Fewer Training Data}\label{apd_exp_increasing}
Figure~\ref{extra_exp} shows extra experimental results with fewer training data on other data sets with different class priors.
\begin{figure*}[htbp]
  \centering
  \subfigure[Kuzushiji]{
    \includegraphics[width=3.2cm]{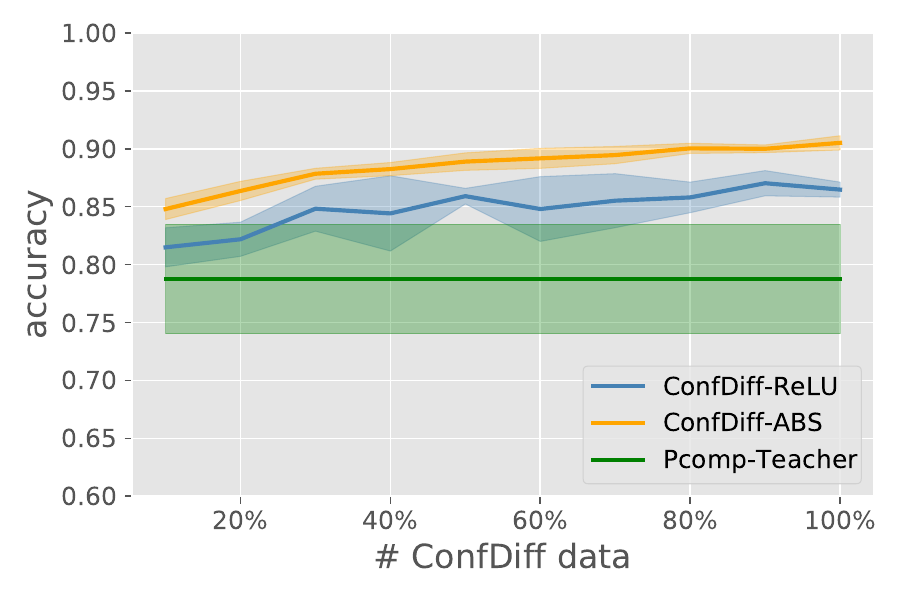}
  }
  \subfigure[Fashion]{
    \includegraphics[width=3.2cm]{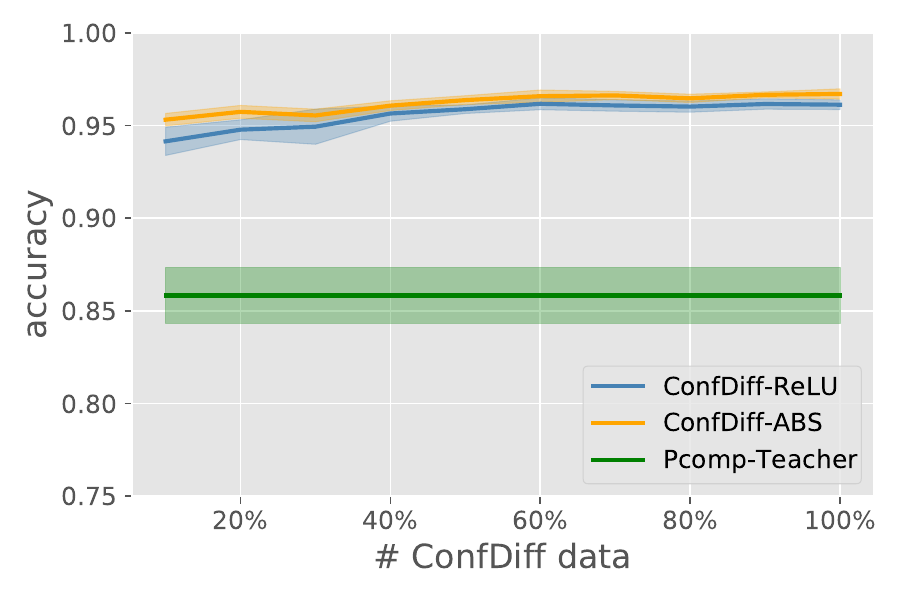}
  }
  \subfigure[USPS]{
    \includegraphics[width=3.2cm]{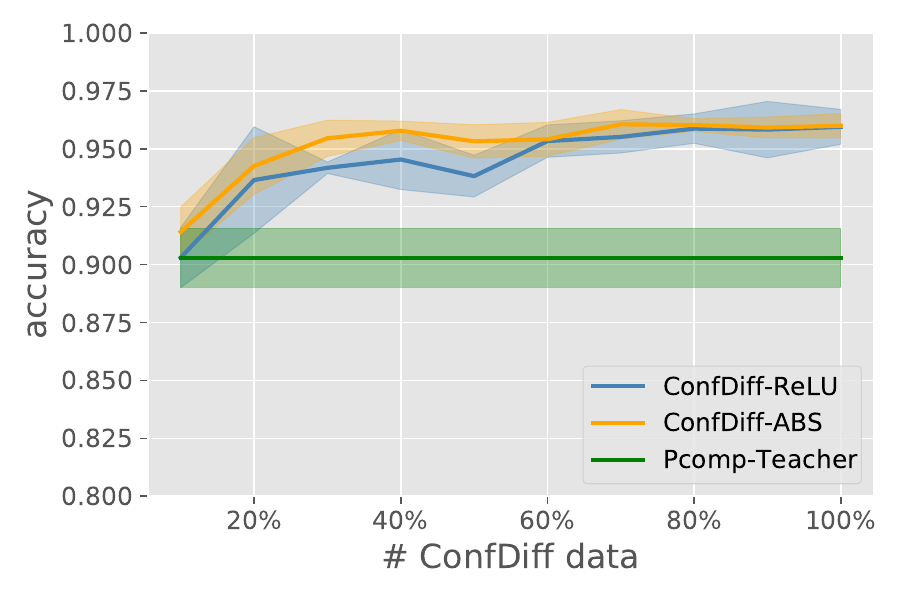}
  }
  \subfigure[Letter]{
    \includegraphics[width=3.2cm]{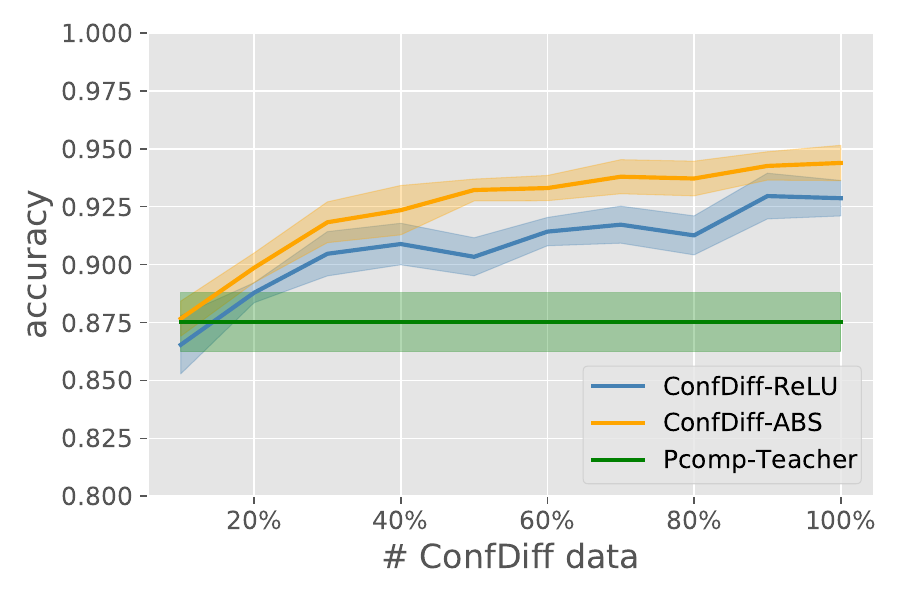}
  }
  \\
  \subfigure[Optdigits]{
    \includegraphics[width=3.2cm]{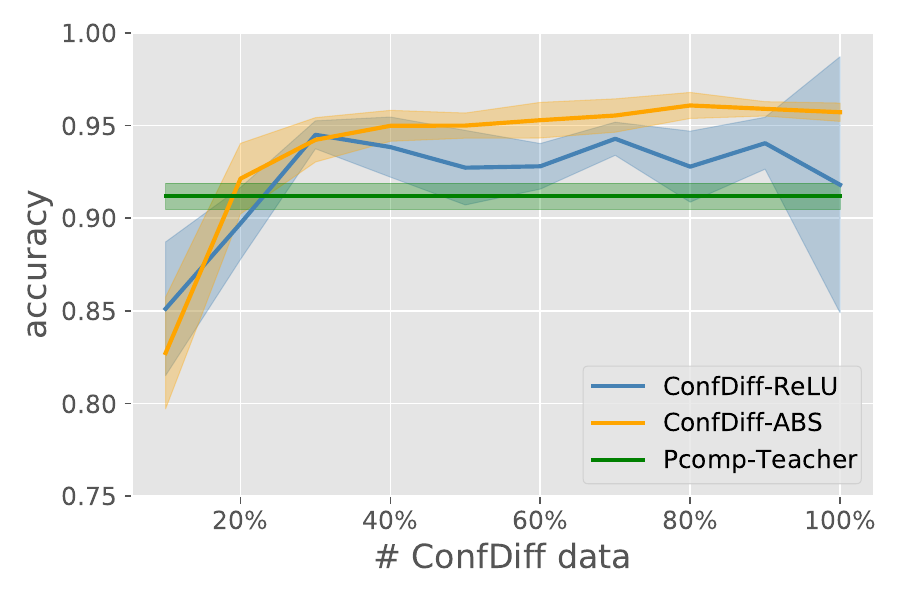}
  }
  \subfigure[Pendigits]{
    \includegraphics[width=3.2cm]{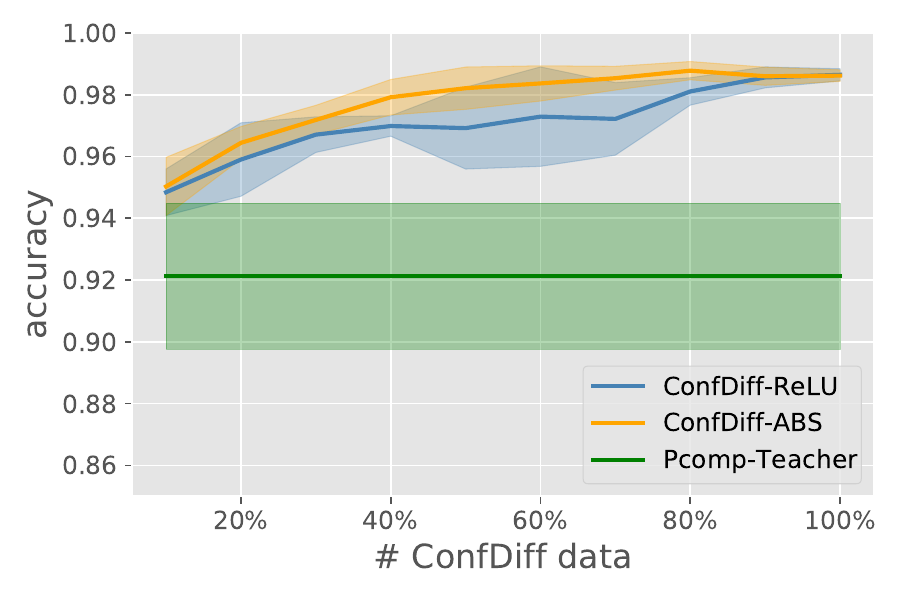}
  }
  \subfigure[MNIST]{
    \includegraphics[width=3.2cm]{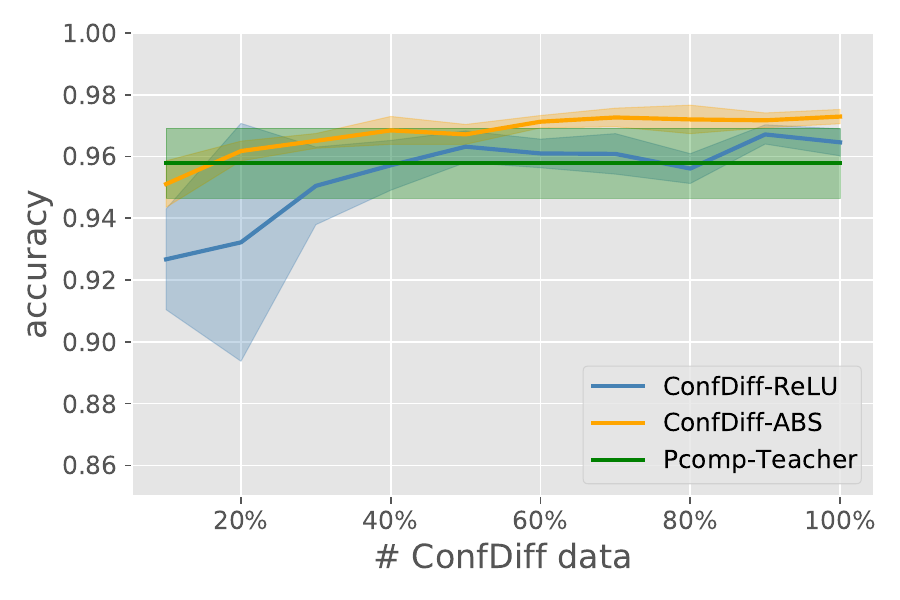}
  }
  \subfigure[CIFAR-10]{
    \includegraphics[width=3.2cm]{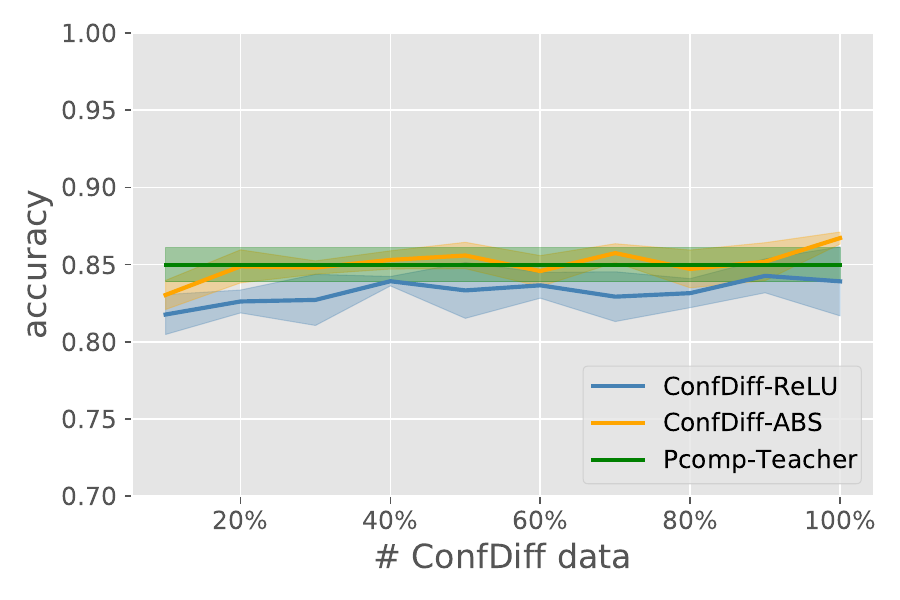}
  }
  \\
  \subfigure[Kuzushiji]{
    \includegraphics[width=3.2cm]{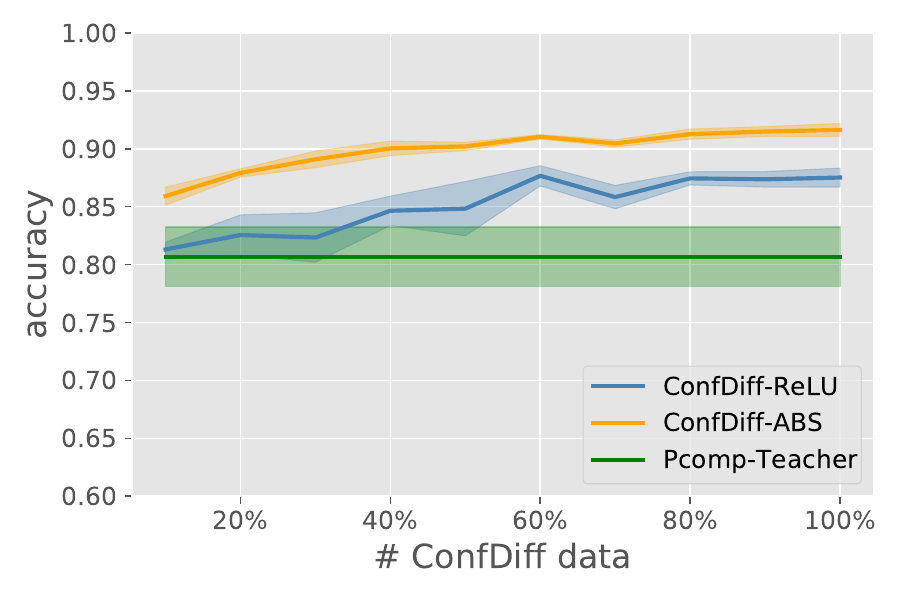}
  }
  \subfigure[Fashion]{
    \includegraphics[width=3.2cm]{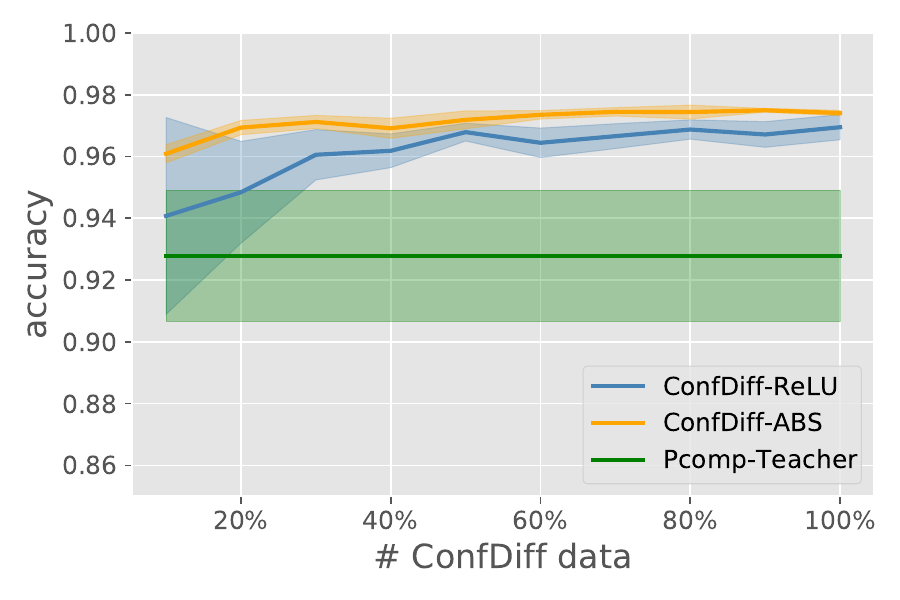}
  }
  \subfigure[USPS]{
    \includegraphics[width=3.2cm]{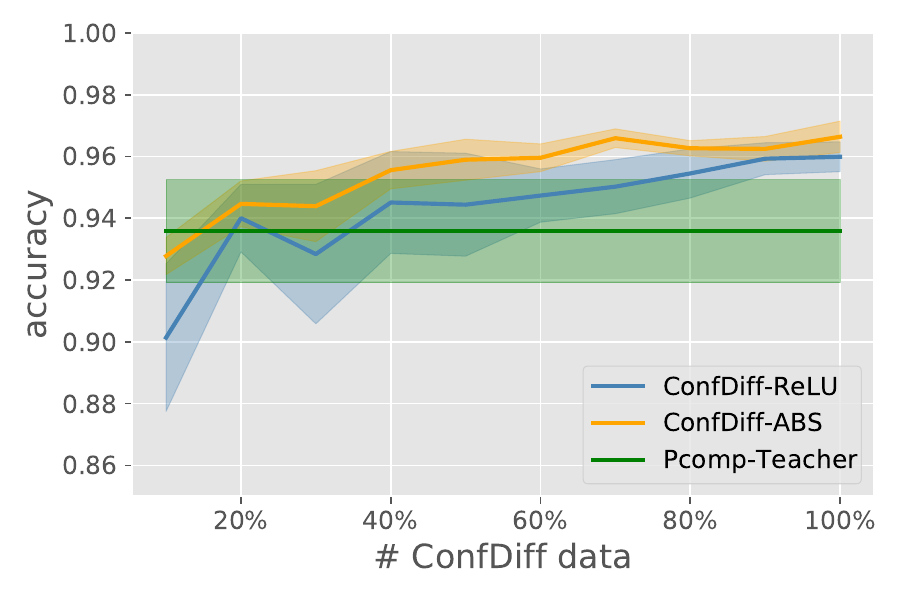}
  }
  \subfigure[Letter]{
    \includegraphics[width=3.2cm]{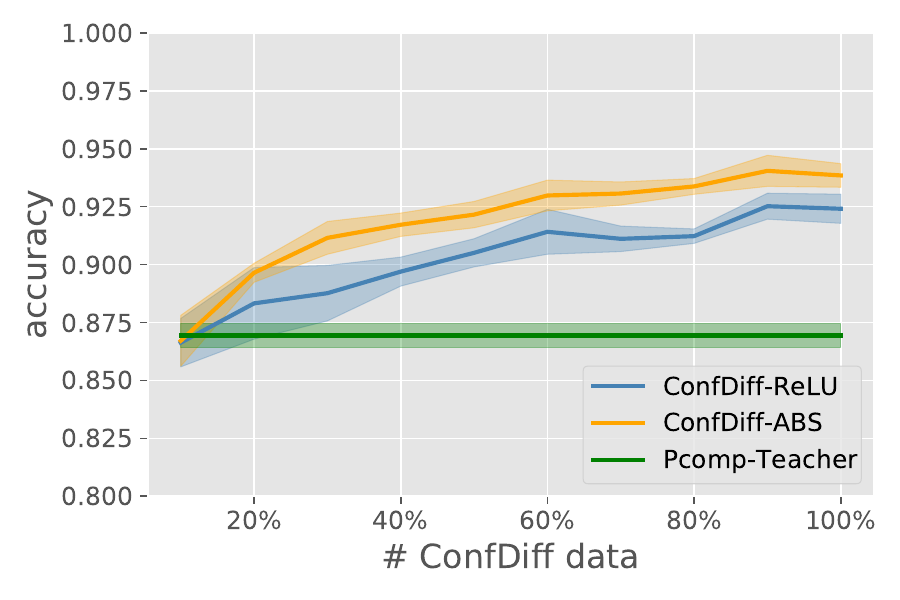}
  }
  \\
  \subfigure[Optdigits]{
    \includegraphics[width=3.2cm]{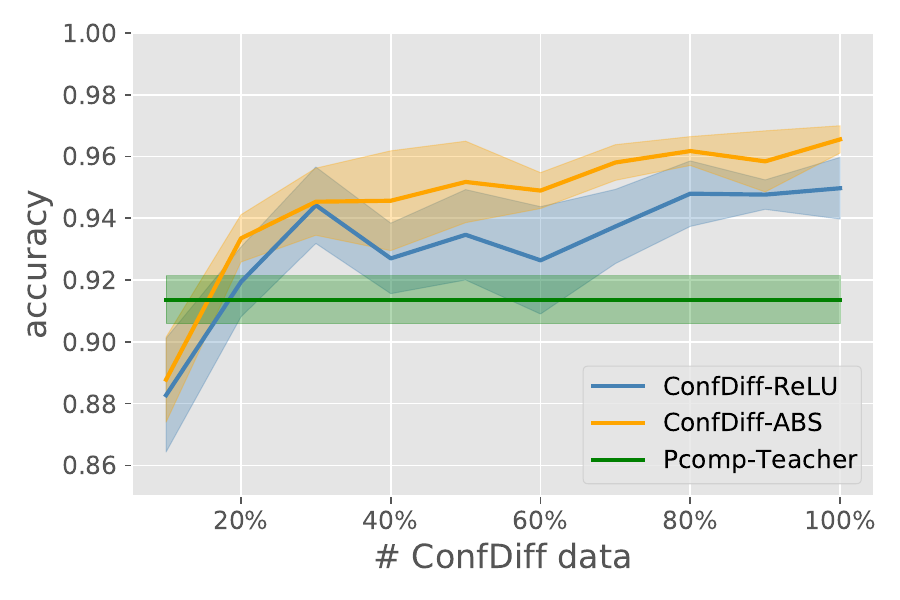}
  }
  \subfigure[Pendigits]{
    \includegraphics[width=3.2cm]{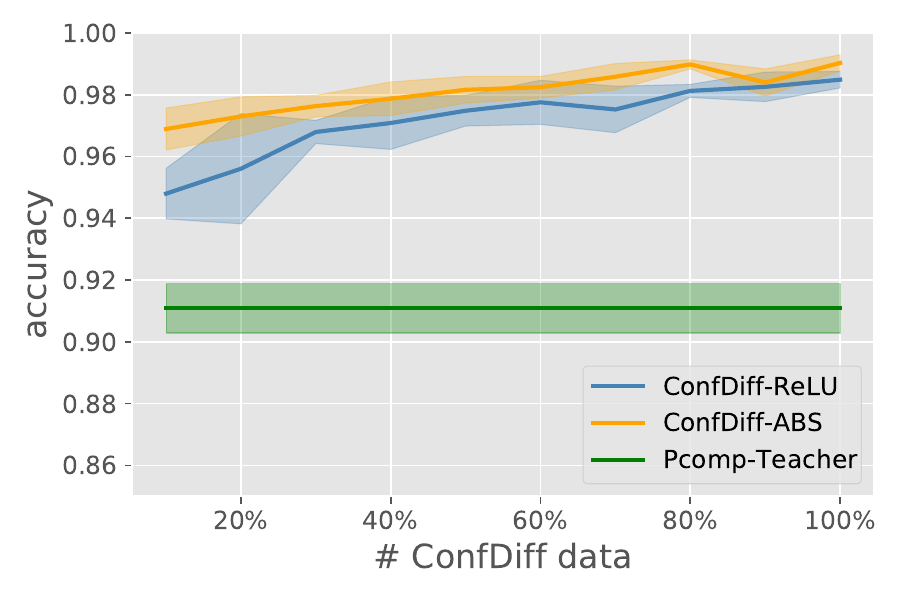}
  }
  \subfigure[MNIST]{
    \includegraphics[width=3.2cm]{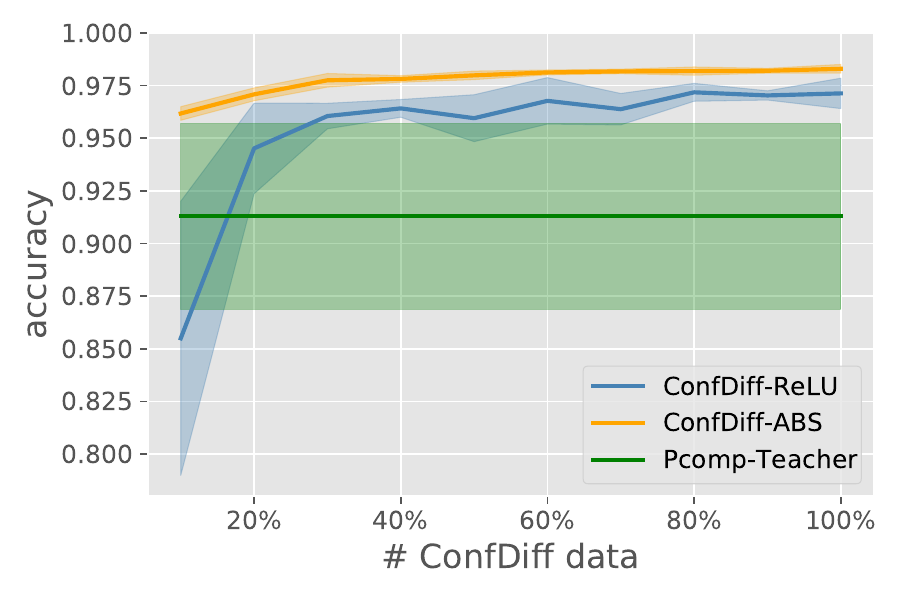}
  }
  \subfigure[CIFAR-10]{
    \includegraphics[width=3.2cm]{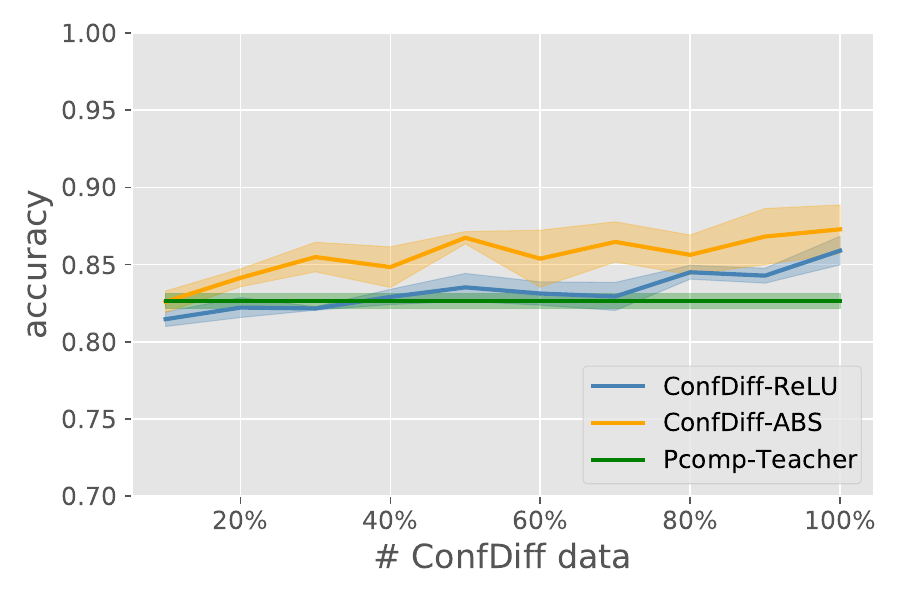}
  }
  \\
  \caption{Classification performance of ConfDiff-ReLU and ConfDiff-ABS given a fraction of training data as well as Pcomp-Teacher given 100\% of training data with different prior settings ($\pi_{+}=0.2$ for the fist and the second row and $\pi_{+}=0.8$ for the third and the fourth row).}\label{extra_exp}
\end{figure*}

\end{document}